\documentclass{article}

\usepackage[nonatbib, final]{neurips_2023}
\usepackage[utf8]{inputenc} 
\usepackage[T1]{fontenc}    
\usepackage[numbers]{natbib}
\usepackage[colorlinks, citecolor=blue]{hyperref}
\usepackage{tabularx}
\usepackage{url}            
\usepackage{booktabs}

\usepackage{nicefrac}       
\usepackage{microtype}      
\usepackage{multirow}
\usepackage{times}
\usepackage{amsmath}
\usepackage{amsthm}
\usepackage{mathtools}
\usepackage{amssymb}
\usepackage{bbm}
\usepackage{bm}
\usepackage{commath}
\usepackage[inline]{enumitem}
\usepackage{caption}
\usepackage{subcaption}
\usepackage[title,titletoc]{appendix}
\usepackage{color}

\usepackage{graphicx}
\DeclareGraphicsExtensions{.pdf,.png,.jpg,.eps}
\graphicspath{{img/}}
\usepackage{algorithm}
\usepackage[noend]{algpseudocode}	
\algrenewcommand\algorithmicrequire{\textbf{Input:}}
\algrenewcommand\algorithmicensure{\textbf{Output:}}
\usepackage{tikz}
\usepackage{tkz-graph}
\usetikzlibrary{patterns,positioning,quotes}

\usetikzlibrary{tikzmark,calc}





%

\newcommand{\NormII}[1]{\ensuremath{\lVert #1 \rVert}_2}              



\newcommand{\InNorm}[1]{{\left\vert\kern-0.2ex\left\vert\kern-0.2ex\left\vert #1 
    \right\vert\kern-0.2ex\right\vert\kern-0.2ex\right\vert}}                    

\newcommand{\InNormII}[1]{{\left\vert\kern-0.2ex\left\vert\kern-0.2ex\left\vert #1 
    \right\vert\kern-0.2ex\right\vert\kern-0.2ex\right\vert}_2}                    

\newcommand{\InNormInfty}[1]{{\left\vert\kern-0.2ex\left\vert\kern-0.2ex\left\vert #1 
    \right\vert\kern-0.2ex\right\vert\kern-0.2ex\right\vert}_{\infty}}           



\newcommand{\defeq}{\overset{\mathrm{def}}{=}}                                   






\newtheorem{definition}{Definition}
\newtheorem{proposition}{Proposition}
\newtheorem{assumption}{Assumption}

\newtheorem{lemma}{Lemma}

\newtheorem{theorem}{Theorem}
\newtheorem{remark}{Remark}
\newtheorem{example}{Example}




















\def\1{\bm{1}}







\def\vzero{{\bm{0}}}

\def\vu{{\bm{u}}}
\def\vv{{\bm{v}}}



\def\mX{{\bm{X}}}

\DeclareMathAlphabet{\mathsfit}{\encodingdefault}{\sfdefault}{m}{sl}
\SetMathAlphabet{\mathsfit}{bold}{\encodingdefault}{\sfdefault}{bx}{n}


\def\gD{{\mathcal{D}}}
\def\gE{{\mathcal{E}}}
\def\gF{{\mathcal{F}}}

\def\gI{{\mathcal{I}}}

\def\gM{{\mathcal{M}}}
\def\gN{{\mathcal{N}}}



\def\sP{{\mathbb{P}}}
\def\sQ{{\mathbb{Q}}}
\def\sR{{\mathbb{R}}}








\newcommand{\E}{\mathbb{E}}

\newcommand{\Var}{\mathrm{Var}}



\DeclareMathOperator*{\argmax}{arg\,max}

%

%

\newcommand{\apptitle}[1]{
\def\toptitlebar{
	\hrule height4pt
	\vskip .25in}

\def\bottomtitlebar{
	\vskip .25in
	\hrule height1pt
	\vskip .25in}

\thispagestyle{empty}
\hsize\textwidth
\linewidth\hsize \toptitlebar {\centering
{\large\bf SUPPLEMENTARY MATERIAL \\ #1 \par}}
\vspace{-0.1in} \bottomtitlebar
}

\newcommand{\Erdos}{Erdős} 
\newcommand{\Renyi}{Rényi}

\newcommand{\PA}{\mathrm{PA}}

\newcommand{\lexp}[1]{{\rm exp}\left({#1}\right)}

\newenvironment{brsm}
  {\left[\begin{smallmatrix}}
  {\end{smallmatrix}\right]}

\newcommand{\pa}{\mathrm{PA}}
\newcommand{\tpa}{\widetilde{\mathrm{PA}}}
\newcommand{\ch}{\mathrm{CH}}

\newcommand{\Pre}{\mathrm{Pre}}

\newcommand{\SCM}{\gM}
\newcommand{\tildeP}{\widetilde{\sP}}

\newcommand{\Den}{\mathrm{Den}}
\newcommand{\Num}{\mathrm{Num}}
\newcommand{\SCORE}{\mathsf{SCORE}}
\newcommand{\sort}{\mathrm{sort}}
\DeclareMathOperator{\sinc}{sinc}
\newcommand{\diag}{\mathrm{diag}}
\newcommand{\MB}{\mathrm{MB}}
\newcommand{\stats}{\texttt{stats}}
\newcommand{\hS}{\widehat{S}}
\newcommand{\hpi}{\hat{\pi}}
\newcommand{\hpa}{\widehat{\pa}}
\newcommand{\hE}{\widehat{E}}
\newcommand{\hL}{\widehat{L}}
\newcommand{\hVar}{\widehat{\Var}}
\newcommand{\hPre}{\widehat{\Pre}}
\newcommand{\rank}{\texttt{rank}}

\allowdisplaybreaks 

\author{
Tianyu Chen\thanks{Work done while at the Department of Statistics at the University of Chicago. Correspondence to \texttt{kbello@cs.cmu.edu}.}$^{\ \ \dag}$ \qquad
Kevin Bello$^{\ddag \mathsection}$ \qquad
Bryon Aragam$^{\ddag}$\qquad
Pradeep Ravikumar$^{\mathsection}$\\	
$^\dag$Department of Statistics and Data Science, University of Texas at Austin\\
$^\ddag$Booth School of Business, University of Chicago\\
$^\mathsection$Machine Learning Department, Carnegie Mellon University
}

\newcommand{\papertitle}{iSCAN: Identifying Causal Mechanism Shifts among Nonlinear Additive Noise Models}
\title{\papertitle}

\begin{document}
\maketitle

\begin{abstract}
Structural causal models (SCMs) are widely used in various disciplines to represent causal relationships among variables in complex systems.
Unfortunately, the underlying causal structure is often unknown, and estimating it from data remains a challenging task. 
In many situations, however, the end goal is to localize the changes (shifts) in the causal mechanisms between related datasets instead of learning the full causal structure of the individual datasets. 
Some applications include root cause analysis, analyzing gene regulatory network structure changes between healthy and cancerous individuals, or explaining distribution shifts. 
This paper focuses on identifying the causal mechanism shifts in two or more related datasets over the same set of variables---\textit{without estimating the entire DAG structure of each SCM}.
Prior work under this setting assumed linear models with Gaussian noises; instead, in this work we assume that each SCM belongs to the more general class of \textit{nonlinear} additive noise models (ANMs).
A key technical contribution of this work is to show that the Jacobian of the score function for the \textit{mixture distribution} allows for the identification of shifts under general non-parametric functional mechanisms.
Once the shifted variables are identified, we leverage recent work to estimate the structural differences, if any, for the shifted variables.
Experiments on synthetic and real-world data are provided to showcase the applicability of this approach.
Code implementing the proposed method is open-source and publicly available at  \href{https://github.com/kevinsbello/iSCAN}{https://github.com/kevinsbello/iSCAN}.
\end{abstract}

\section{Introduction}

Structural causal models (SCMs) are powerful models for representing causal relationships among variables in a complex system \citep{pearl2009causality,peters2017elements}.
Every SCM has an underlying graphical structure that is generally assumed to be a directed acyclic graph (DAG).
Identifying the DAG structure of an SCM is crucial since it enables reasoning about interventions \citep{pearl2009causality}.
Nonetheless, in most situations, scientists can only access \textit{observational} or \textit{interventional} data, or both, while the true underlying DAG structure remains \textit{unknown}.
As a result, in numerous disciplines such as computational biology \citep{sachs2005causal,hu2018application, friedman2000using}, epidemiology \citep{robins2000marginal}, medicine \citep{plis2010meg,plis2011effective}, and econometrics \citep{imbens2020potential,hoover2009empirical,demiralp2003searching}, it is critically important to develop methods that can estimate the entire underlying DAG structure based on available data.
This task is commonly referred to as causal discovery or structure learning, for which a variety of algorithms have been proposed over the last decades.

Throughout this work, we make the assumption of causal sufficiency (i.e., non-existence of unobserved confounders).
Under this condition alone, identifying the underlying DAG structure is not possible in general, and remains worst-case NP-complete \citep{chickering1996learning,chickering2004}. 
Indeed, prominent methods such as PC \citep{spirtes2000causation} and GES \citep{chickering2002} additionally require the arguably strong faithfulness assumption \citep{uhler2013geometry} to consistently estimate, in large samples, the Markov equivalent class of the underlying DAG.
However, these methods are not consistent in high-dimensions unless one additionally assumes sparsity or small maximum-degree of the true DAG \citep{kalisch2007estimating,nandy2018high,van2013ell_}.
Consequently, the existence of hub nodes, which is a well-known feature in several networks \citep{barabasi1999emergence,barabasi2011network,barabasi2004network}, significantly complicates the DAG learning problem.

In many situations, however, the end goal is to \textit{detect shifts (changes) in the causal mechanisms} between two (or more) related SCMs rather than recovering the \textit{entire} underlying DAG structure of each SCM.
For example, examining the mechanism changes in the gene regulatory network structure between healthy individuals and those with cancer may provide insights into the genetic factors contributing to the specific cancer; within biological pathways, genes could regulate various target gene groups depending on the cellular environment or the presence of particular disease conditions \citep{hudson2009differential,pimanda2007gata2}.
In these examples, while the individual networks could be \textit{dense}, the number of mechanism shifts could be \textit{sparse} \citep{scholkopf2021toward,tanay2005conservation, perry2022causal}.
Finally, in root cause analysis, the goal is to identify the sources that originated observed changes in a joint distribution; this is precisely the setting we study in this work, where we model the joint distributions via SCMs, as also done in \citep{paleyes2023dataflow,ikram2022root}.

In more detail, we focus on the problem of identifying mechanism shifts given datasets from two or more environments (SCMs) over the same observables.
We assume that each SCM belongs to the class of additive noise models (ANMs) \citep{hoyer2008nonlinear}, i.e., each variable is defined as a nonlinear function over a subset of the remaining variables plus a random noise (see Section \ref{sec:prelims} for formal definitions).
Importantly, we \textit{do not} make any structural assumptions (e.g., sparsity, small maximum-degree, or bounded tree-width) on the individual DAGs.
Even though ANMs are well-known to be identifiable \citep{hoyer2008nonlinear,peters2014causal}, we aim  to detect the \textit{local distribution changes} without estimating the full structures individually.
See Figure \ref{fig:example_of_mechanism_shift} for a toy example of what we aim to estimate.
A similar setting to this problem was studied in \citep{wang2018direct,ghoshal2019direct} albeit in the  restrictive linear setting.
Finally, it is worth noting that even with \textit{complete knowledge of the entire structure of each SCM}, assessing changes in \textit{non-parametric} functions across different groups or environments remains a very challenging problem \citep[see for instance,][]{li2021kernel}.

\paragraph{Contributions.}
Motivated by recent developments on causal structure learning of ANMs \citep{rolland2022score}, we propose a two-fold algorithm that 
(1) Identifies shifted variables (i.e., variables for which their causal mechanism has changed across the environments); and 
(2) If needed, for each shifted variable, estimates the structural changes among the SCMs.
More concretely, we make the following set of contributions:
\begin{itemize}
    \item To identify shifted variables (Definition \ref{def:shifted_node}), we prove that the variance of the diagonal elements of the Hessian matrix associated with the log-density of the \textit{mixture distribution} unveils information to detect distribution shifts in the leaves of the DAGs (see Theorem \ref{thm:node_detect}).
    Due to this result, our algorithm (Algorithm \ref{alg:shift_node}) iteratively chooses a particular leaf variable and determines whether or not such variable is shifted.
    Importantly, this detection step \textbf{does not} rely on any structural assumptions on the individual DAGs, and can consistently detect distribution shifts for \textit{non-parametric functionals} under very mild conditions such as second-order differentiability.
    
    \item To identify structurally shifted edges (Definition \ref{def:stru_shift_edges}), we propose a nonparametric local parents recovery method (Algorithm \ref{alg:stru_edges}) based on a recent measure of conditional dependence \citep{azadkia2021simple}. 
    In addition, based on recent results in \citep{azadkia2021fast}, we provide a theoretical justification for the asymptotic consistency of Algorithm \ref{alg:stru_edges} in Theorem \ref{thm:consistent_foci}.
    Importantly, since structural changes can only occur on \textit{shifted nodes}, this second step can be conducted much more efficiently when the sparse mechanism shift hypothesis \citep{scholkopf2021toward} holds, which posits that only a small subset of the causal model's mechanisms change.

    \item We empirically demonstrate that our method can outperform existing methods such as DCI, which is tailored for linear models, as well as related methods for estimating unknown intervention targets such as UT-IGSP \citep{squires2020permutation}. See Section \ref{sec:experiment_shift_nodes} and Appendix \ref{appendix:experiment} for more details. 
    Moreover, in Section \ref{subsec:real_world_exp}, we provide experiments on an ovarian cancer dataset, thus, showcasing the applicability of our method.
\end{itemize}

\begin{figure}[!ht]
\quad\qquad\qquad
\begin{subfigure}{0.32\textwidth}
\centering
\scalebox{1}{
\begin{tikzpicture}[>=latex, node distance=1cm, thick, scale=1]
\tikzstyle{node_style}=[circle, draw, fill=white, font=\scriptsize, inner sep=1pt]
\tikzstyle{edge_style}=[->, auto, sloped, font=\scriptsize]
\node[node_style] (1)  {$X_1$};
\node[node_style] (3) [below left=of 1] {$X_3$};
\node[node_style]  (4) [below right=of 1] {$X_4$};
\node[node_style] (2) [below=of 3] {$X_2$};
\node[node_style] (5) [below=of 4] {$X_5$};
\path[edge_style] (1) edge[""] (3);    
\path[edge_style] (1) edge[""] (4);
\path[edge_style] (3) edge[""] (5);
\path[edge_style] (4) edge[""] (5);
\path[edge_style] (3) edge[""] (2);
\end{tikzpicture}
}
\caption{Underlying SCM $\gM^*$.}
\label{fig:env_0}
\end{subfigure}
\qquad
\begin{subfigure}{0.5\textwidth}
\footnotesize
\begin{align*}
	X_1 &= f^*_1(N_1) \defeq N_1 ,\\
	X_2 &= f^*_2(X_3, N_2) \defeq \tanh(X_3) + N_2,\\
	X_3 &= f^*_3(X_1, N_3) \defeq \sinc(X_1) + N_3,\\
	X_4 &= f^*_4(X_1, N_4) \defeq X_1^3 - X_1 + N_4,\\
	X_5 &= f^*_5(X_3, X_4, N_5) \defeq X_4 \cdot \sin(X_3) + N_5.
\end{align*}
\caption{Structural equations of the \textit{unknown} SCM $\gM^*$.}
\label{fig:equations}
\end{subfigure}
\\
\quad\qquad\qquad\qquad
\begin{subfigure}{0.32\textwidth}
\vspace{1.2cm}
\centering
\begin{tikzpicture}[overlay,remember picture]
  \draw[->,auto, line width=1.5pt, color=gray] (1.8,1.2) -- ++(-1.7,-1.1) node[midway, above left=-5px]  {\includegraphics[width=.4cm, angle=85]{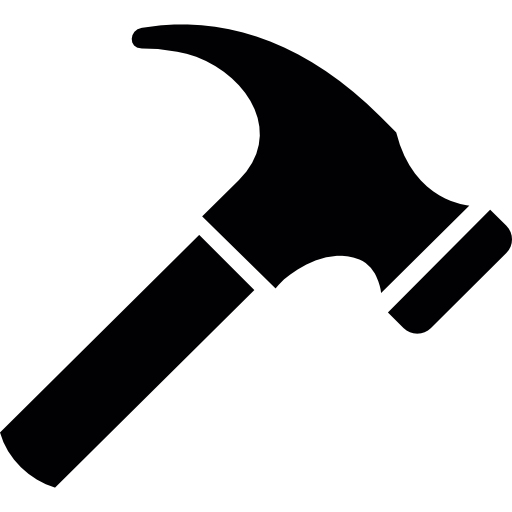}};
  \draw[->, auto, line width=1.5pt, color=gray] (1.8,1.2) -- ++(1.7,-1.1) node[midway, above right=-5px] {\includegraphics[width=.4cm, angle=8]{hammer.png}};
\end{tikzpicture}
\end{subfigure}
\\
\begin{subfigure}{0.32\textwidth}
\centering
\scalebox{1}{
\begin{tikzpicture}[>=latex, node distance=1cm, thick, scale=1]
\tikzstyle{node_style}=[circle, draw, fill=white, font=\scriptsize, inner sep=1pt]
\tikzstyle{edge_style}=[->,thick, auto, sloped, font=\scriptsize]
\node[node_style] (1)  {$X_1$};
\node[node_style] (3) [below left=of 1] {$X_3$};
\node[node_style]  (4) [below right=of 1] {$X_4$};
\node[node_style] (2) [below=of 3] {$X_2$};
\node[node_style, fill=green!25] (5) [below=of 4] {$X_5$};
\path[edge_style] (1) edge[""] (3);    
\path[edge_style] (1) edge[""] (4);
\path[edge_style] (4) edge["$\sin(x_4)$"] (5);
\path[edge_style] (3) edge[""] (2);
\end{tikzpicture}
}
\caption{Env. $\gE_1$, originated by an \textit{unknown} intervention on $X_5$.}
\label{fig:env_1}
\end{subfigure}
\hfill
\begin{subfigure}{.32\textwidth}
\centering
\scalebox{1}{
\begin{tikzpicture}[>=latex, node distance=1cm, thick, scale=1]
\tikzstyle{node_style}=[circle, draw, fill=white, font=\scriptsize, inner sep=1pt]
\tikzstyle{edge_style}=[->,thick, auto, sloped, font=\scriptsize] 
\node[node_style] (1)  {$X_1$};
\node[node_style, fill=green!25] (3) [below left=of 1] {$X_3$};
\node[node_style]  (4) [below right=of 1] {$X_4$};
\node[node_style] (2) [below=of 3] {$X_2$};
\node[node_style, fill=green!25] (5) [below=of 4] {$X_5$};
\path[edge_style] (1) edge["$\sigma(x_1)$"] (3);    
\path[edge_style] (1) edge[""] (4);
\path[edge_style] (3) edge["$\sin(x_3)$"] (5);
\path[edge_style] (3) edge[""] (2);
\end{tikzpicture}
}
\caption{Env. $\gE_2$, originated by \textit{unknown} interventions on $X_3$ and $X_5$.}
\label{fig:env_2}
\end{subfigure}
\hfill 
\begin{subfigure}{0.32\textwidth}
\centering
\scalebox{1}{
\begin{tikzpicture}[>=latex, node distance=1cm, thick, scale=1]
\tikzstyle{node_style}=[circle, draw, fill=white, font=\scriptsize, inner sep=1pt]
\tikzstyle{edge_style}=[->, thick, auto, sloped, font=\scriptsize] 
\node[node_style] (1)  {$X_1$};
\node[node_style, fill=red!20] (3) [below left=of 1] {$X_3$};
\node[node_style]  (4) [below right=of 1] {$X_4$};
\node[node_style] (2) [below=of 3] {$X_2$};
\node[node_style, fill=red!20] (5) [below=of 4] {$X_5$};
\path[edge_style, color=red] (3) edge (5);
\path[edge_style, color=red] (4) edge (5);
\end{tikzpicture}
}
\caption{Shifts in mechanisms between $\gE_1$ and $\gE_2$.}
\label{fig:env_shifts}
\end{subfigure}  
\caption{Illustration of two different environments (see Definition \ref{def:environment}) in \eqref{fig:env_1} and \eqref{fig:env_2}, both originated from the underlying SCM in \eqref{fig:env_0} with structural equations given in \eqref{fig:equations}. 
Between the two environments, we observe a change in the causal mechanisms of variables $X_3$ and $X_5$---the red nodes in \eqref{fig:env_shifts}. 
Specifically, for $X_5$, we observe that its \textit{functional dependence} changed from $X_4$ in $\gE_1$ to $X_3$ in $\gE_2$.
For $X_3$, its \textit{structural dependence} has not changed between $\gE_1$ and $\gE_2$, and only its functional changed from $\sinc(X_1)$ in $\gE_1$ to the sigmoid function $\sigma(X_1)$ in $\gE_2$.
Finally, in \eqref{fig:env_shifts}, the red edges represent the \textit{structural} changes in the mechanisms. The non-existence of an edge from $X_1$ to $X_3$ indicates that the structural relation between $X_1$ and $X_3$ is invariant.}
\label{fig:example_of_mechanism_shift}
\end{figure}

\section{Preliminaries and Background}
\label{sec:prelims}

In this section we introduce notation and formally define the problem setting.
We use $[d]$ to denote the set of integers $\{1,\ldots,d\}$.
Let $G=([d],E)$ be a DAG with node set $[d]$ and a set of directed edges $E \subset [d]\times [d]$, where any $(i,j)\in E$ indicates and edge from $i$ to $j$.
Also let $X = (X_1, \dots, X_d)$ denote a $d$-dimensional vector of random variables.
An SCM $\SCM = (X,f, \sP_N)$ over $d$ variables is generally defined as a collection of $d$ structural equations of the form:
\begin{align}
\label{eq:scm}
    X_j = f_j(\pa_j, N_j), \forall j \in [d],
\end{align}
where $\pa_j \subseteq \{X_1,\ldots,X_d\}\setminus \{X_j\}$ are the \textit{direct causes} (or parents) of $X_j$; $f = \{f_j\}_{j=1}^d$ is a set of \textit{functional mechanisms} $f_j: \sR^{|\pa_j|+1} \to \sR$; and $\sP_N$ is a joint distribution\footnote{We will always assume the existence of a density function w.r.t. the Lebesgue measure.} over the noise variables $N_j$, which we assume to be jointly independent\footnote{Note that this implies that there is no hidden confounding.}. 
Moreover, the underlying graph $G$ of an SCM is constructed by drawing directed edges for each $X_k \in \pa_j$ to $X_j$.
We henceforth assume this graph to be acyclic, i.e., a DAG.
Finally, every SCM $\SCM$ defines a unique distribution $\sP_X$ over the variables $X$ \citep[Proposition 6.3 in][]{peters2017elements}, which by the independence of the noise variables (a.k.a. the Markovian assumption), $\sP_X$ admits the following factorization:
\begin{align}
    \sP(X) = \prod_{j=1}^d \sP(X_j \mid \pa_j),
\end{align}
where $\sP(X_j\mid \pa_j)$ is referred as the \textit{causal mechanism} of $X_j$.

The model above is often too general due to problems of identifiability. 
In this work we will consider that the noises are additive.
\begin{definition}[Additive noise models (ANMs)]
\label{def:anms}
    An additive noise model is an SCM $\SCM = (X,f, \sP_N)$ as in \eqref{eq:scm}, where each structural assignment has the form:
    \[
        X_j = f_j(\pa_j) + N_j, \forall j \in [d].
    \]
\end{definition}
Depending on the assumptions on $f_j$ and $N_j$, the underlying DAG of an ANM can be identifiable from observational data. 
E.g., when $f_j$ is linear and $N_j$ is Gaussian, in general one can only identify the Markov equivalence class (MEC) of the DAG, assuming faithfulness~\citep{pearl2009causality}.
For linear models, an exception arises when assuming equal error variances \citep{peters2014identifiability,van2013ell_,loh2014}, or non-Gaussian errors~\citep{shimizu2006}.
In addition, when $f_j$ is nonlinear on each component and three times differentiable then the DAG is also identifiable~\citep{peters2014causal,hoyer2008nonlinear}.
Very recently, \citet{rolland2022score} proved DAG identifiability when $f_j$ is nonlinear on each component and $N_j$ is Gaussian, using information from the score's Jacobian.

\subsection{Data from multiple environments}

Throughout this work we assume that we observe a collection of datasets, $\gD = \{\mX^h\}_{h=1}^{H}$, from $H$ (possibly different) environments.
Each dataset $\mX^h = \{X^{h,i}\}_{i=1}^{m_h}$ from environment $h$ contains $m_h$  (possibly non-independent) samples from the joint distribution $\sP_X^h$, i.e., $\mX^h \in \sR^{m_h \times d}$.
We consider that each environment originates from soft interventions\footnote{These types of interventions are more realistic in practice than ``hard'' or perfect interventions. However, note that we allow a soft intervention on a variable to remove some or all of its causes, where the latter is also known as an stochastic hard intervention.} \citep{pearl2009causality} of an \textit{unknown} underlying SCM $\SCM^*$ with DAG structure $G^*$ and joint distribution $\sP^*(X) = \prod_{j=1}^d \sP^*(X_j \mid \pa_j^*)$.
Here $\pa_j^*$ denotes the parents (direct causes) of $X_j$ in $G^*$.
Then, an environment arises from manipulations or shifts in the causal mechanisms of a \textit{subset} of variables, transforming from $\sP^*(X_j\mid \pa_j^*)$ to $\tildeP(X_j\mid \tpa_j)$.
Throughout, we will make the common modularity assumption of causal mechanisms \citep{pearl2009causality,koller2009probabilistic}, which postulates that an intervention on a node $X_j$ only changes the mechanism $\mathbb{P}(X_j\mid \pa_j)$, while all other mechanisms $\mathbb{P}(X_i\mid \pa_i)$, for $i\neq j$, remain unchanged.

\begin{definition}[Environment]
\label{def:environment}
    An environment $\gE_h = (X,f^h,\sP_N^h)$, with joint distribution $\sP_X^h$ and density $p^h_x$, independently results from an SCM $\SCM^*$ by intervening on an \emph{unknown} subset $S^h\subseteq [d]$ of causal mechanisms, that is, we can factorize the joint distribution  $\sP^{h}(X)$ as follows:
    \begin{align}
        \sP^{h}(X) = \prod_{j\in [d]} \sP^{h}(X_j \mid \pa_j^h) 
        = \prod_{j\in S^h} \tildeP^{h}(X_j \mid \tpa_j^h) \prod_{j\notin S^h} \sP^*(X_j\mid \pa_j^*),
    \end{align}
where $\tpa_j^h$ is a \textit{(possibly empty)} subset of the underlying causal parents $\pa_j^*$, i.e., $\tpa_j^h \subseteq \pa_j^*$; and, $\sP^*(X_j\mid \pa_j^*)$ are the invariant mechanisms.
\end{definition}
\begin{remark}
In the literature \citep[e.g.,][]{perry2022causal}, it is common to find the assumption that in a soft intervention the direct causes remain invariant, i.e., $\tpa_j^h = \pa_j^*$ for all $j\in S^h, h\in[H]$.
In this work we consider a more general setting where none, some, or all of the direct causes of an intervened node are removed, i.e.,  $\tpa_j^h \subseteq \pa_j^*$ for all $j\in S^h, h\in[H]$.
\end{remark}

We next define shifted nodes (variables).
\begin{definition}[Shifted node]
\label{def:shifted_node}
Given $H$ environments $\{\gE_h = (X,f^h,\sP^h_N)\}_{h=1}^H$ originated from an ANM $\SCM^*$, a node $j$ is called a shifted node if there exists $h, h' \in [H]$ such that:
\[
    \mathbb{P}^{h}(X_j\mid\pa_j^h) \neq \mathbb{P}^{h'}(X_j\mid\pa_j^{h'}).
\]
\end{definition}
To conclude this section, we formally define the problem setting.

\textbf{Problem setting.}
Given $H$ datasets $\{\mX^h\}_{h=1}^H$, where $\mX^h \sim \sP^h_X$ consists of $m_h$ (possibly non-independent) samples from the environment distribution $\sP^h_X$ originated from an underlying ANM $\gM^*$, estimate the set of shifted nodes and structural differences.

We note that \citep{wang2018direct,ghoshal2019direct} have study the problem setting above for $H=2$, assuming \emph{linear functions} $f^h_j$, and Gaussian noises $N_j^h$. 
In this work, we consider a more challenging setting where $f^h_j$ are nonparametric functions (see Section \ref{sec:shited_nodes} for more details).

\subsection{Related Work}
\label{sec:related_work}

First we mention works most closely related to ours.
The problem of learning the difference between \textit{undirected} graphs has received much more attention than the directed case. 
E.g., \citep{zhao2014direct, liu2017support, yuan_differential_2017, fazayeli_generalized_2016} develop algorithms for estimating the difference between Markov random fields and Ising models. See \citep{zhao2020fudge} for recent developments in this direction.
In the directed setting, \citep{wang2018direct,ghoshal2019direct} propose methods for directly estimating the difference of linear ANMs with Gaussian noise. 
More recently, \citep{saeed2020causal} studied the setting where a dataset is generated from a mixture of SCMs, and their method is capable of detecting conditional distributions changes; however, due to the unknown membership of each sample, it is difficult to test for structural and functional changes. Moreover, in contrast to ours, all the aforementioned work on the directed setting rely on some form of faithfulness assumption.

\textbf{{Causal discovery from a single environment.}}
One way to identify mechanism shifts (albeit inefficient) would be to estimate the individual DAGs for each environment and then test for structural differences across the different environments. 
A few classical and recent methods for learning DAGs from a single dataset include:
Constraint-based algorithms such as PC and FCI \citep{spirtes2000causation}; 
in score-based methods, we have greedy approaches such as GES \citep{chickering2004}, likelihood-based methods~\citep{peters2014identifiability,loh2014,peters2014causal,aragam2015concave,aragam2019globally,hoyer2008nonlinear}, and continuous-constrained learning \citep{zheng2018dags,ng2020role,lachapelle2019gradient, bello2022dagma}.
Order-based methods~\citep{teyssier2012ordering,larranaga1996learning,ghoshal18,rolland2022score,montagna2023causal}, methods that test for asymmetries \citep{shimizu2006,buhlmann2014cam}, and hybrid methods \citep{nandy2018high,tsamardinos2006}.
Finally, note that even if we \textit{perfectly estimate each individual DAG} (assuming identifiable models such as ANMs), applying these methods would only identify \textit{structural} changes.
That is, for variables that have the same parents across all the environments, we would require an additional step to identify \textit{distributional} changes.

\textbf{{Testing functional changes in multiple datasets.}}
Given the parents of a variable $X_j$, one could leverage prior work \citep{li2021kernel,hall1990bootstrap,boente2022robust,hardle1990semiparametric} on detecting heterogeneous functional relationships.
However, we highlight some important limitations.
Several methods such as \citep{hall1990bootstrap,boente2022robust,hardle1990semiparametric} only work for one dimensional functionals and assume that the datasets share the exact same design matrix. 
Although \citep{li2021kernel} relaxes this assumption and extends the method to multivariate cases, the authors assume that the covariates (i.e., $\pa_j^h$) are sampled from the \textit{same distribution} across the environments, which is a strong assumption in our context since ancestors of $X_j$ could have experienced mechanism shifts. 
Finally, methods such as \citep{park2021conditional} and \citep{budhathoki2021did}, although nonparametric, need knowledge about the parent set $\pa_j$ for each variable, and they assume that $\pa_j$ is same across different environments.

\textbf{{Causal discovery from heterogeneous data.}}
Another well-studied problem is to learn the underlying DAG of the SCM $\gM^*$ that originated the different environments.
Under this setting, 
\citep{yang2018characterizing} provided a characterization of the $\gI$-MEC, a subset of the Markov equivalence class. 
\citep{perry2022causal} provided DAG-identifiability results by leveraging sparse mechanism shifts and relies on identifying such shifts, which this work aims to solve.
\citep{brouillard2020differentiable} developed an estimator considering unknown intervention targets. 
\citep{varici2021scalable} primarily focuses on linear SEM and does not adapt well to nonlinear scenarios.  
Also assuming linear models, \citep{ghassami2017learning,ghassami2018multi} applied ideas from linear invariant causal prediction \citep[ICP,][]{peters2016causal} and ICM to identify the causal DAG.
\citep{squires2020permutation} proposes a nonparametric method that can identify the intervention targets; however, this method relies on nonparametric CI tests, which can be time-consuming and sample inefficient. 
\citep{mooij2020joint} introduced the joint causal inference (JCI) framework, which can also  estimate intervention nodes. However, this method relies on an assumption that the intervention variables are fully connected, a condition that is unlikely to hold in practice.
\citep{huang2020causal} introduced a two-stage approach that removes functional restrictions. First, they used the PC algorithm using all available data to identify the MEC. Then, the second step aims to orient the remaining edges based on a novel measure of mechanism dependence.
Finally, we note that a common assumption in the aforementioned methods is the knowledge of which dataset corresponds to the observational distribution; without such information, their assumptions on the type of interventions would not hold true. 
In contrast, our method does not require knowledge of the observational distribution.

\section{Identifying Causal Mechanism Shifts via Score Matching}
\label{sec:shited_nodes}

In this section, we propose iSCAN (\textit{identifying Shifts in Causal Additive Noise models}), a method for detecting shifted nodes (Definition \ref{def:shifted_node}) based only on information from the Jacobian of the score of the data distribution\footnote{In this work, the score of a pdf $p(x)$ means $\nabla \log p(x)$}.

Let $\mX$ be the row concatenation of all the datasets $\mX^h$, i.e., $\mX = [(\mX^1)^\top \mid \cdots \mid (\mX^H)^\top]^\top \in \sR^{m\times d}$, where $m = \sum_{h=1}^H m_h$. 
The pooled data $\mX$ can be interpreted as a mixture of data from the $H$ different environments. 
To account for this mixture, we introduce the probability mass $w_h$, which represents the probability that an observation belongs to environment $h$, i.e., $\sum_{h=1}^H w_h = 1$. 
Let $\sQ(X)$ denote the distribution of the mixture data with density function $q(x)$, i.e., $q(x) = \sum_{h=1}^H w_h p^h(x)$.

In the sequel, we use $s^h(x) \equiv \nabla \log p^h(x)$ to denote the score function of the joint distribution of environment $h$ with density $p^h(x)$.
Also, we let $s(x) \equiv \nabla \log q(x)$ to denote the score function of the mixture distribution with density $q(x)$.
We will make the following assumptions on $f_j^h$ and $N_j^h$.
\begin{assumption}
\label{assup:non_linear}
    For all $h\in [H], j\in [d]$, the functional mechanisms $f^h_j(\pa_j^h)$ are assumed to be non-linear in every component. 
\end{assumption}
\begin{assumption}
\label{assup:noise}
    For all $j\in [d], h\in [H]$, the pdf of the real-valued noise $N_j^h$ denoted by $p_{N_j}^h$ satisfies $\frac{\partial^2}{(\partial n_j^h)^2} \log p^h_{N_j}(n^h_j) = c_j^h$ where $c_j^h$ is a non-zero constant.
    Moreover, $\E[N_j^h]=0$.
\end{assumption}

For an ANM, \citet{rolland2022score} showed that under Assumption \ref{assup:non_linear} and assuming  zero-mean Gaussian noises (which satisfies Assumption \ref{assup:noise}), the diagonal of the Jacobian of the score function reveals the leaves of the underlying DAG.
We next instantiate their result in our context.
\begin{proposition}[Lemma 1 in \citep{rolland2022score,sanchez2022diffusion}]
\label{prop:leafs_of_anm}
    For an environment $\gE_h$ with underlying DAG $G^h$ and pdf $p^h(x)$, let $s^h(x) = \nabla \log p^h(x)$ be the associated score function. 
    Then, under Assumptions \ref{assup:non_linear} and \ref{assup:noise}, for all $j \in [d]$, we have:
    \begin{center}
    \vspace{-0.2in}
        Node $j$ is a leaf in $G^h  \iff$  $\Var_X\left[ \frac{\partial s_j^h(X)}{\partial x_j} \right] = 0.$
    \end{center}
\end{proposition}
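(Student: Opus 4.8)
The plan is to obtain an explicit formula for the diagonal Hessian entry $\partial s_j^h/\partial x_j$ from the ANM structure and then compute its variance. First I would use that, for the SCM underlying $\gE_h$, the change of variables from the jointly independent noises to the observables is triangular (in any topological order of $G^h$) with unit Jacobian, so the log-density factorizes as $\log p^h(x) = \sum_{i\in[d]} g_i\big(x_i - f_i^h(\pa_i^h)\big)$ with $g_i \defeq \log p^h_{N_i}$. Differentiating in $x_j$ and noting that $f_i^h$ depends on $x_j$ only when $i\in\ch(j)$ (and that $j\notin\pa_j^h$) gives
\[
s_j^h(x) = g_j'(N_j^h) - \sum_{i\in\ch(j)} g_i'(N_i^h)\,\frac{\partial f_i^h}{\partial x_j}.
\]
Differentiating once more in $x_j$ and replacing $g_i''$ by the constant $c_i^h$ (Assumption \ref{assup:noise}) yields the identity I would build everything on,
\[
\frac{\partial s_j^h(x)}{\partial x_j} = c_j^h + \sum_{i\in\ch(j)}\left[\, c_i^h\Big(\frac{\partial f_i^h}{\partial x_j}\Big)^{2} - g_i'(N_i^h)\,\frac{\partial^2 f_i^h}{\partial x_j^2}\,\right].
\]

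The forward direction is then immediate: if $j$ is a leaf, then $\ch(j)=\emptyset$, the sum disappears, and $\partial s_j^h/\partial x_j \equiv c_j^h$ is constant, so its variance is $0$.

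For the converse I would prove the contrapositive, showing $\Var_X[\partial s_j^h/\partial x_j] > 0$ whenever $j$ has at least one child. The difficulty is that the terms indexed by different children might conspire to cancel, so I cannot simply argue that one summand is non-constant. To rule this out I would isolate a single noise source: let $i^*$ be the last child of $j$ in a topological order of $G^h$. Then every parent of a child $i\in\ch(j)$ precedes $i^*$ in the order, so it is neither $X_{i^*}$ nor a descendant of $X_{i^*}$; consequently each such parent, as well as each $N_i^h$ with $i\neq i^*$, is a measurable function of the noises $\{N_m^h : m\neq i^*\}$. By joint independence of the noises, $N_{i^*}^h$ is therefore independent of the $\sigma$-algebra $\mathcal{G}$ generated by $\{\pa_i^h\}_{i\in\ch(j)}$ and $\{N_i^h\}_{i\neq i^*}$. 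Conditioning on $\mathcal{G}$, every term in the identity above is $\mathcal{G}$-measurable except $g_{i^*}'(N_{i^*}^h)\,\partial^2 f_{i^*}^h/\partial x_j^2$, whose only random factor is $g_{i^*}'(N_{i^*}^h)$, giving
\[
\Var_X\!\left[\frac{\partial s_j^h(X)}{\partial x_j}\,\Big|\,\mathcal{G}\right] = \Big(\frac{\partial^2 f_{i^*}^h}{\partial x_j^2}\Big)^{2}\,\Var\!\big[g_{i^*}'(N_{i^*}^h)\big],
\]
and hence, by the law of total variance, $\Var_X[\partial s_j^h/\partial x_j] \ge \E\big[(\partial^2 f_{i^*}^h/\partial x_j^2)^2\big]\,\Var[g_{i^*}'(N_{i^*}^h)]$.

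Finally I would check both factors are strictly positive. Since $g_{i^*}'' = c_{i^*}^h$ is a nonzero constant, $g_{i^*}'$ is a non-constant affine map, so $g_{i^*}'(N_{i^*}^h)$ is non-degenerate and has positive variance; and by Assumption \ref{assup:non_linear}, $f_{i^*}^h$ is non-linear in its $x_j$-argument, so $\partial^2 f_{i^*}^h/\partial x_j^2$ is continuous and not identically zero, hence nonzero on an open set on which $\pa_{i^*}^h$ places positive mass (invoking the mild regularity that the noise densities are positive, so all variables have full support). Thus the variance is strictly positive, establishing the converse. The step I expect to be the crux is exactly this no-cancellation argument: choosing $i^*$ by topological order so that its noise is independent of everything else in the sum is what converts a possibly-cancelling sum into a single nonnegative conditional-variance term, while the differentiation identity and the positivity checks are routine by comparison.
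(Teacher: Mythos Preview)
Your proof is correct. The paper does not actually prove Proposition~\ref{prop:leafs_of_anm}; it is quoted from \citep{rolland2022score,sanchez2022diffusion}. The closest analogue inside the paper is the argument for Condition~(ii) in the proof of Theorem~\ref{thm:node_detect}, which establishes the non-leaf direction in the (harder) mixture setting.

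Both your argument and the paper's hinge on the same structural observation: select the child $i^*$ of $j$ that is last among $\ch(j)$ in a topological order (equivalently, the paper's $X_u \notin \bigcup_{i\in\ch_j}\pa_i$), so that its noise $N_{i^*}$ does not enter any other summand of the Hessian identity. The difference is in how this isolation is exploited. The paper works \emph{pointwise}: it supposes the $u$-term $\frac{\partial^2}{\partial x_j^2}\log p_{N_u}(x_u-f_u(\pa_u))$ is constant in $x$, integrates in $x_j$, re-differentiates in $x_u$, and derives that $\partial f_u/\partial x_j$ is independent of $x_j$, contradicting Assumption~\ref{assup:non_linear}. Your route is \emph{probabilistic}: conditioning on $\sigma(\{N_m\}_{m\neq i^*})$ and applying the law of total variance turns the possibly-cancelling sum into the single non-negative quantity $\E\!\big[(\partial^2 f_{i^*}/\partial x_j^2)^2\big]\cdot \Var\!\big[g_{i^*}'(N_{i^*})\big]$, whose two factors you then check are strictly positive using Assumptions~\ref{assup:non_linear} and~\ref{assup:noise}. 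This buys you an explicit lower bound on $\Var_X[\partial s_j^h/\partial x_j]$ and dispenses with the contradiction step, whereas the paper's pointwise argument more transparently pins down \emph{where} (as a function of $x$) the non-constancy comes from. Either way, the crux---ruling out cancellation among the children---is handled by the same topological choice of $i^*$.
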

Motivated by the ideas of leaf-identifiability from the score's Jacobian in a \textit{single} ANM, we next show  that the \textit{score's Jacobian of the mixture distribution} can help reveal mechanism shifts among the different environments.
\begin{theorem}
\label{thm:node_detect}
For all $h\in [H]$, let $G^h$ and $p^h(x)$ denote the underlying DAG structure and pdf of environment $\gE_h$, respectively, and let $q(x)$ be the pdf of the mixture distribution of the $H$ environments such that $q(x) = \sum_{h=1}^H w_h p^h(x)$.
Also, let $s(x) = \nabla \log q(x)$ be the associated score function. 
Then, under Assumptions \ref{assup:non_linear}, and \ref{assup:noise}, we have:
\begin{enumerate}[label=(\roman*)]
    \vspace{-0.1in}
    \item If $j$ is a leaf in all DAGs $G^h$, then j is a shifted node  {if and only if}  $\Var_X\left[ \frac{\partial s_j(X)}{\partial x_j} \right] > 0$.
    \vspace{-0.1in}
    \item If $j$ is not a leaf in at least one DAG $G^h$, then $\Var_X\left[ \frac{\partial s_j(X)}{\partial x_j} \right] > 0$.
\end{enumerate}
\end{theorem}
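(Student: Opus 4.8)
The plan is to reduce the statement to a question about when the \emph{conditional} density of $X_j$ under the mixture is a single Gaussian, and then to invoke identifiability of finite Gaussian mixtures. First I would rewrite the mixture score as a responsibility-weighted average of the environment scores: since $q(x)=\sum_h w_h p^h(x)$ and $\nabla p^h = p^h s^h$, one gets $s(x)=\sum_h \gamma_h(x)\,s^h(x)$ with $\gamma_h(x)=w_h p^h(x)/q(x)$. Differentiating in $x_j$ and using $\partial_{x_j}\gamma_h=\gamma_h(s_j^h-s_j)$ gives
\begin{equation*}
\frac{\partial s_j(x)}{\partial x_j}=\underbrace{\sum_h \gamma_h(x)\,(s_j^h(x))^2-(s_j(x))^2}_{\ge 0}+\sum_h \gamma_h(x)\,\frac{\partial s_j^h(x)}{\partial x_j},
\end{equation*}
the first term being the $\gamma$-weighted variance of the component scores. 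Equivalently, and more usefully, $\partial_{x_j} s_j(x)=\partial^2_{x_j}\log q(x_j\mid x_{-j})$, and the mixture conditional factorizes as $q(x_j\mid x_{-j})=\sum_h \tilde\gamma_h(x_{-j})\,p^h(x_j\mid x_{-j})$ with weights $\tilde\gamma_h(x_{-j})=w_h p^h(x_{-j})/\sum_{h'}w_{h'}p^{h'}(x_{-j})$ that depend only on $x_{-j}$.

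Next I would record two consequences of Assumption \ref{assup:noise}: each $\log p_{N_j}^h$ is quadratic with vanishing linear term, so $N_j^h\sim \mathcal{N}(0,\sigma_{j,h}^2)$ with $\sigma_{j,h}^2=-1/c_j^h>0$, and for a single environment the conditional $p^h(x_j\mid x_{-j})$ is Gaussian in $x_j$ with the fixed variance $\sigma_{j,h}^2$ \emph{iff} $j$ is a leaf of $G^h$. Indeed, when $j$ has a child $k$, the factor $e^{-(x_k-f_k^h(x_j,\cdots))^2/(2\sigma_{k,h}^2)}$ makes $\log p^h(x_j\mid x_{-j})$ non-quadratic in $x_j$ precisely because $f_k^h$ is nonlinear (Assumption \ref{assup:non_linear}); this is the computation underlying Proposition \ref{prop:leafs_of_anm}. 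Since $\partial_{x_j}s_j(x)=\partial^2_{x_j}\log q(x_j\mid x_{-j})$, the quantity $\Var_X[\partial_{x_j}s_j(X)]$ vanishes iff $q(x_j\mid x_{-j})$ is, for $Q$-a.e.\ $x_{-j}$, a single Gaussian in $x_j$ with \emph{one fixed} variance. The whole theorem thus becomes a statement about when a mixture of these conditionals collapses to such a Gaussian.

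For part (i), with $j$ a leaf in every $G^h$, all conditionals $p^h(x_j\mid x_{-j})=\mathcal{N}(f_j^h(x_{\pa_j^h}),\sigma_{j,h}^2)$ are Gaussian. If $j$ is not shifted, the mechanism of $X_j$ is invariant, so these Gaussians coincide, the mixture equals that single Gaussian, $\partial^2_{x_j}\log q=c_j$, and $\Var_X=0$; this is the ``only if'' direction. If $j$ is shifted, two environments $h\ne h'$ have distinct conditionals—differing in mean $f_j^h\ne f_j^{h'}$ on a positive-measure set of $x_{-j}$, or in variance $\sigma_{j,h}\ne\sigma_{j,h'}$ everywhere—so on a positive-$Q$-measure set of $x_{-j}$ the conditional $q(x_j\mid x_{-j})$ is a nontrivial mixture of at least two distinct Gaussians. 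By identifiability of finite Gaussian mixtures such a mixture is never itself Gaussian, hence $\partial^2_{x_j}\log q(x_j\mid x_{-j})$ is non-constant in $x_j$ there, forcing $\Var_X[\partial_{x_j}s_j(X)]>0$.

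For part (ii) there is some $h_0$ in which $j$ has a child $k_0$, and by the computation above $p^{h_0}(x_j\mid x_{-j})$ is non-Gaussian in $x_j$ on a positive-measure set of $x_{-j}$. What remains—and this is the main obstacle—is to show that mixing this non-Gaussian conditional with the others cannot produce, for a.e.\ $x_{-j}$, a single fixed-variance Gaussian: the responsibilities $\tilde\gamma_h(x_{-j})$ are themselves intricate, and a priori a non-Gaussian piece could be offset by the remaining components. I would resolve this by exploiting that the weights depend only on $x_{-j}$ while the non-quadratic dependence on $x_j$ enters solely through the child factors $x_{k_0}-f_{k_0}^{h}(x_j,\cdots)$. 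Concretely, I would examine the mixed derivative $\partial_{x_{k_0}}\partial^2_{x_j}\log q$ and show it is not identically zero using $\partial^2_{x_j}f_{k_0}^{h_0}\not\equiv 0$ together with the independence and positive variance of $N_{k_0}$, so that $\partial^2_{x_j}\log q$ cannot be constant in $x$. An alternative is a mixture-identifiability argument: a fixed-variance Gaussian cannot equal a nontrivial mixture of log-quadratic-plus-nonlinear-child exponents unless every child term is quadratic in $x_j$, i.e.\ every $f_k^h$ is affine in $x_j$, contradicting Assumption \ref{assup:non_linear}. Either route yields $\Var_X[\partial_{x_j}s_j(X)]>0$, completing (ii).
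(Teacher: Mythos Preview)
Your approach for part~(i) is correct but takes a genuinely different route from the paper. You pass to the conditional $q(x_j\mid x_{-j})$, observe that Assumption~\ref{assup:noise} forces each noise to be Gaussian, and then invoke identifiability of finite Gaussian mixtures: a nontrivial mixture of distinct Gaussians is never a single Gaussian, hence $\partial^2_{x_j}\log q(x_j\mid x_{-j})$ cannot be constant on a positive-measure set of $x_{-j}$. The paper instead expands $\partial_{x_j}s_j(x)$ directly into three terms and applies Cauchy--Schwarz: writing $a_h=w_h p^h(x)$ and $b_h=\partial_{x_j}\log p^h_{N_j}(x_j-f_j^h(\pa_j^h))$, the last two terms combine into $(\sum_h a_h)^{-2}\bigl[(\sum_h a_h b_h^2)(\sum_h a_h)-(\sum_h a_h b_h)^2\bigr]\ge 0$, with equality iff all $b_h$ agree; two short lemmas then translate $b_h\equiv b_{h'}$ into equality of the conditionals and of the constants $c_j^h$, so the first term collapses to the single constant $c_j$. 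Your route is more conceptual and makes the Gaussian structure explicit; the paper's is more elementary---it never names the noise as Gaussian nor invokes a mixture-identifiability theorem, only Cauchy--Schwarz and the constancy of $\partial^2\log p^h_{N_j}$.

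For part~(ii) your proposal is honest about the gap but does not close it, and your second route via ``mixture identifiability of log-quadratic-plus-nonlinear exponents'' is not a standard result and would itself require proof. The paper's argument is the concrete version of your mixed-derivative idea, with one additional ingredient you are missing: in an environment $h$ where $j$ is not a leaf, one picks a child $X_u$ that is \emph{not} a parent of any other child of $j$ (take the child latest in the topological order of $G^h$), so that among the child terms only $\partial^2_{x_j}\log p^h_{N_u}(x_u-f_u^h(\pa_u^h))$ involves $x_u$. Supposing this term equals a constant $a$, one integrates once to get $-\partial_{x_j}f_u^h\cdot\partial_{n}\log p^h_{N_u}=a x_j+g(x_{-j})$, then differentiates in $x_u$ to obtain $c_u^h\,\partial_{x_j}f_u^h=\partial_{x_u}g(x_{-j})$; the right side is independent of $x_j$, forcing $f_u^h$ affine in $x_j$, contradicting Assumption~\ref{assup:non_linear}. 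This careful choice of $X_u$ is what makes your $\partial_{x_{k_0}}\partial^2_{x_j}$ strategy go through cleanly---without it, $x_{k_0}$ may appear in several child terms and in the weights, and the cancellation concern you raise becomes real.
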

Theorem \ref{thm:node_detect} along with Proposition \ref{prop:leafs_of_anm} suggests a way to identify shifted nodes.
Namely, to use Proposition \ref{prop:leafs_of_anm} to identify a common leaf, and then use Theorem \ref{thm:node_detect} to test if such a leaf is a shifted node or not.
We then proceed to remove the leaf and repeat the process.
See Algorithm \ref{alg:shift_node}.
Note that due to the fact that each environment is a result of an intervention (Definition \ref{def:environment}) on an underlying ANM $\SCM^*$, it follows that the leaves in $G^*$ will remain leaves in each DAG $G^h$.
\begin{algorithm}[H]
\caption{\textbf{iSCAN}---Identifying Shifts in Causal Additive Noise models.}
\label{alg:shift_node}
\begin{algorithmic}[1]

\Require{Datasets $\bm{X}^1,\dots,\bm{X}^H$.}
\Ensure{Shifted variables set $\hS$, and topological sort $\hat\pi$.}
\State Initialize $\hS = \{\}$, $\hat\pi = (\ )$, $\gN = \{1,\dots,d\}$
\State Set $\mX = [(\mX^1)^\top \mid \cdots \mid (\mX^H)^\top]^\top \in \sR^{m\times d}$.
\While{$\gN \neq \emptyset$}
\State $\forall h\in[H], \Var^h \gets \Var_{\mX^h}\left[\diag(\nabla^2 \log p^h(x))\right]$.
\State $\Var \gets \Var_{\mX}\left[\diag(\nabla^2 \log q(x))\right]$
\State $L \gets \bigcap_{h\in[H]} \left\{ j \mid \Var_j^h = 0, j \in [d]\right\}$. \Comment{Identify leaves.}
\State $\hS \gets \hS\; \bigcup\; \left\{j \mid \Var_j \neq 0, j\in L\right\}$ \Comment{Identify shifted nodes.}
\State $\gN \gets \gN - \{L\}$
\State $\forall l \in L$, remove the $l$-th column of $\bm X^h$, $\forall h\in[H]$, and $\bm X$.
\State $\hat\pi \gets (L, \hat\pi)$.
\EndWhile
\end{algorithmic}
\end{algorithm}

\begin{remark}
    See Appendix \ref{app:practical_algorithm} for a practical implementation of Alg. \ref{alg:shift_node}.
    Finally, note that Alg. \ref{alg:shift_node} also estimates a valid topological sort for the different environments by leveraging Proposition \ref{prop:leafs_of_anm}.
\end{remark}
\subsection{Score's Jacobian estimation}
\label{sec:score_jacobian_estimation}
Since the procedure to estimate $\Var_{q}[\frac{\partial s_j(x)}{\partial x_j}]$ is similar for estimating $\Var_{p^h}[\frac{\partial s^h_j(x)}{\partial x_j}]$ for each $h \in [H]$, in this section we discuss the estimation for $\Var_q[\frac{\partial s_j(x)}{\partial x_j}]$, which involves computing the diagonal of the Hessian of $\log q(x)$. 
To estimate this quantity, we adopt a similar approach to the method in \citep{li2017gradient,rolland2022score}.
First, we estimate the first-order derivative of $\log q(x)$ by Stein's identity \citep{stein1972bound}:
\begin{align}
\label{eq:score}
\mathbb{E}_q\left[\bm h(x)\nabla\log q(x)^\top+\nabla\bm h(x)\right]=0,
\end{align}
where $\bm h:\mathbb{R}^d\to \mathbb{R}^{d'}$ is any test function such that $\lim_{x\to \infty}\bm h(x)q(x)=0$.
Once we have estimated $\nabla \log q(x)$, we can proceed to estimate the Hessian's diagonal by using second-order Stein's identity:
\begin{align}
\label{eq:jacob}
    \mathbb{E}_q[\bm h(x)\text{diag}(\nabla^2\log q(x))^\top] = \mathbb{E}_q[\nabla^2_{\text{diag}}\bm h( x)-\bm h(x)\text{diag}(\nabla\log q(x)\nabla\log q(x)^\top)]
\end{align}
Using eq.\eqref{eq:score} and eq.\eqref{eq:jacob}, we can estimate the Hessian's diagonal at each data point. 
Thus allowing us to obtain an estimate of $\Var_{q}\left[\frac{\partial s_j(x)}{\partial x_j}\right]$.
 See Appendix \ref{app:score_details} for additional details.

\begin{remark}[Consistency of Algorithm \ref{alg:shift_node}]
The estimators in eq.\eqref{eq:gradient_estimator} and eq.\eqref{eq:diag_hessian_estimator}, given in Appendix \ref{app:score_details}, correspond to Monte Carlo estimators using eq.\eqref{eq:score} and \eqref{eq:jacob}, respectively,
then the error of the estimators tend to zero as the number of samples goes to infinity.
See for instance the discussion in Section 3.1 in \citep{li2017gradient}. 
We empirically explore the consistency of Algorithm \ref{alg:shift_node} in  Figure \ref{fig:sample_complexity}.
\end{remark}

\begin{remark}[Computational Complexity]
Since we adopt the kernel-based estimator, $\SCORE$, from \citep{rolland2022score}. The computational complexity for the estimation of the score's Jacobian in a single environment is $\mathcal{O}(dm_h^3)$. 
In Algorithm \ref{alg:shift_node}, computation is dominated by the $\SCORE$ function applied to the pooled data $\bm X\in \sR^{m\times d}$. Therefore, the overall complexity of Algorithm \ref{alg:shift_node} is $\mathcal{O}(dm^3)$.
See Figure \ref{fig:sample_complexity}.
\end{remark}

\begin{figure}[!ht]
    \centering
    \includegraphics[width=.7\textwidth]{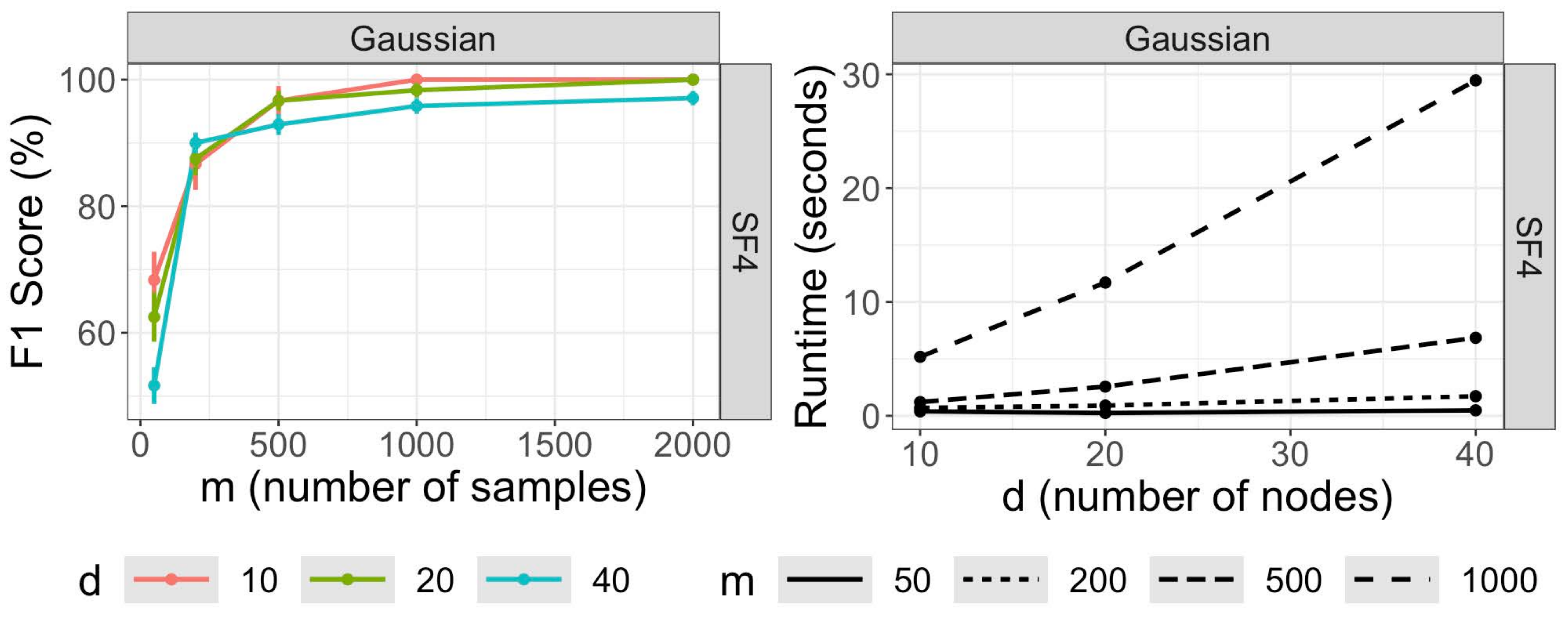}
    \caption{(Left) F1 score of the output of Alg. \ref{alg:shift_node} w.r.t. to the true set of shifted nodes. 
    For different number of nodes, we observe how iSCAN recovers the true set of shifted nodes as the number of samples increases, thus empirically showing its consistency.
    (Right) Runtime vs number of nodes for different number of samples. We  corroborate the linear dependence of the time complexity on $d$.}
    \label{fig:sample_complexity}
\end{figure}

\section{On Identifying Structural Differences}
\label{sec:shifted_edges}

After estimating the set of shifted nodes $\hS$ through Algorithm \ref{alg:shift_node}, it is of high interest to predict which causal relations between a shifted node and its parents have undergone changes across the environments. 
The meaning of a change in a causal relationship can vary based on the context and the estimation objective.
This section primarily centers on structural changes, elaborated further below, while additional discussion about other types of changes is available in Appendix \ref{app:functional_shifts}.

\begin{definition}[Structurally shifted edge]
\label{def:stru_shift_edges}
For a given \emph{shifted node} $X_j$, an edge $X_i \to X_j$ is called a structurally shifted edge if $\exists h, h' \in [H]$ such that $X_i \in \pa_j^h$ and $X_i \notin \pa_j^{h'}$. 
\end{definition}

In other words, a structurally shifted edge is an edge that exists in one environment but not in another, indicating a change in the underlying structure of the causal mechanism.
To detect the structurally shifted edges, we will estimate the parents of each shifted node in $\hS$ for all environments $\gE_h$.

\begin{remark}
    Note that under the sparse mechanism shift hypothesis \citep{scholkopf2021toward}, i.e., $|S| \ll d$, estimating the parents of each shifted node is much more efficient than estimating the entire individual structures.
\end{remark}

\textbf{{Kernel regression and variable selection.}} 
A potential strategy to estimate structurally shifted edges involves employing the estimated topological order $\hat{\pi}$ obtained from Algorithm \ref{alg:shift_node}. 
If this estimated topological order remains valid across all environments, it can serve as a guide for the nonparametric variable selection process to identify the parents of a shifted node $X_j$. 
Specifically, we  can regress the shifted node $X_j$ on its predecessors $\hPre(X_j)$ and proceed with a nonparametric variable selection procedure.
Here $\hPre(X_j)$ consists of the set of nodes that appear before $X_j$ in the estimated topological order $\hat\pi$. 
To achieve that, there exist various methods under the hypothesis testing framework \citep{lavergne2000nonparametric, delgado2001significance, racine1997consistent}, and bandwidth selection procedures \citep{lafferty2008rodeo}. These methods offer consistency guarantees, but their time complexity might be problematic. Kernel regression, for example, has a time complexity of $\mathcal{O}(m^3)$, and requires an additional bandwidth selection procedure, usually with a time complexity of $\mathcal{O}(m^2)$. Consequently, it becomes imperative to find a more efficient method for identifying parents locally.

\textbf{Feature ordering by conditional independence (FOCI).} An alternative efficient approach for identifying the parents is to leverage the feature ordering method based on conditional independence proposed by \citet{azadkia2021simple}. 
This method provides a measure of conditional dependency between variables with a time complexity of $\mathcal{O}(m\log m)$. 
By applying this method, we can perform fast variable selection in a nonparametric setting.
See Algorithm \ref{alg:foci} in Appendix \ref{sec:alg_foci}.

\begin{theorem}[Consistency of Algorithm \ref{alg:foci}]
\label{thm:consistent_foci}
Under Assumption \ref{assump:foci}, given in Appendix \ref{app:proof_foci}, if the estimated topological order $\hat\pi$ output from Algorithm \ref{alg:shift_node} is valid for all environments, then the output $\widehat{\pa}^h_j$ of Algorithm \ref{alg:foci} is equal to the true parents $\pa^h_j$ of node $X_j$ with high probability, for all $h\in [H]$.
\end{theorem}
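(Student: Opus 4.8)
The plan is to decouple the argument into a population-level identification claim and a finite-sample estimation claim, then glue them together using the consistency of the Azadkia--Chatterjee conditional dependence statistic. At the population level, I would first show that the greedy feature-ordering procedure of \citet{azadkia2021simple}, run with the \emph{exact} conditional dependence coefficient $T$ and with candidates restricted to the predecessors $\hPre(X_j)$ (which are genuine predecessors precisely because $\hat\pi$ is assumed valid in every environment), selects exactly $\pa_j^h$. Two facts drive this: (i) by the local Markov property of the ANM, $X_j$ is conditionally independent of $\hPre(X_j)\setminus\pa_j^h$ given $\pa_j^h$, so that $T(X_j, X_i \mid X_{\pa_j^h}) = 0$ for every remaining candidate $i$ once all parents have been collected; and (ii) as long as a parent is still missing, the best remaining candidate has \emph{strictly positive} coefficient. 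Fact (ii) is where Assumption \ref{assump:foci} enters: via the nonlinearity of the mechanisms (Assumption \ref{assup:non_linear}) together with a non-degeneracy/faithfulness-type condition, it should guarantee that no parent can be masked, i.e. each missing parent contributes a quantifiable gap $\gamma > 0$ to $T$.

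Next I would invoke the estimation guarantee for the Azadkia--Chatterjee coefficient from \citep{azadkia2021fast}: for each fixed subset $S \subseteq \hPre(X_j)$ and each candidate $i$, the empirical statistic $T_n(X_j, X_i \mid X_S)$ converges to its population value $T(X_j, X_i \mid X_S)$ as $m_h \to \infty$. Since $\hPre(X_j)$ is finite, there are only finitely many conditioning sets $S$ and candidates $i$, so a union bound upgrades pointwise convergence to \emph{uniform} convergence over all pairs $(S,i)$: with probability at least $1-\delta$, every empirical statistic lies within $\gamma/2$ of its population counterpart once $m_h$ exceeds a threshold depending on $(\gamma,\delta,d)$.

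Finally I would transfer the population analysis to the empirical FOCI run by induction on the greedy steps, working on the high-probability event that all statistics are within $\gamma/2$ of their population values. Whenever a parent is still missing, fact (ii) makes the maximizing empirical coefficient at least $\gamma/2 > 0$, so a variable is selected; the gap then forces that variable to be a true parent, since any non-parent has population value $0$ and hence empirical value below $\gamma/2$, so it cannot win the argmax over a set still containing an unselected parent. Once all parents have been gathered, fact (i) makes every remaining empirical coefficient at most $\gamma/2$, triggering the stopping rule of Algorithm \ref{alg:foci} before any spurious variable is added. Thus $\widehat{\pa}_j^h = \pa_j^h$ on this event, and repeating over the finite index set of $j$ and $h$ yields the claim.

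The main obstacle is the interaction between the \emph{adaptive} conditioning sets and the \emph{stopping rule}. Because each step conditions on a previously (randomly) selected set, one cannot apply convergence for a single fixed $S$; this is resolved by the finiteness of $\hPre(X_j)$ and the resulting uniform control. The more delicate issue is ruling out over-selection: since the population coefficient is exactly $0$ (not merely small) once the parents are in hand, a raw estimate could be slightly positive and induce a false positive. This is exactly why the gap $\gamma$ supplied by Assumption \ref{assump:foci} is needed on the positive side, in tandem with the stopping rule of Algorithm \ref{alg:foci} (with a threshold calibrated to vanish more slowly than the estimation error), so that genuine signal from parents and the noise fluctuation around zero are separated with high probability.
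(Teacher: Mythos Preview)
The paper's proof is dramatically shorter than your proposal: it directly invokes the existing FOCI consistency theorem (Theorem~3.1 in \cite{azadkia2021fast}), which under Assumption~\ref{assump:foci} yields $\mathbb{P}(\widehat{\MB}^h_j = \MB^h_j)\ge 1-C_3 e^{-C_4 m}$, and then observes that because $\hat\pi$ is valid, $X_j$ is a leaf among the inputs $\{\hPre(X_j),X_j\}$, so its Markov blanket there equals $\pa_j^h$. That is the whole argument. Your plan instead attempts to \emph{re-derive} FOCI consistency from pointwise convergence of $T_m$ plus a union bound, which is substantially more work and, as written, does not quite close.

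The specific gap is in your stopping analysis. You argue that once all parents are collected, uniform $\gamma/2$-closeness makes every remaining empirical coefficient at most $\gamma/2$, ``triggering the stopping rule.'' But Algorithm~\ref{alg:foci} continues while $\max T_m > 0$; a value in $(0,\gamma/2]$ would not stop the loop and would cause over-selection of a non-parent. You flag this yourself and appeal to a ``threshold calibrated to vanish more slowly than the estimation error,'' yet the algorithm's threshold is fixed at $0$. The Azadkia--Chatterjee proof handles threshold-$0$ stopping through specific structural properties of their rank-based statistic (in particular, how $T_m$ behaves when the population increment $Q(S\cup\{i\})-Q(S)$ is zero), which a generic uniform-closeness argument does not capture. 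So either cite their full consistency theorem as the paper does, or import that piece of their analysis explicitly; otherwise your induction on greedy steps establishes that no non-parent is selected \emph{while a parent is still missing}, but does not prevent a spurious selection immediately after the last parent enters.
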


Motivated by Theorem \ref{thm:consistent_foci}, we next present Algorithm \ref{alg:stru_edges}, a procedure to estimate the structurally shifted edges.
Given the consistency of Alg. \ref{alg:shift_node} and Alg. \ref{alg:stru_edges}, it  follows that combining both algorithms will correctly estimate the true set of shifted nodes and structural shifted edges, asymptotically.
\begin{algorithm}[!ht]
\caption{Identifying structurally shifted edges}
\label{alg:stru_edges}
\begin{algorithmic}[1]

\Require{Data $\{\mX^h\}_{h\in [H]}$, topological order $\hpi$, shifted nodes $\hS$}
\Ensure{Structurally shifted edges set $\hE$}
\State Initialize $\hE = \emptyset$
\For{$X_j$ in $\hS$}
    \For{$h$ in $[H]$}
    \State Estimate $\hpa^h_j$ from Alg. \ref{alg:foci} (FOCI) with input $\{\hPre(\mX_j^h),\mX_j^h\}$
    \EndFor
    \If {$\exists X_k,h,h'$ such that $X_k\in\hpa^h_j, X_k\notin\hpa^{h'}_j$} 
    \State $\hE \gets \hE \cup (X_k,X_j)$
    \EndIf
\EndFor
\end{algorithmic}
\end{algorithm}

\section{Experiments}
\label{sec:experiment_shift_nodes}

We conducted a comprehensive evaluation of our algorithms. 
Section \ref{sec:synthetic_exp_shifted_nodes} focuses on assessing the performance of iSCAN (Alg. \ref{alg:shift_node}) for identifying shifted variables. 
In Section \ref{subsec:real_world_exp}, we apply iSCAN for identifying shifted nodes along with FOCI (Alg. \ref{alg:stru_edges}) for estimating structural changes, on apoptosis data. 
Also, in App. \ref{appendix:experiment}, we provide additional experiments including: \textit{(i)} Localizing shifted nodes without structural changes (App. \ref{app:node_add_expts}), and where the functionals are sampled from Gaussian processes (App. \ref{app:node_gps}); \textit{(ii)} Localizing shifted nodes and estimating structural changes when the underlying graphs are different; and \textit{(iii)} Evaluating iSCAN using the elbow method for selecting shifted nodes (see App. \ref{app:exp_elbow} and Remark \ref{remark:elbow}).
Code is publicly available at \href{https://github.com/kevinsbello/iSCAN}{https://github.com/kevinsbello/iSCAN}.

\subsection{Synthetic experiments on shifted nodes}
\label{sec:synthetic_exp_shifted_nodes}

\textbf{Graph models.} We generated random graphs using the \Erdos-\Renyi\ (ER) and scale free (SF) models. 
For a given number of variables $d$, ER$k$ and SF$k$ indicate an average number of edges equal to $kd$.

\textbf{Data generation.}
We first sampled a DAG, $G^1$, of $d$ nodes according to either the ER or SF model for env. $\gE_1$.
For env. $\gE_2$, we initialized its DAG structure from env. $\gE_1$ and produced structural changes by randomly selecting $0.2 \cdot d$ nodes from the non-root nodes. This set of selected nodes $S$, with cardinality $|S| = 0.2d$, correspond to the set of ``shifted nodes''. 
In env. $\gE_2$, for each shifted node $X_j \in S$, we uniformly at random deleted at most three of its incoming edges, and use $D_j$ to denote the parents whose edges to $X_j$ were deleted; thus, the DAG $G^2$ is a subgraph of $G^1$.
Then, in $\gE_1$, each $X_j$ was defined as follows:
\begin{align*}
    X_j = \sum_{i\in\pa^1_j \setminus D_j} \sin(X_i^2) + \sum_{i\in D_j}4 \cos(2X_i^2 - 3X_i)+ N_j
\end{align*}
In $\gE_2$, each $X_j$ was defined as follows:
\begin{align*}
    X_j = \sum_{i\in\pa^2_j} \sin(X_i^2) + N_j
\end{align*}

\textbf{Experiment details.}
For each simulation, we generated $500$ data points per environment, i.e., $m_1=500, m_2=500$ and $m=1000$.
The noise variances were set to 1. 
We conducted  30 simulations for each combination of graph type (ER or SF), noise type (Gaussian, Gumbel, and Laplace), and number of nodes ($d \in \{10, 20, 30, 50\}$). 
The running time was recorded by executing the experiments on an Intel Xeon Gold 6248R Processor with 8 cores. 
For our method, we used $\eta = 0.05$ for eq.\eqref{eq:gradient_estimator} and eq.\eqref{eq:diag_hessian_estimator}, and a threshold $t = 2$ (see Alg. \ref{alg:practical_shift_node}).

\textbf{Evaluation.}
We compared the performance of iSCAN against several baselines, which include: DCI \citep{wang2018direct}, the approach by \citep{budhathoki2021did}, CITE \citep{varici2021scalable}, KCD \citep{park2021conditional}, SCORE \citep{rolland2022score}, and UT-IGSP \citep{squires2020permutation}. 
Figure \ref{fig:er4_sf4} illustrates the results for ER4 and SF4 graphs.
We note that iSCAN consistently outperforms other baselines in terms of F1 score across all scenarios. 
Importantly, note how the performance of some baselines, like DCI, CITE, Budhathoki's, and SCORE, degrades faster for graphs with hub nodes, a property of SF graphs.
In contrast, iSCAN performs similarly, as it is not dependent on structural assumptions on the individual DAGs.
Additionally, it is worth noting that our method exhibits faster computational time than KCD, Budhathoki's, and SCORE, particularly for larger numbers of nodes.

In Appendix \ref{app:node_add_expts}, we provide experiments on sparser graphs such as ER2/SF2, and denser graphs such as ER6/SF6. We also include Precision and Recall in all plots in the supplement.

\begin{figure}[!ht]
    \centering
    \includegraphics[width=\textwidth]{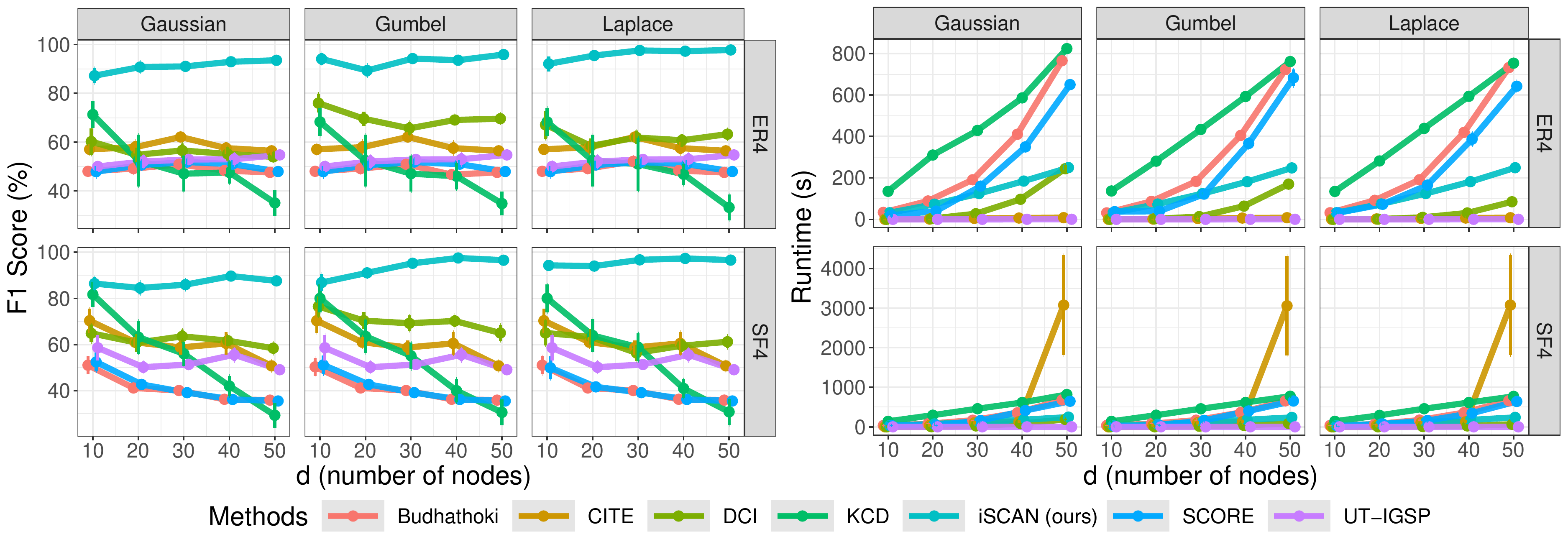}
    \caption{Experiments on ER4 and SF4 graphs. See the experiment details above. The points indicate the average values obtained from these simulations. The error bars depict the standard errors. Our method iSCAN (light blue) consistently outperformed baseline methods in terms of F1 score.
    }
    \label{fig:er4_sf4}
\end{figure}

\subsection{Experiments on apoptosis data}
\label{subsec:real_world_exp}

We conducted an analysis on an ovarian cancer dataset using iSCAN (Algorithm \ref{alg:shift_node}) to identify shifted nodes and Algorithm \ref{alg:stru_edges} to detect structurally shifted edges (SSEs).
This dataset had previously been analyzed using the DPM method \citep{zhao2014direct} in the undirected setting, and the DCI method \citep{wang2018direct} in the linear setting. 
By applying our method, we were able to identify the shifted nodes and SSEs in the dataset (see Figure \ref{fig:real_data_msd}). 
Our analysis revealed the identification of two hub nodes in the apoptosis pathway: BIRC3, and PRKAR2B. The identification of BIRC3 as a hub node was consistent with the results obtained by the DPM and DCI methods. 
Additionally, our analysis also identified PRKAR2B as a hub node, which was consistent with the result obtained by the DCI method. 
Indeed, BIRC3, in addition to its role in inhibiting TRAIL-induced apoptosis, has been investigated as a potential therapeutic target in cancer treatment including ovarian cancer\citep{johnstone2008trail,vucic2007inhibitor}; whereas PRKAR2B has been identified as an important factor in the progression of ovarian cancer cells. 
The latter serves as a key regulatory unit involved in the growth and development of cancer cells \citep{yoon2018knockdown,chiaradonna2008ras}.

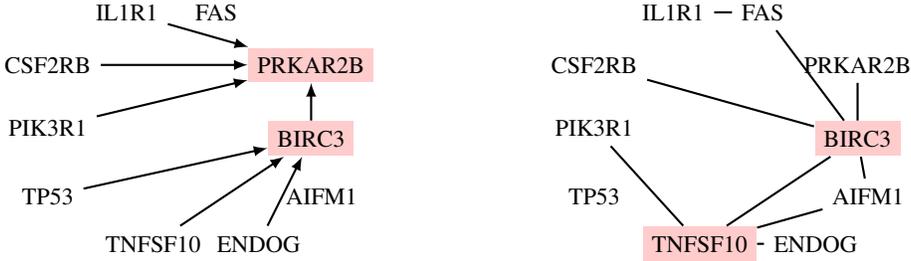
\begin{figure}[!ht]
\centering
\begin{subfigure}[h]{0.48\textwidth}
\centering
\begin{tikzpicture}[>=latex,node distance=2cm, thick, scale=.7]
\tikzstyle{node_style}=[fill=white,font=\footnotesize]
\tikzstyle{edge_style}=[->,thick]
\node[node_style] (TP53) at (0,-2.5) {TP53};
\node[node_style] (CSF2RB) at (0,0) {CSF2RB};
\node[node_style] (IL1R1) at (1.5,1) {IL1R1};
\node[node_style] (FAS) at (3.2,1) {FAS};
\node[node_style, fill=red!20] (BIRC3) at (5,-1.4) {BIRC3};
\node[node_style] (TNFSF10) at (2,-3.4) {TNFSF10};
\node[node_style, fill=red!20] (PRKAR2B) at (5,0) {PRKAR2B};
\node[node_style] (AIFM1) at (5.2,-2.5) {AIFM1};
\node[node_style] (PIK3R1) at (0,-1.2) {PIK3R1};
\node[node_style] (ENDOG) at (4,-3.4) {ENDOG};
\foreach \source/\dest in {CSF2RB/PRKAR2B,BIRC3/PRKAR2B,PIK3R1/PRKAR2B,IL1R1/PRKAR2B,TNFSF10/BIRC3,TP53/BIRC3,ENDOG/BIRC3}
    \path[edge_style] (\source) edge (\dest);
\end{tikzpicture}
\caption{The red nodes are the shifted nodes estimated by iSCAN (Alg. \ref{alg:shift_node}). The edges are the structurally shifted edges  estimated by FOCI (Alg. \ref{alg:stru_edges}).}
\label{fig:real_data_msd}
\end{subfigure}
\hfill
\begin{subfigure}[h]{0.48\textwidth}
\centering
\begin{tikzpicture}[>=latex,node distance=2cm, thick, scale=.7]
\tikzstyle{node_style}=[fill=white,font=\footnotesize]
\tikzstyle{edge_style}=[-,thick] 
\node[node_style] (TP53) at (0,-2.5) {TP53};
\node[node_style] (CSF2RB) at (0,0) {CSF2RB};
\node[node_style] (IL1R1) at (1.5,1) {IL1R1};
\node[node_style] (FAS) at (3.2,1) {FAS};
\node[node_style, fill=red!20] (BIRC3) at (5,-1.4) {BIRC3};
\node[node_style, fill=red!20] (TNFSF10) at (2,-3.4) {TNFSF10};
\node[node_style] (PRKAR2B) at (5,0) {PRKAR2B};
\node[node_style] (AIFM1) at (5.2,-2.5) {AIFM1};
\node[node_style] (PIK3R1) at (0,-1.2) {PIK3R1};
\node[node_style] (ENDOG) at (4.2,-3.4) {ENDOG};
\foreach \source/\dest in {FAS/IL1R1, BIRC3/FAS, BIRC3/CSF2RB, BIRC3/PRKAR2B, BIRC3/AIFM1, TNFSF10/BIRC3,TNFSF10/PIK3R1, TNFSF10/ENDOG, TNFSF10/AIFM1}
    \path[edge_style] (\source) edge (\dest);
\end{tikzpicture}
\caption{Undirected difference network estimated by DPM \citep{zhao2014direct}. The red nodes indicate hub nodes, however, it is not clear which node mechanisms have changed.}
\label{fig:real_data_dpm}
\end{subfigure}
\caption{Results on apoptosis data.}
\end{figure}

\section{Conclusion}

In this work, we showed a novel connection between score matching and identifying causal mechanism shifts among related heterogeneous datasets. 
This finding opens up a new and promising application for score function estimation techniques.

Our proposed technique consists of three modules. 
The first module evaluates the Jacobian of the score under the \textit{individual} distributions and the \textit{mixture} distribution. 
The second module identifies shifted features (variables) using the estimated Jacobians, allowing us to pinpoint the nodes that have undergone a mechanism shift. 
Finally, the third module aims to estimate structurally shifted edges, a.k.a. the difference DAG, by leveraging the information from the identified shifted nodes and the estimated topological order.
 \textit{It is important to note that our identifiability result in Theorem \ref{thm:node_detect} is agnostic to the choice of the score estimator.}
 
The strength of our result lies in its capability to recover the difference DAG in non-linear Additive Noise Models (ANMs) without making any assumptions about the parametric form of the functions or statistical independencies. 
This makes our method applicable in a wide range of scenarios where non-linear relationships and shifts in mechanisms are present.

\subsection{Limitations and future work}

While our work demonstrates the applicability of score matching in identifying causal mechanism shifts in the context of nonlinear ANMs, there are several limitations and areas for future exploration:

\textit{Extension to other families of SCMs}: Currently, our method is primarily focused on ANMs where the noise distribution satisfies Assumption \ref{assup:noise}, e.g., Gaussian distributions. 
It would be valuable to investigate the application of score matching in identifying causal mechanism shifts in other types of SCMs. Recent literature, such as \citep{montagna2023causal}, has extended score matching to additive Models with arbitrary noise for finding the topological order. Expanding our method to accommodate different noise models would enhance its applicability to a wider range of real-world scenarios.

\textit{Convergence rate analysis}: Although the score matching estimator is asymptotically consistent, the convergence rate remains unknown in general. Understanding the convergence properties of the estimator is crucial for determining the sample efficiency and estimating the required number of samples to control the estimation error within a desired threshold. Further theoretical developments, such as  \citep{koehler2022statistical}, on score matching estimators would provide valuable insights into the performance and sample requirements of iSCAN.

\begin{ack}
 K.B. was supported by NSF under Grant \# 2127309 to the Computing Research Association for the CIFellows 2021 Project.
 B.A. was supported by NSF IIS-1956330, NIH R01GM140467, and the Robert H. Topel Faculty Research Fund at the University of Chicago Booth School of Business.
 P.R. was supported by ONR via N000141812861, and NSF via IIS-1909816, IIS-1955532, IIS-2211907.
We are also grateful for the support of the University of Chicago Research Computing Center for assistance with the calculations carried out in this work.
 \end{ack}

\bibliographystyle{agsm}
\bibliography{refs}	

\appendix	
\clearpage
\onecolumn
\apptitle{\papertitle}

\section{Practical Implementation}
\label{app:practical_algorithm}

In this section, we present a more practical version of Alg. \ref{alg:shift_node} that considers estimation errors, see Alg. \ref{alg:practical_shift_node}.
First, we provide more details of the score's Jacobian estimation.

\subsection{Practical Version of SCORE}
\label{app:score_details}    

Let $\mX = \{x^1,\ldots,x^m\}$ be a dataset of $m$ possibly non-independent but identically distributed samples.
From \citet{li2017gradient}, we next present the estimator for the point-wise first-order partial derivative, corresponding to eq.\eqref{eq:score}:
\begin{align}
\label{eq:gradient_estimator}
    \bm {\hat G} = -(\bm K+\eta\bm I)^{-1}\langle \nabla,\bm K\rangle
\end{align}
where $\bm H = (h(x^1), \dots, h(x^m))\in \mathbb{R}^{d'\times m}$, $\overline{\nabla \bm h}=\frac{1}{m}\sum_{k=1}^m\nabla\bm h(x^k)$, $\bm K = \bm H ^\top \bm H$, $K_{ij}=\kappa(x^i,x^j)=\bm h(x^i)^\top \bm h(x^j),\langle \nabla,\bm K\rangle = m\bm H^T\overline{\nabla \bm h}$, $\langle \nabla,\bm K\rangle_{ij}=\sum_{k=1}^m\nabla_{x_j^k}\kappa(x^i,x^j)$, and $\eta\ge 0$ is a regularization parameter. 
Here $\bm {\hat G}$ is used to approximate $\bm G\equiv (\nabla \log p(x^1),\dots,\nabla \log p(x^m))^\top \in \sR^{m\times d}$.

From \citep{rolland2022score}, we now present the estimator for the diagonal elements of the score's Jacobian at the sample points, i.e. $\bm J \equiv (\diag(\nabla^2\log p(x^1)),\dots (\diag(\nabla^2\log p(x^m)))^\top\in \sR^{m\times d}$, the estimator of $\bm J$ is:
\begin{align}
\label{eq:diag_hessian_estimator}
    \bm {\hat J} = -\diag\left(\hat {\bm G}\hat {\bm G}^\top\right) + (\bm K+\eta\bm I)^{-1}\langle\nabla_{\diag}^2,\bm K\rangle
\end{align}
where $\bm H = (h(x^1),\dots,h(x^m))\in \sR^{d'\times m}$, $\overline{\nabla_{\diag}^2 \bm h}=\frac{1}{m}\sum_{k=1}^m\nabla_{\diag}^2\bm h(x^k)$, $(\nabla_{\diag}^2\bm h(x))_{ij}=\frac{\partial^2 h_i(x)}{\partial x_j^2}$, $\bm K = \bm H ^\top \bm H$, $\bm K_{ij}=\kappa(x^i,x^j)=\bm h(x^i)^\top \bm h(x^j),\langle \nabla_{\diag}^2,\bm K\rangle = m\bm H^T\overline{\nabla_{\diag}^2 \bm h}$, $\langle \nabla_{\diag}^2,\bm K\rangle_{ij}=\sum_{k=1}^m\frac{\partial^2\kappa(x^i,x^k)}{(\partial x_j^k)^2}$, and $\eta\ge 0$ is a regularization parameter.

In the sequel, we use $\SCORE(\mX)$ to denote the procedure to compute the sample variance for the estimator of the diagonal of the score's Jacobian via eq.\eqref{eq:diag_hessian_estimator}.

\subsection{Practical Version of Algorithm \ref{alg:shift_node}}

Let $\hVar^h$ be a $d$-dimensional vector, where $d$ is the number of nodes. We introduce a $d$-dimensional vector $\texttt{rank}^h$, which represents the index of each element in $\hVar^h$ after a non-decreasing sorting. For example, if $\hVar^h = (5.2, 3.1, 4.5, 1.6)$, then $\texttt{rank}^h = (3, 1, 2, 0)$.
Furthermore, we define a $d$-dimensional vector $\texttt{rank}$ as the element-wise summation of $\texttt{rank}^h$ over all $h\in [H]$. In other words, $\texttt{rank}$ is calculated as $\texttt{rank} = \sum_{h\in [H]}\texttt{rank}^h$.

\begin{algorithm}[!ht]
\caption{Practical version of Algorithm \ref{alg:shift_node}}
\label{alg:practical_shift_node}
\begin{algorithmic}[1]

\Require{Datasets $\bm{X}^1,\dots,\bm{X}^H$, threshold $t$}
\Ensure{Shifted variables set $\hS$, and topological sort $\hpi$.}

\State Initialize $\hS = \emptyset$, $\hpi = (\ )$, $\gN = \{1,\dots,d\}$
\State $\stats \gets (0,\ldots,0) \in \sR^d$
\State Set $\mX = [(\mX^1)^\top \mid \cdots \mid (\mX^H)^\top]^\top \in \sR^{m\times d}$.
\While{$\gN \neq \emptyset$}
\State $\forall h\in[H], \hVar^h \gets \SCORE(\mX^h)$. \hfill \Comment{Estimate $\Var_{\mX^h}\left[\diag(\nabla^2 \log p^h(x))\right]$.}
\State $\forall h\in[H]$, $\texttt{rank}^h \gets \arg\sort(\hVar^h)$.
\State $\texttt{rank} \gets \sum_{h\in [H]}\texttt{rank}^h$
\State $\hL \gets \arg\min_j \rank_j$ \hfill \Comment{Estimate a leaf node}
\State $\hVar \gets \SCORE(\mX)$ \hfill \Comment{Estimate $\Var_{\mX}\left[\diag(\nabla^2 \log q(x))\right]$.}
\State $\displaystyle \stats_L = \frac{\hVar_L}{\min_{h}\hVar^h_L}$
\State $\gN \gets \gN - \{L\}$
\State Remove the $\hL$-th column of $\bm X^h$, $\forall h\in[H]$, and $\bm X$.
\State $\hpi \gets (\hL, \hpi)$.
\EndWhile
\State $\hS = \left\{ j \mid \stats_j > t, \forall j \in [d]\right\}$
\end{algorithmic}
\end{algorithm}

Recall that in Section \ref{sec:score_jacobian_estimation} we remarked that we leverage the $\SCORE$ approach from \citet{rolland2022score} for estimating $\diag(\nabla^2 \log p(x))$ at each data point.
Recall also that our identifiability result (Theorem \ref{thm:node_detect}) depends on determining whether a leaf node has variance $\Var_q(\frac{\partial s_j(x)}{\partial x_j}) = 0$.
In practice, it is unrealistic to simply test for the equality $\Var_L = 0$ since $\Var_L$ carries out errors due to finite samples.
Instead, we define the following statistic for each estimated leaf node $L$ (Line 10 in Algorithm \ref{alg:practical_shift_node}):
\begin{align}
\label{eq:statistic}
    \stats_L = \frac{\Var_L}{\min_{h}\Var^h_L +\epsilon}.
\end{align}
The intuition behind this ratio is that if the leaf node $L$ is a \textit{shifted node} then we can expect $\frac{\Var_L}{\min_{h}\Var^h_L}$ to be large since $\Var_L > 0$ (by Theorem \ref{thm:node_detect}), and $\Var_L^h \approx 0$ (by Proposition \ref{prop:leafs_of_anm}). 
On the other hand, if the leaf node $L$ is \textbf{not} a shifted node then we can expect $\frac{\Var_L}{\min_{h}\Var^h_L}$ to be small. 
This is due to the fact that, given a consistent estimator, $\Var_L$ would converge towards 0 (by Theorem \ref{thm:node_detect}) at a faster rate than $\Var_L^h$ since we utilize a larger amount of data for estimating $\Var_L$.
Finally, $\epsilon$ in the denominator is a very small value, e.g. $10^{-9}$, and acts as a safeguard against encountering a division by zero\footnote{In practice, one could omit $\epsilon$ from the denominator as it is unusual to obtain $\min_{h}\Var^h_L = 0$ from finite samples and computational precision.}.

Then, given the statistic in eq.\eqref{eq:statistic}, we can set a threshold $t$ and define the set of shifted nodes $S$ by all the nodes $j$ such that $\stats_j > t$ (Line 14 in Algorithm \ref{alg:practical_shift_node}).

\begin{remark}[The elbow strategy]
\label{remark:elbow}
Alternatively, we can employ an adaptive approach to identify the set of shifted nodes by sorting $\stats$ in non-increasing order, and look for the ``elbow'' point.
For example, Figure \ref{fig:elbow} illustrates the variance ratio in \eqref{eq:statistic} for each node sorted in non-increasing order. 
In this case, node index 5 corresponds to the \textit{elbow point}, allowing us to estimate nodes 5 and 8 as shifted nodes.
Identifying the elbow point has the advantage to detect shifted nodes without relying on a fixed threshold.	
\end{remark}

\begin{figure}[!hbt]
    \centering
    \includegraphics[width=0.5\textwidth]{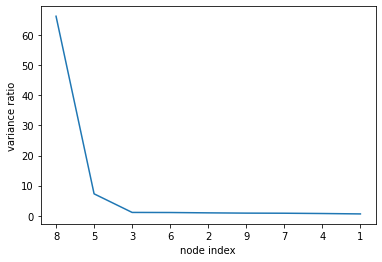}
    \caption{Statistic in eq.\eqref{eq:statistic} for each node sorted in non-increasing order. In this case, node index 5 corresponds to the \textit{elbow point}, allowing us to estimate nodes 5 and 8 as shifted nodes.}
    \label{fig:elbow}
\end{figure}

\subsection{Algorithm details for FOCI}
\label{sec:alg_foci}

\begin{algorithm}[H]
\caption{Feature ordering by conditional independence (FOCI)}
\label{alg:foci}
\begin{algorithmic}[1]

\Require{Data $\hPre(\mX^h_j),\mX^h_j$}
\Ensure{Estimated parents of $X_j$, $\hpa^h_j$}
\State $P \gets \emptyset$
\State Let $T_m(i,j,P) \equiv T_m(\mX_j^h,\mX_i^h \mid \mX^h_{P})$ \hfill \Comment{$T_m$ is the estimator in \citet{azadkia2021simple}.}
\While {$\max_{{X_i\notin P,\ X_i\in \hPre(X_j)}} T_m(i,j,P)  >0$} 
\State $P \gets P \cup \left\{\argmax_{X_i\notin P,\ X_i\in \hPre(X_j)} T_m(i,j,P)\right\}$
\EndWhile
\State $\hpa^h_j\leftarrow P$
\end{algorithmic}
\end{algorithm}

\section{Detailed Proofs}
\label{app:detailed_proofs}

\subsection{Proof of Theorem \ref{thm:node_detect}}
To prove Theorem \ref{thm:node_detect} we will make use of the following lemmas.
\begin{lemma}
\label{lemma:node}
Let $\{a_h\}_{h=1}^{H}$ and $\{b_h\}_{h=1}^{H}$ be two sequences of real numbers, where $a_h > 0, \forall h$.
Then we have:
\[
\left(\sum_{h=1}^H a_h b_h^2\right) \left(\sum_{h=1}^H a_h\right) - \left(\sum_{h=1}^H a_h b_h\right)^2 \geq 0,
\]
with equality if and only if $b_i = b_j, \forall j\neq i \in [H]$.
\end{lemma}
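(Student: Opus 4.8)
The statement to prove is a weighted Cauchy--Schwarz inequality, and I would recognize it immediately as such. The plan is to view the quantity as a nonnegative weighted variance. Writing $a_h > 0$ as weights, define the weighted average $\bar b = \left(\sum_h a_h b_h\right) / \left(\sum_h a_h\right)$; then the left-hand side equals $\left(\sum_h a_h\right)^2$ times the weighted variance $\sum_h \tfrac{a_h}{\sum_k a_k}(b_h - \bar b)^2$, which is manifestly nonnegative. This reframing makes both the inequality and the equality case transparent.

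The cleanest self-contained route, however, is the classical completion-of-squares / sum-over-pairs identity. I would introduce the auxiliary vectors $u_h = \sqrt{a_h}$ and $v_h = \sqrt{a_h}\, b_h$ and apply the Cauchy--Schwarz inequality $\left(\sum_h u_h v_h\right)^2 \le \left(\sum_h u_h^2\right)\left(\sum_h v_h^2\right)$, which reads exactly $\left(\sum_h a_h b_h\right)^2 \le \left(\sum_h a_h\right)\left(\sum_h a_h b_h^2\right)$; rearranging gives the claim. Alternatively, and more explicitly for the equality analysis, I would expand the difference directly as a double sum, using the Lagrange-type identity
\[
\left(\sum_{h=1}^H a_h b_h^2\right)\left(\sum_{h=1}^H a_h\right) - \left(\sum_{h=1}^H a_h b_h\right)^2 = \sum_{1 \le i < j \le H} a_i a_j (b_i - b_j)^2.
\]
Since each $a_i a_j > 0$ and each $(b_i - b_j)^2 \ge 0$, the right-hand side is nonnegative, establishing the inequality.

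For the equality characterization, the pairwise-sum form does all the work: the sum $\sum_{i<j} a_i a_j (b_i - b_j)^2$ vanishes if and only if every summand vanishes, and since all $a_i a_j > 0$ this forces $(b_i - b_j)^2 = 0$, i.e. $b_i = b_j$ for all $i \ne j$. This is precisely the stated equality condition. I would verify the Lagrange identity by expanding both sides, noting that the cross terms $\sum_{i \ne j} a_i a_j b_i b_j$ combine correctly; this is a routine algebraic check.

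I do not anticipate any genuine obstacle here, since this is a standard inequality. The only point requiring a little care is making sure the equality case is handled rigorously rather than by appeal to the (strict) Cauchy--Schwarz equality condition, which would require stating it in terms of proportionality of the vectors $(\sqrt{a_h})$ and $(\sqrt{a_h}\,b_h)$ and then translating that back into the condition $b_h \equiv \text{const}$. For this reason I would favor the explicit Lagrange identity as the main vehicle, since it yields both the inequality and the sharp equality condition simultaneously and transparently, with the positivity hypothesis $a_h > 0$ entering exactly where it is needed.
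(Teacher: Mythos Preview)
Your proposal is correct. Your first route via Cauchy--Schwarz with $u_h=\sqrt{a_h}$ and $v_h=\sqrt{a_h}\,b_h$ is exactly the paper's proof: it applies Cauchy--Schwarz to these vectors and reads off the equality case from linear dependence of $\vu$ and $\vv$, which forces $b_h$ constant. Your preferred alternative, the Lagrange identity $\sum_{i<j} a_i a_j (b_i-b_j)^2$, is a genuinely different (and arguably cleaner) vehicle that the paper does not use; it has the advantage that the equality condition falls out immediately from positivity of $a_ia_j$ without needing to translate ``proportionality of $(\sqrt{a_h})$ and $(\sqrt{a_h}\,b_h)$'' back into ``$b_h$ constant,'' which is the one step the paper leaves implicit.
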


\begin{proof}
We can invoke the Cauchy–Schwarz inequality with vectors $\vu = (\sqrt{a_1},\ldots,\sqrt{a_H})$, and $\vv = (b_1\sqrt{a_1},\ldots,b_H\sqrt{a_H})$, then we have:
$$
(\vu^\top \vv)^2 \leq \NormII{\vu}^2 \NormII{\vv}^2,
$$
which proves the inequality. The equality holds if and only if $\vu$ and $\vv$ are linearly dependent, i.e., when $b_i = b_j$ for all $i \neq j \in [H]$.
\end{proof}

\begin{lemma}
\label{lemma:p_to_c}
    For any $j$, if $\mathbb{P}^h(X_j\mid \pa_j^h) = \mathbb{P}^{h'}(X_j\mid \pa_j^{h'})$, then $c_j^h=c_j^{h'}$.
\end{lemma}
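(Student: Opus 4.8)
The plan is to prove Lemma~\ref{lemma:p_to_c} by relating the conditional mechanism $\mathbb{P}^h(X_j\mid \pa_j^h)$ directly to the constant $c_j^h$ appearing in Assumption~\ref{assup:noise}. The key observation is that for an additive noise model, the conditional density of $X_j$ given its parents is completely determined by the noise density: since $X_j = f_j^h(\pa_j^h) + N_j^h$, we have
\[
    p^h(x_j \mid \pa_j^h) = p_{N_j}^h\bigl(x_j - f_j^h(\pa_j^h)\bigr).
\]
Taking logarithms and differentiating twice with respect to $x_j$ (holding the parent values fixed), the shift by $f_j^h(\pa_j^h)$ disappears under the second derivative, yielding
\[
    \frac{\partial^2}{\partial x_j^2} \log p^h(x_j \mid \pa_j^h) = \frac{\partial^2}{(\partial n_j^h)^2}\log p_{N_j}^h(n_j^h)\Big|_{n_j^h = x_j - f_j^h(\pa_j^h)} = c_j^h,
\]
where the last equality is exactly Assumption~\ref{assup:noise}.

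First I would establish this identity carefully for each environment $h$, noting that it holds for every value of $x_j$ and every configuration of the parents, precisely because $c_j^h$ is a constant independent of its argument. Then, under the hypothesis $\mathbb{P}^h(X_j\mid \pa_j^h) = \mathbb{P}^{h'}(X_j\mid \pa_j^{h'})$, the two conditional densities agree as functions, and hence so do their second log-derivatives in $x_j$. Evaluating the identity above in both environments and equating gives $c_j^h = c_j^{h'}$ directly.

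The main subtlety — rather than a genuine obstacle — is handling the domains of the parent variables: the equality of conditional distributions should be interpreted over the region where both conditionals are defined, and one must ensure there is a common point $(x_j, \pa_j)$ at which both sides can be evaluated so that the derived constants coincide. Since $c_j^h$ is the same constant regardless of the evaluation point, this amounts to observing that the second log-derivative of each conditional is the identically constant function $c_j^h$, so equality of the densities forces equality of these constant functions. I would therefore phrase the argument so that the constancy granted by Assumption~\ref{assup:noise} makes the conclusion immediate once the conditional-to-noise identity is in hand, avoiding any pointwise matching difficulty.
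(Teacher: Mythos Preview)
Your proposal is correct and follows essentially the same approach as the paper: both arguments use the additive-noise identity $p^h(x_j\mid \pa_j^h)=p^h_{N_j}\bigl(x_j-f_j^h(\pa_j^h)\bigr)$, differentiate the log twice in $x_j$ to obtain the constant $c_j^h$ from Assumption~\ref{assup:noise}, and then conclude $c_j^h=c_j^{h'}$ from equality of the conditional densities. The paper carries out the chain-rule computation explicitly with the substitution $u=x_j-f_j^h(\pa_j^h)$, but the content is the same as what you outline.
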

\begin{proof}
Denote the associated density of $\mathbb{P}^h(X_j\mid \pa_j^h)$ when $X_j=x_j$ as $p^h_{N_j}(x_j - f_j^h(\pa_j^h))$ and let $u=x_j - f_j^h(\pa_j^h)$
\begin{align*}
    &\frac{\partial^2}{\partial(x_j)^2}\log p^h_{N_j}(x_j - f_j^h(\pa_j^h))\\
    = &\frac{\partial \log p_{N_j}^h(u)}{\partial u}\frac{\partial^2 u}{\partial (x_j)^2} + \frac{\partial^2 \log p_{N_j}^h(u)}{\partial u^2}\left(\frac{\partial u}{\partial x_j}\right)^2\\
    =& 0 +c_j^h = c_j^h
\end{align*}
where we use the fact that $\frac{\partial u}{\partial x_j}=1,\frac{\partial^2 u}{\partial (x_j)^2}=0$. Then it immediate follows that if  $\mathbb{P}^h(X_j\mid \pa_j^h) = \mathbb{P}^{h'}(X_j\mid \pa_j^{h'})$, then $c_j^h=c_j^{h'}$
\end{proof}

\begin{lemma}
\label{lemma:bn_to_dist}
    For any $j$, under Assumption \ref{assup:noise}, $\frac{\partial}{\partial x_j}\log p^h_{N_j}(x_j - f_j^h(\pa_j^h)) = \frac{\partial}{\partial x_j}\log p^{h'}_{N_j}(x_j - f_j^{h'}(\pa_j^{h'}))$ if and only if $\mathbb{P}^h(X_j\mid \pa_j^h) = \mathbb{P}^{h'}(X_j\mid \pa_j^{h'})$, where $p^h$ and $p^{h'}$ are the probability density functions corresponding to the probability measures $\mathbb{P}^h$ and $\mathbb{P}^{h'}$ when $X_j=x_j$.
\end{lemma}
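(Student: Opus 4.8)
The plan is to prove the two implications separately, using Assumption \ref{assup:noise} to pin down the exact form of the noise score. The key preliminary observation is that Assumption \ref{assup:noise} forces $\frac{\partial}{\partial n}\log p^h_{N_j}(n)$ to be \emph{linear with zero intercept}: integrating $\frac{\partial^2}{\partial n^2}\log p^h_{N_j}(n)=c_j^h$ twice gives $\log p^h_{N_j}(n)=\tfrac{c_j^h}{2}n^2+d_j^h n+e_j^h$, so $p^h_{N_j}$ is Gaussian with mean $-d_j^h/c_j^h$ and variance $-1/c_j^h$ (necessarily $c_j^h<0$ for integrability); the assumption $\E[N_j^h]=0$ then forces $d_j^h=0$, whence $\frac{\partial}{\partial n}\log p^h_{N_j}(n)=c_j^h n$. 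Using the chain rule with $u=x_j-f_j^h(\pa_j^h)$ and $\frac{\partial u}{\partial x_j}=1$ (exactly as in the computation in Lemma \ref{lemma:p_to_c}), this yields the clean identity
\[
\frac{\partial}{\partial x_j}\log p^h_{N_j}\bigl(x_j-f_j^h(\pa_j^h)\bigr)=c_j^h\bigl(x_j-f_j^h(\pa_j^h)\bigr),
\]
and likewise for $h'$.

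The backward implication ($\Leftarrow$) is then immediate: if $\mathbb{P}^h(X_j\mid\pa_j^h)=\mathbb{P}^{h'}(X_j\mid\pa_j^{h'})$, the two conditional densities $p^h_{N_j}(x_j-f_j^h(\pa_j^h))$ and $p^{h'}_{N_j}(x_j-f_j^{h'}(\pa_j^{h'}))$ coincide as functions of $x$ on the common support, so their logarithms coincide, and hence so do their partial derivatives with respect to $x_j$.

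For the forward implication ($\Rightarrow$), I would substitute the linear form above into the hypothesized equality to obtain $c_j^h(x_j-f_j^h(\pa_j^h))=c_j^{h'}(x_j-f_j^{h'}(\pa_j^{h'}))$ for all $x$ in the support. Differentiating both sides in $x_j$ — and noting that $f_j^h(\pa_j^h)$ and $f_j^{h'}(\pa_j^{h'})$ do not depend on $x_j$ since $X_j\notin\pa_j^h\cup\pa_j^{h'}$ — isolates the coefficient of $x_j$ and gives $c_j^h=c_j^{h'}=:c_j$. Substituting back and dividing by $c_j\neq0$ (Assumption \ref{assup:noise}) yields $f_j^h(\pa_j^h)=f_j^{h'}(\pa_j^{h'})$ pointwise in $x$. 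Finally, $c_j^h=c_j^{h'}$ together with the mean-zero normalization identifies the two noise laws, $p^h_{N_j}=p^{h'}_{N_j}$, both being the unique mean-zero Gaussian with variance $-1/c_j$; hence the conditional densities $p^h_{N_j}(x_j-f_j^h(\pa_j^h))$ and $p^{h'}_{N_j}(x_j-f_j^{h'}(\pa_j^{h'}))$ agree, i.e. $\mathbb{P}^h(X_j\mid\pa_j^h)=\mathbb{P}^{h'}(X_j\mid\pa_j^{h'})$.

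The main obstacle is recovering the \emph{full} conditional law from the score rather than merely the regression function: equality of scores only constrains $\frac{\partial}{\partial x_j}\log p$, so I must leverage Assumption \ref{assup:noise} twice — once to linearize the noise score (eliminating the intercept via $\E[N_j^h]=0$, which is what makes the coefficient comparison clean) and once to argue that equal curvature constants $c_j^h=c_j^{h'}$ already determine the normalizing constant and therefore the entire density. Care is also needed regarding the domain over which the functional equalities are read (the common support of the mixture, where differentiation in $x_j$ is justified) and the possibility that $\pa_j^h\neq\pa_j^{h'}$; the latter is accommodated because the conclusion concerns only the conditional distribution of $X_j$, and the derived identity $f_j^h(\pa_j^h)=f_j^{h'}(\pa_j^{h'})$ holds as an equality of functions of $x$.
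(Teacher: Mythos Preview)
Your argument is correct, but it takes a genuinely different route from the paper's. The paper never exploits the specific content of Assumption \ref{assup:noise}; its proof simply integrates the assumed score equality in $x_j$ to conclude that the two conditional log-densities differ by a quantity independent of $x_j$, and then integrates each density over $x_j$ (a change of variables $u=x_j-f_j^h(\pa_j^h)$ makes each integral equal $1$) to force that additive constant to vanish. This normalization argument is shorter and in fact does not need Assumption \ref{assup:noise} at all. Your approach instead unpacks the assumption fully --- noting that a constant second log-derivative plus $\E[N_j^h]=0$ forces each noise law to be the mean-zero Gaussian with variance $-1/c_j^h$ --- and then compares affine functions of $x_j$. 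What your route buys is extra structural information: you obtain $c_j^h=c_j^{h'}$ and $f_j^h(\pa_j^h)=f_j^{h'}(\pa_j^{h'})$ directly, which the paper derives separately (the former via Lemma \ref{lemma:p_to_c}). What the paper's route buys is generality and brevity: it would go through for any noise densities that integrate to $1$, without ever identifying them.
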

\begin{proof}
Denote the associated density of $\mathbb{P}^h(X_j\mid \pa_j^h)$ when $X_j=x_j$ as $p^h_{N_j}(x_j - f_j^h(\pa_j^h))$, we proceed as follows:
    \begin{align*}
    &\frac{\partial}{\partial x_j}\log p^h_{N_j}(x_j - f_j^h(\pa_j^h)) = \frac{\partial}{\partial x_j}\log p^{h'}_{N_j}(x_j - f_j^{h'}(\pa_j^{h'}))\\
    \iff & \log p^h_{N_j}(x_j - f_j^h(\pa_j^h)) = \log p^{h'}_{N_j}(x_j - f_j^{h'}(\pa_j^{h'})) + const\\
    \iff & p^h_{N_j}(x_j - f_j^h(\pa_j^h)) = p^{h'}_{N_j}(x_j - f_j^{h'}(\pa_j^{h'})) \cdot e^{const} \\
    \Rightarrow & \int_{\mathbb{R}}p^h_{N_j}(x_j - f_j^h(\pa_j^h))\text{d}x_j = e^{const}\cdot \int_{\mathbb{R}}p^{h'}_{N_j}(x_j - f_j^{h'}(\pa_j^{h'}))\text{d}x_j \\
    \Rightarrow & \int_{\mathbb{R}}p^h_{N_j}(x_j - f_j^h(\pa_j^h))\text{d}(x_j - f_j^h(\pa_j^h)) = e^{const}\cdot \int_{\mathbb{R}}p^{h'}_{N_j}(x_j - f_j^{h'}(\pa_j^{h'}))\text{d}(x_j - f_j^{h'}(\pa_j^{h'})) \\
    \Rightarrow & 1 = 1\cdot e^{const}\\
    \Rightarrow &const = 0
\end{align*}
Here, $const$ is a constant that is independent of $x_j$. Integrating both sides with respect to $x_j$ and using the fact that $\int p^h(x)\text{d}x=1$, we conclude that $const=0$. Hence, we can establish the following:
\begin{align*}
    &\frac{\partial}{\partial x_j}\log p^h_{N_j}(x_j - f_j^h(\pa_j^h)) = \frac{\partial}{\partial x_j}\log p^{h'}_{N_j}(x_j - f_j^{h'}(\pa_j^{h'}))\\
    \iff & p^h_{N_j}(x_j - f_j^h(\pa_j^h)) = p^{h'}_{N_j}(x_j - f_j^{h'}(\pa_j^{h'})) \\
    \iff &\mathbb{P}^h(X_j\mid \pa_j^h) = \mathbb{P}^{h'}(X_j\mid \pa_j^{h'})
\end{align*}
\end{proof}

\begin{proof}[Proof of Theorem \ref{thm:node_detect}]
\label{proof:node_detect}
    Let us first expand the log density of the mixture distribution:
    \begin{align*}
        \log q(x) &= \log\left(\sum_{h=1}^H w_h p^h(x)\right)
    \end{align*}
    Then, recall that $s(x) = \nabla \log q(x)$, the $j$-entry reads:
    \begin{align}
        s_j(x)
        &=\sum_{h=1}^H\frac{w_h p^h(x)}{\sum_{k=1}^H w_k p^k(x)}\left[\frac{\partial}{\partial x_j}\log p^h(x_j\mid \pa_j^h)+\sum_{i\in\ch^h_j}\frac{\partial}{\partial x_j}\log p^h(x_i\mid\pa^h_i)\right] \notag \\
        &=\sum_{h=1}^H\frac{w_h p^h(x)}{\sum_{k=1}^H w_k p^k(x)}\left[\frac{\partial}{\partial x_j} \log p^h_{N_j}\left(x_j - f_j^h(\pa_j^h)\right) + \sum_{i\in\ch_j^h}\frac{\partial}{\partial x_j}\log p^h_{N_i}\left(x_i - f_i^h(\pa_i^h)\right)\right] \label{eq:loglikelihood_expanded}
    \end{align}

    \paragraph{Condition (i).}
    First we will prove condition (i).
    That is, given a leaf node $X_j$ in all DAGs $G^h$, $X_j$ is not a shifted node (i.e. an invariant node) if and only if  $\Var(\frac{\partial s_j(x)}{\partial x_j}) = 0$.

    If $x_j$ is a leaf node in all the DAGs $G^h$, then $\ch_j^h = \emptyset, \forall h\in [H]$, and we can write eq.\eqref{eq:loglikelihood_expanded} as:
    \begin{align*}
        s_j(x) = \sum_{h=1}^H\frac{w_h p^h(x)}{\sum_{k=1}^H w_k p^k(x)} \frac{\partial}{\partial x_j}\log p^h_{N_j}\left(x_j - f_j^h(\pa_j^h)\right)
    \end{align*}

    We use $\Den(\frac{\partial s_j(x)}{\partial x_j})$ and $\Num(\frac{\partial s_j(x)}{\partial x_j})$ to denote the denominator and numerator of $\frac{\partial s_j(x)}{\partial x_j}$, respectively.
    Then we have:
    \begin{align*}
        \Den(\frac{\partial s_j(x)}{x_j}) &= \left(\sum_{k=1}^H w_k p^k(x)\right)^2\\
        \Num(\frac{\partial s_j(x)}{x_j})
        &=\left[\sum_{h=1}^H w_h p^h(x)\frac{\partial^2}{\partial x_j^2} \log p^h_{N_j}(x_j-f_j^h(\pa_j^h)) + w_h p^h(x)\left(\frac{\partial}{\partial x_j}\log p^h_{N_j}(x_j-f_j^h(\pa_j^h))\right)^2\right] \\
        &\times \left[\sum_{k=1}^Hw_k p^k(x)\right] -\left[\sum_{h=1}^H w_h p^h(x) \frac{\partial}{\partial x_j} \log p^h_{N_j}(x_j - f_j^h(\pa_j^h))\right]^2
    \end{align*}
    Now, dividing $\Num(\frac{\partial s_j(x)}{\partial x_j})$ over $\Den(\frac{\partial s_j(x)}{\partial x_j})$, we obtain:
    \begin{align}
    \label{proof:eq:partial_score}
        \frac{\partial s_j(x)}{\partial x_j} = \frac{\Num(\frac{\partial s_j(x)}{x_j})}{\Den(\frac{\partial s_j(x)}{x_j})} 
        &= \sum_{h=1}^H\frac{w_h p^h(x)}{\sum_{k=1}^H w_k p^k(x)}\frac{\partial^2}{\partial x_j^2}\log p^h_{N_j}(x_j - f_j^h(\pa_j^h)) \notag \\
        &+ \sum_{h=1}^H\frac{w_h p^h(x)}{\sum_{k=1}^H w_k p^k(x)}\left(\frac{\partial}{\partial x_j}\log p^h_{N_j}(x_j-f_j^h(\pa_j^h))\right)^2 \notag\\
        &- \left[\sum_{h=1}^H\frac{w_h p^h(x)}{\sum_{k=1}^H w_k p^k(x)}\frac{\partial}{\partial x_j}\log p^h_{N_j}(x_j-f_j^h(\pa_j^h))\right]^2
    \end{align}
    Note that since $x_j \notin \pa_j^h$, the function $f_j^h(\pa_j^h)$ is independent of $x_j$.
    
    Let $a_h = w_h p^h(x)$, and let $b_h = \frac{\partial}{\partial x_j}\log p^h_{N_j}(x_j - f_j^h(\pa_j^h))$. 
    Then, the last two summands of the RHS of eq.\eqref{proof:eq:partial_score} can be written as:
    \begin{align}
    \label{eq:applied_lemma1}
        \frac{1}{\left(\sum_{h=1}^H a_h\right)^2} \left[\left(\sum_{h=1}^H a_h b_h^2\right) \left(\sum_{h=1}^H a_h\right) - \left(\sum_{h=1}^H a_h b_h\right)^2\right]\ge 0,
    \end{align}    
    where the last inequality holds from Lemma \ref{lemma:node}. 
    Then, by Lemma \ref{lemma:bn_to_dist}, we have that $b_h = b_{h'} \iff \mathbb{P}^h(X_j\mid \pa_j^h) = \mathbb{P}^{h'}(X_j\mid \pa_j^{h'})$ for all $h,h'\in [H]$. Then if $b_h=b_{h'}$ holds, by Lemma \ref{lemma:p_to_c}, we have $c_j^h=c_j^{h'}\coloneqq c_j$ and then the first term for eq.\eqref{proof:eq:partial_score} boils down to a constant $c_j$.
    Finally, from Lemma \ref{lemma:node}, we have that equality in eq.\eqref{eq:applied_lemma1} holds if and only if $b_h = b_{h'}$ for all $h,h'\in [H]$.
    Thus, we conclude that:
    \begin{align*}
        \text{If } X_j \text{ is a leaf node for all $G^h$, then } X_j \text{ is not a shifted node} \iff \frac{\partial s_j(x)}{\partial x_j} = c_j,
    \end{align*}
    where $\frac{\partial s_j(x)}{\partial x_j} = c_j$ is equivalent to $\Var_q(\frac{\partial s_j(x)}{\partial x_j}) = 0.$
    
    \paragraph{Condition (ii).}
    We now prove that if $\Var_q(\frac{\partial s_j(x)}{\partial x_j}) > 0$, then only one of the following two cases holds: 
    Case 1) $X_j$ is a leaf node for all $G^h$ and a shifted node. 
    Case 2) $X_j$ is not a leaf node in at least one DAG $G^h$. 
    
    Case 1 follows immediately from the proof of condition (i) above.
    
    For Case 2, we study whether there exists a non-leaf node $X_j$ with $\Var_q(\frac{\partial s_j(x)}{\partial x_j}) = 0$. 
    Taking the partial derivative of $s_j(x)$ in eq.\eqref{eq:loglikelihood_expanded} w.r.t. $x_j$, we have:
    \begin{align*}
        \frac{\partial s_j(x)}{\partial x_j}
        &= \sum_{h=1}^H\frac{w_hp^h(x)}{\sum_{k=1}^Hw_kp^k(x)}\left(\frac{\partial^2}{\partial x_j^2}\log p^h_{N_j}(x_j-f_j^h(\PA_j^h))+\sum_{i\in\ch_j^h}\frac{\partial^2}{\partial x_j^2}\log p^h_{N_i}(x_i-f_i^h(\PA_i^h))\right)\\
        &+\sum_{h=1}^H\frac{w_hp^h(x)}{\sum_{k=1}^Hw_kp^k(x)}\left(\frac{\partial}{\partial x_j}\log p^h_{N_j}(x_j-f_j^h(\PA_j^h))+\sum_{i\in\ch_j^h}\frac{\partial}{\partial x_j}\log p^h_{N_i}(x_i-f_i^h(\PA_i^h))\right)^2\\
        &-\left[\sum_{h=1}^H\frac{w_hp^h(x)}{\sum_{k=1}^Hw_kp^k(x)}\left(\frac{\partial}{\partial x_j}\log p^h_{N_j}(x_j-f_j^h(\PA_j^h))+\sum_{i\in\ch_j^h}\frac{\partial}{\partial x_j}\log p^h_{N_i}(x_i-f_i^h(\PA_i^h))\right)\right]^2
    \end{align*}
    By Assumptions \ref{assup:noise}, we have
    $\frac{\partial^2}{\partial x_j^2}\log p^h_{N_j}(x_j-f_j^h(\pa_j^h)) = c_j^h$. For simplicity, let  $a_h = \frac{\partial}{\partial x_j}\log p^h_{N_j}(x_j-f_j^h(\pa_j^h))+\sum_{i\in\ch_j^h}\frac{\partial}{\partial x_j}\log p^h_{N_i}(x_i-f_i^h(\pa_i^h))$. Then, we have:
    \begin{align}
    \label{eq:two_terms}
        \frac{\partial s_j(x)}{\partial x_j} = \sum_{h=1}^H\frac{w_hp^h(x)}{\sum_{k=1}^H w_kp^k(x)}c_j^h
        &+ \underbrace{\sum_{h=1}^H\frac{w_hp^h(x)}{\sum_{k=1}^H w_kp^k(x)}\sum_{i\in\ch_j^h}\frac{\partial^2}{\partial x_j^2}\log p^h_{N_i}(x_i-f_i^h(\pa_i^h))}_\text{term 1} \notag\\
        &+\underbrace{\sum_{h=1}^H\frac{w_hp^h(x)}{\sum_{k=1}^Hw_kp^k(x)}a_h^2 - \left(\sum_{h=1}^H\frac{w_hp^h(x)}{\sum_{k=1}^Hw_kp^k(x)}a_h\right)^2}_\text{term 2}.
    \end{align}
    We prove that $\frac{\partial^2}{\partial x_j^2}\log p^h_{N_i}(x_i-f_i^h(\pa_i^h))$, is not constant under any circumstance, by contradiction. 
    Let $G^h$ be an environment's DAG where $X_j$ is not a leaf, and let $X_u \in \ch_j^h$ such that $X_u \notin \cup_{i \in \ch_j^h} \pa^h_i$. 
    Note that $X_u$ always exist since $X_j$ is not a leaf, and it suffices to pick a child $X_u$ appearing at the latest position in the topological order of $G^h$. 
    Now suppose that $\frac{\partial^2}{\partial x_j^2}\log p^h_{N_u}(x_u-f_u^h(\pa_u^h)) = a$, where $a$ is a constant.
    Then we have:
    \begin{align*}
        \frac{\partial}{\partial x_j}\log p^h_{N_u}(x_u-f_u^h(\pa_u^h)) &= ax_j+g(x_{-j}),\\
        \frac{\partial}{\partial x_j}f_u^h(\pa_u^h) \cdot \frac{\partial}{\partial n_u}\log p^h_{N_u}(n_u) &= ax_j+g(x_{-j}).
    \end{align*}
    By deriving both sides w.r.t. $x_u$, we obtain:
    \begin{align*}
        \frac{\partial}{\partial x_j}f_u^h(\pa_u^h)\cdot\frac{\partial^2}{\partial n_u^2}\log p^h_{N_u}(n_u)&=\frac{\partial g(x_{-j})}{\partial x_u}\\
        \frac{\partial}{\partial x_j}f_u^h(\pa_u^h)\cdot c_j^h&=\frac{\partial g(x_{-j})}{\partial x_u}.
    \end{align*}
    Since the RHS does not depend on $x_j$, then $\frac{\partial f^h_u}{\partial x_j}$ cannot depend on $x_j$ neither, implying that $f_u^h$ is linear in $x_j$, thus contradicting the non-linearity assumption (Assumption \ref{assup:non_linear}). Consequently, it becomes evident that term 1 cannot be a constant, regardless of whether the node $X_j$ has undergone a shift or not.
    
    Now let us take a look to term 2 in eq.\eqref{eq:two_terms}.
    We have:
    \begin{align*}
        \sum_{h=1}^H\frac{w_hp^h(x)}{\sum_{k=1}^Hw_kp^k(x)}a_h^2 - \left(\sum_{h=1}^H\frac{w_hp^h(x)}{\sum_{k=1}^Hw_kp^k(x)}a_h\right)^2\ge 0,
    \end{align*}
    where the inequality follows by Jensen's inequality. 
    Thus we conclude that if $X_j$ is a non-leaf node, we have $\Var_q(\frac{\partial s_j(x)}{\partial x_j}) > 0$.
\end{proof}

\subsection{Proof of Theorem \ref{thm:consistent_foci}}
\label{app:proof_foci}

To proof the theorem we will need the following assumptions:

\begin{assumption}
\label{assump:foci}
Let $\MB^h_j$ denote the Markov Blanket of node $X_j$ under environment $h$, then assume
    \begin{itemize}
        \item There are non-negative real number $\beta$ and $C$ such that for any subset $X_{\bm S}\subseteq \Pre(X_j^h)$ of size $\le1/\delta+2$, any $x,x'\in\mathbb{R}^{|X_{\bm S}|}$ and any $t\in \mathbb{R}$,
        \begin{align*}
            \mid\mathbb{P}(X_j^h\ge t\mid X_{\bm S} = x)-\mathbb{P}(X_j^h\ge t\mid X_{\bm S} = x')\mid\le C(1+\|x\|^\beta+\|x'\|^\beta)\|x-x'\|
        \end{align*}
        where $|X_{\bm S}|$ is the size of the set $X_{\bm S}$.
        \item There are positive numbers $C_1$ and $C_2$ such that for any $X_{\bm S}$ of size $\le 1/\delta+2$ and any $t>0$, $\mathbb{P}(\|X_{\bm S}\|\ge t )\ge C_1e^{-C_2t}$
        \item  For any subset $X_{\bm S}\subseteq \Pre(X_j^h)$ such that $X_{\bm S}\subsetneq\MB_j^h$, there exists $X_i$ with $X_i\in \MB_j^h\backslash X_{\bm S}$, such that for any $X_j$ with $X_j\notin\MB_j^h$,
        \begin{align*}
            Q(X_{\bm S}\cup \{X_i\}) - Q(X_{\bm S}\cup \{X_j\})\ge \delta/4\qquad
            Q(X_{\bm S})=\int\Var(\mathbb{P}(X_j^h\ge t\mid X_S))\text{d}\mu(t)
        \end{align*}
    \end{itemize}
    where $\delta$ is the largest number such that for any subset $X_{\bm S}$ from $\Pre(X_j^h)$, there is some $X_i \notin X_{\bm S}$ such that $Q(X_{\bm S}\cup \{X_i\})\ge Q(X_{\bm S})+\delta$.
\end{assumption}

\begin{proof}
    
    Under Assumption \ref{assump:foci}, from Theorem 3.1 in \cite{azadkia2021fast}
    , we have
    $$
    \mathbb{P}(\widehat{\MB}^h_j=\MB^h_j)\ge 1-C_3e^{-C_4 m}
    $$
    where $C_3$ and $C_4$ are constants that depend only on the data generation process, and $m$ is the number of samples.
    Since the estimated topological order $\hat\pi$ is assumed to be valid for all environments, we can conclude that node $X_j$ is a leaf node in the input data $\{\Pre(X_j^h), X_j^h\}$ for all $h$. As a result, we have $\MB^h_j = \pa^h_j$ based on the Markov blanket definition. Therefore, the output of Algorithm \ref{alg:foci} is equal to the true parent set $\pa^h_j$ with high probability.
\end{proof}

\subsection{Proof of Theorem \ref{thm:coef_equal} in Appendix \ref{app:functional_shifts}}
\label{app:proof_functional_shifts}
To prove Theorem \ref{thm:coef_equal} we will make use of the following lemmas.
\begin{lemma}
\label{lemma:beta_equal}
Suppose $\bm X$ is an $n \times k_x$ dimension matrix, $\bm Y$ is an $n \times k_y$ dimension matrix. Let the columns of the concatenated matrix $\bm Z=(\bm X, \bm Y)$ be linearly independent. Consider $\Tilde{\beta}_x$, a $k_x$-dimensional vector, and $\Tilde{\beta}_y$ and $\beta_y$, both $k_y$-dimensional vectors. If $\bm X\Tilde{\beta}_x+\bm Y\Tilde{\beta}_y = \bm Y\beta_y$, then it follows that $\Tilde{\beta}_y=\beta_y$ and $\Tilde{\beta}_x=\mathbf{0}$.
\end{lemma}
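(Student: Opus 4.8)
The plan is to recast the given identity as a single vanishing linear combination of the columns of $\bm Z$ and then invoke the linear-independence hypothesis directly. First I would collect all terms on one side: starting from $\bm X\Tilde{\beta}_x+\bm Y\Tilde{\beta}_y = \bm Y\beta_y$, subtract $\bm Y\beta_y$ to obtain
\[
\bm X\Tilde{\beta}_x + \bm Y(\Tilde{\beta}_y - \beta_y) = \mathbf{0}.
\]
The point of this rearrangement is that the right-hand side is now the zero vector, so the left-hand side is a genuine linear combination of the columns of $\bm X$ and $\bm Y$ that sums to zero.

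Next I would recognize the left-hand side as a product of the partitioned matrix $\bm Z = (\bm X, \bm Y)$ with a single stacked coefficient vector. Writing $\bm c = ((\Tilde{\beta}_x)^\top, (\Tilde{\beta}_y - \beta_y)^\top)^\top \in \mathbb{R}^{k_x + k_y}$, the displayed equation reads exactly $\bm Z\,\bm c = \mathbf{0}$. The key step is then the hypothesis that the $k_x + k_y$ columns of $\bm Z$ are linearly independent: by definition this means the only solution of $\bm Z\,\bm c = \mathbf{0}$ is $\bm c = \mathbf{0}$ (equivalently, the map $\bm c \mapsto \bm Z\,\bm c$ is injective, as $\bm Z$ has full column rank). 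Forcing $\bm c = \mathbf{0}$ makes both blocks vanish, yielding $\Tilde{\beta}_x = \mathbf{0}$ and $\Tilde{\beta}_y - \beta_y = \mathbf{0}$, i.e.\ $\Tilde{\beta}_y = \beta_y$, which is the claim.

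I do not expect any substantive obstacle here; the result is an immediate consequence of the definition of linear independence applied to the column-partitioned matrix $\bm Z$. The only point that warrants a line of care is the bookkeeping of the concatenation, namely verifying that the coefficient attached to each column of $\bm X$ is the corresponding entry of $\Tilde{\beta}_x$ and the coefficient attached to each column of $\bm Y$ is the corresponding entry of $\Tilde{\beta}_y - \beta_y$, so that the single hypothesis on all $k_x + k_y$ columns simultaneously pins down every coefficient and both conclusions follow at once.
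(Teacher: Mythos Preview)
Your proof is correct and matches the paper's argument essentially line for line: both rewrite the hypothesis as $\bm Z\begin{pmatrix}\Tilde{\beta}_x\\ \Tilde{\beta}_y-\beta_y\end{pmatrix}=\mathbf{0}$ and then invoke the full column rank of $\bm Z$ to conclude the stacked coefficient vector is zero.
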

\begin{proof}
    \begin{align*}
        \bm X\Tilde{\beta}_x+\bm Y\Tilde{\beta}_y - \bm Y\beta_y= (\bm X,\bm Y)
\begin{pmatrix}
\Tilde{\beta}_x\\
\Tilde{\beta}_y
\end{pmatrix}-(\bm X,\bm Y)
\begin{pmatrix}
\mathbf{0}_{k_x}\\
\beta_y
\end{pmatrix}=\bm Z \begin{pmatrix}
\Tilde{\beta}_x\\
\Tilde{\beta}_y-\beta_y
\end{pmatrix}=0
    \end{align*} 
Since $\bm Z$ has full column rank, then the null space of $\bm Z$ is $\mathbf{0}$, which implies $\Tilde{\beta}_x = \mathbf{0}$, $\Tilde{\beta}_y = \beta_y$.
\end{proof}

\begin{lemma}
\label{lemma:regress_invariant}
For any $h \in [H]$, if
$$\sum_{k\in\Pre(X_j)}\Psi_{jk}\Tilde{\beta}_{jk}^h=\sum_{k\in\pa^h_j}\Psi_{jk}\beta_{jk}^h,$$
then $\Tilde{\beta}_{jk}^h=\beta_{jk}^h$ if $k\in \pa^h_j$, and $\Tilde{\beta}_{jk}^h = 0$ if $k\notin \pa^h_j$.
\end{lemma}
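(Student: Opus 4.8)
The plan is to reduce this statement directly to the preceding algebraic fact, Lemma \ref{lemma:beta_equal}, by splitting the predecessor set of $X_j$ into the true parents and the remaining non-parents. Because the estimated topological order is valid, every parent lies before $X_j$, so $\pa^h_j \subseteq \Pre(X_j)$ and we may partition $\Pre(X_j) = \pa^h_j \,\cup\, \big(\Pre(X_j)\setminus \pa^h_j\big)$. This partition is the entire conceptual content of the argument; everything else is a rewriting into the matrix form of Lemma \ref{lemma:beta_equal}.

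Concretely, I would stack the design/basis blocks $\Psi_{jk}$ into two matrices consistent with the partition: let $\bm Y$ be the horizontal concatenation of $\{\Psi_{jk}\}_{k\in\pa^h_j}$ and let $\bm X$ be the concatenation of $\{\Psi_{jk}\}_{k\in\Pre(X_j)\setminus\pa^h_j}$, with the coefficient vectors $\Tilde{\beta}_y,\Tilde{\beta}_x$ collecting the entries $\Tilde{\beta}^h_{jk}$ over the parent and non-parent blocks respectively, and $\beta_y$ collecting $\beta^h_{jk}$ over $k\in\pa^h_j$. Under this identification the hypothesis $\sum_{k\in\Pre(X_j)}\Psi_{jk}\Tilde{\beta}^h_{jk} = \sum_{k\in\pa^h_j}\Psi_{jk}\beta^h_{jk}$ is exactly the relation $\bm X\Tilde{\beta}_x + \bm Y\Tilde{\beta}_y = \bm Y\beta_y$ of Lemma \ref{lemma:beta_equal}. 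Invoking that lemma then gives $\Tilde{\beta}_y = \beta_y$ and $\Tilde{\beta}_x = \mathbf{0}$, which, translated back through the partition, reads precisely $\Tilde{\beta}^h_{jk} = \beta^h_{jk}$ for $k\in\pa^h_j$ and $\Tilde{\beta}^h_{jk} = 0$ for $k\notin\pa^h_j$, as claimed.

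The main obstacle is not the algebra, which is an immediate re-indexing, but verifying the hypothesis that Lemma \ref{lemma:beta_equal} requires: the columns of the \emph{full} stacked design $\bm Z=(\bm X,\bm Y)$, i.e.\ of all predecessor blocks $\{\Psi_{jk}\}_{k\in\Pre(X_j)}$ jointly, must be linearly independent. This is a condition across blocks, not merely within each block, so the step where care is needed is in justifying that the basis expansions associated with distinct predecessors do not span overlapping directions. I would discharge this by appealing to the standing regularity assumption on the basis functions $\Psi_{jk}$ in the functional-shift model (generically satisfied for distinct nonlinear mechanisms with sufficiently many samples), since it is exactly the full-column-rank premise that makes the null space of $\bm Z$ trivial and hence forces the coefficient identity.
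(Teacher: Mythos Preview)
Your proposal is correct and follows essentially the same route as the paper: partition $\Pre(X_j)$ into $\pa^h_j$ and its complement, stack the corresponding $\Psi_{jk}$ blocks into $\bm Y$ and $\bm X$, rewrite the hypothesis as $\bm X\Tilde{\beta}_x+\bm Y\Tilde{\beta}_y=\bm Y\beta_y$, and invoke Lemma \ref{lemma:beta_equal}. Your identification of the cross-block linear independence of $\bm Z=(\bm X,\bm Y)$ as the one nontrivial premise is also exactly how the paper handles it---it is asserted as a standing property of the basis functions over $\Pre(X_j)$ rather than derived.
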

\begin{proof}
    Rearrange the set $\Pre(X_j)$ so that $\Pre(X_j) = \{X_{k_1},X_{k_2},\dots,X_{k_m}, X_{k_{m+1}},\dots,X_{k_p}\}$, where $\{X_{k_1}, \dots, X_{k_m}\} = \Pre(X_j)\setminus \pa_j$, and $\{X_{k_{m+1}},\dots,X_{k_p}\}=\pa_j$. 
    Then let $\bm X = (\Psi_{k_1},\dots,\Psi_{k_m})$, $\bm Y = (\Psi_{k_{m+1}},\dots,\Psi_{k_p})$, and $\bm Z = (\bm X,\bm Y)$. 
    By the linear independence property of the basis functions of $\Pre(X_j)$, we have that the columns of $\bm Z$ are linearly independent. Also, let 
    \begin{align*}
        \Tilde{\beta}_x = \begin{pmatrix}
    {\Tilde{\beta}_{jk_1}^h}\\
    \vdots \\
    {\Tilde{\beta}_{jk_m}^h}
\end{pmatrix},\quad
\Tilde{\beta}_y = \begin{pmatrix}
    {\Tilde{\beta}_{jk_{m+1}}^h}\\
    \vdots \\
    {\Tilde{\beta}_{jk_p}^h}
\end{pmatrix},\quad
{\beta}_y = \begin{pmatrix}
    {{\beta}_{jk_{m+1}}^h}\\
    \vdots \\
    {{\beta}_{jk_p}^h}
\end{pmatrix}
    \end{align*}
Then we must have:
\begin{align*}
\sum_{k\in\Pre(X_j)}\Psi_{jk}\Tilde{\beta}_{jk}^h=\sum_{k\in\pa^h_j}\Psi_{jk}\beta_{jk}^h \quad\Rightarrow \quad\bm X\Tilde{\beta}_x+\bm Y\Tilde{\beta}_y = \bm Y\beta_y.
\end{align*}
    Then by Lemma \ref{lemma:beta_equal}, we have $\Tilde{\beta}_{jk}^h=\beta_{jk}^h$ if $k\in \pa^h_j$, $\Tilde{\beta}_{jk}^h = 0$ if $k\notin \pa^h_j$.
\end{proof}

\begin{proof}[Proof of Theorem \ref{thm:coef_equal}]
   We know that $\Pre(X_j)$ contains the ancestors of $X_j$. 
   Then, in environment $h$, we have:
    \begin{align*}
        \mathbb{E}_{p^h}[X_j\mid \Pre(X_j)]
        &=\mathbb{E}_{p^h}[f_j^h(\pa^h_j)\mid \Pre(X_j)] + \mathbb{E}_{p^h}[N_j\mid \Pre(X_j)]\\
        &=f_j^h(\pa^h_j),
    \end{align*}
    where the last equality follows since the first conditional expectation is equal to $f_j^h(\pa_j)$, due to $\pa_j\subseteq \Pre(X_j)$. 
    Moreover, in the second conditional expectation term, we have that $N_j$ is marginally independent of $\Pre(X_j)$ by the d-separation criterion.
    Thus the conditional expectation of $N_j$ equals to the marginal expectation of $N_j$, which is 0. 
    Finally, 
    \begin{align*}
    \sum_{k\in\Pre(X_j)} \Psi_{jk}\Tilde{\beta}_{jk}^h = \mathbb{E}_{p^h}[X_j\mid \Pre(X_j)] = f_j^h(\pa^h_j)=\sum_{k\in\pa^h_j}\Psi_{jk}\beta_{jk}^h
    \end{align*}
    By Lemma \ref{lemma:regress_invariant}, we have $\Tilde{\beta}_{jk}^h=\beta_{jk}^h\ $ if $k\in \pa^h_j$, $\Tilde{\beta}_{jk}^h = 0\ $ if $k\notin \pa^h_j$.
\end{proof}

\section{Additional Experiments}
\label{appendix:experiment}

This section provides a thorough evaluation of the pipeline of our method. We begin by assessing the performance of our method in detecting the shifted nodes. Subsequently, we extend the evaluation to include the recovery of the structurally shifted edges.

\subsection{Experiments on detecting shifted nodes}
\label{app:node_add_expts}

\textbf{Graph models.} We ran experiments by generating adjacency matrices using the Erdős–Rényi (ER) and Scale free (SF) graph models. 
For a given number of variables $d$, ER$k$ and SF$k$ indicate an average number of edges equal to $kd$.

\textbf{Data generation process.}
We first sampled a Directed Acyclic Graph (DAG) according to either the Erdős-Rényi (ER) model or the Scale-Free (SF) model for environment $\gE_1$.

For environment $\gE_2$, we used the same DAG structure as in environment $\gE_1$, ensuring a direct comparison between the two environments. To introduce artificial shifted nodes, we randomly selected $0.2 \cdot d$ nodes from the non-root nodes, where $d$ represents the total number of nodes in the DAG. These selected nodes were considered as the "shifted nodes," denoted as $S$, with $|S| = 0.2d$. 

The functional relationship between a node $X_j$ and its parents in environment $\gE_1$ was defined as follows:
\begin{align*}
    X_j = \sum_{i\in\pa_j} \sin(X_i^2) + N_j,
\end{align*}
while for environment $\gE_2$, we defined the functional relationships between each node and its parents by:
\begin{align*}
    X_j = \begin{cases}
    \sum_{i\in\pa_j} \sin(X_i^2)+N_j, \quad &\text{if} \quad X_j\notin S,\\	
    \sum_{i\in\pa_j} 4 \cos(2X_i^2 - 3X_i)+N_j, \quad &\text{if} \quad X_j\in S.
    \end{cases}
\end{align*}

\textbf{Experiment detail.} In each simulation, we generated 500 data points, with the variances of the noise set to 1. We conducted  30 simulations for each combination of graph type, noise type, and number of nodes. The running time was recorded by executing the experiments on an Intel Xeon Gold 6248R Processor with 8 cores. For our method, we used the hyperparameters $\texttt{eta\_G}=0.005$, $\texttt{eta\_H}=0.005$, and threshold $t = 2$ (see Algorithm \ref{alg:practical_shift_node}). 

\textbf{Evaluation.} We conducted a comparative analysis to evaluate the performance of our method in detecting shifted nodes compared to DCI. The evaluation was based on F1 score, precision, and recall as the evaluation metrics. Furthermore, we examined the robustness of our method by conducting tests using Gumbel and Laplace as noise distributions. 

Figures \ref{fig:main_shifted_nodes_2}, \ref{fig:main_shifted_nodes_4}, and \ref{fig:main_shifted_nodes_6} illustrate our method's performance across varying numbers of nodes and sparsity levels of the graphs. Our method consistently outperformed DCI in terms of F1 score, precision, and recall.

\begin{figure}[!htbp]
    \centering
    \includegraphics[width=.95\textwidth]{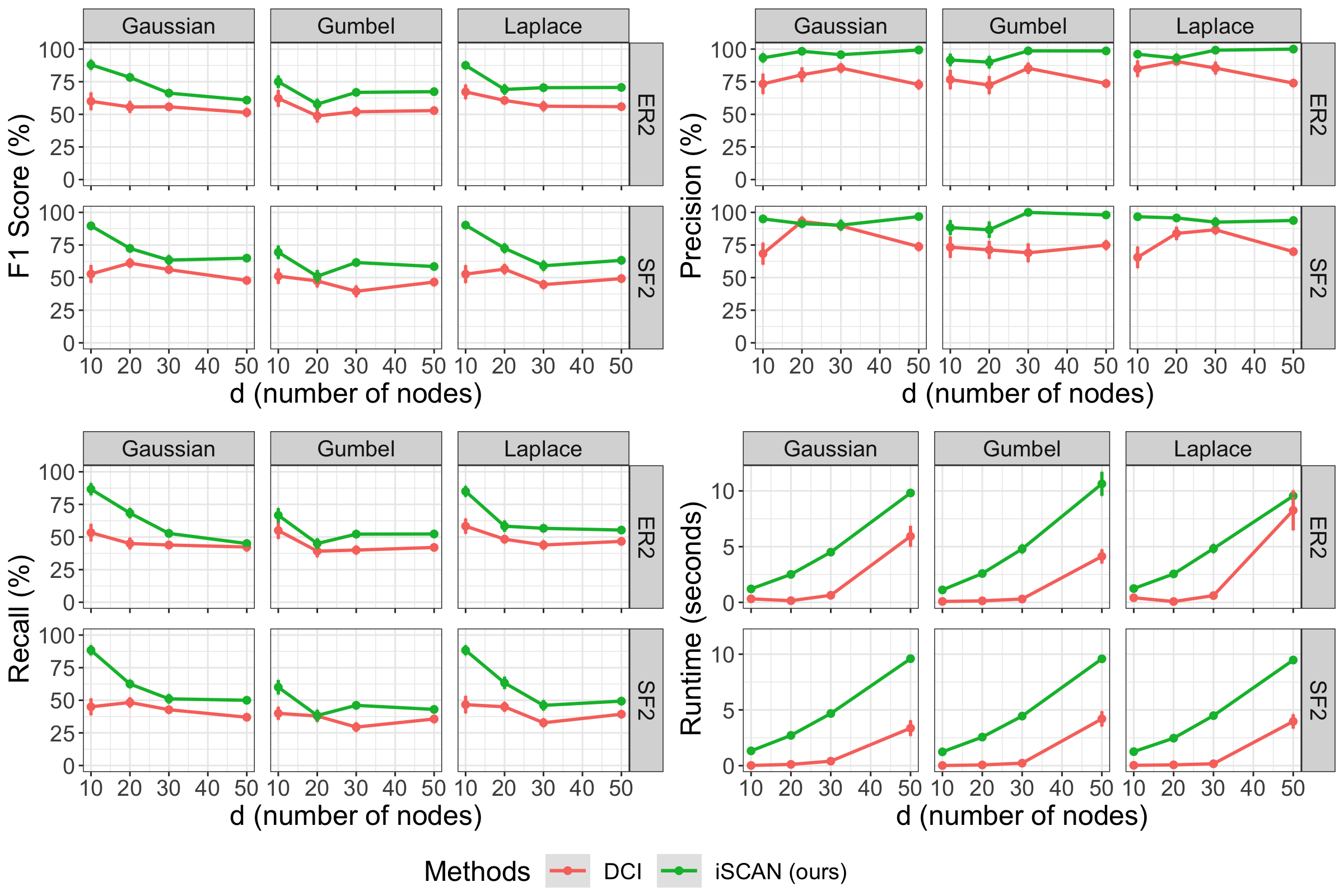}
    \caption{Shifted nodes detection in ER2 and SF2 graphs. 
    For each point, we conducted 30 simulations as described in Section \ref{app:node_add_expts}. 
    The points indicate the average values obtained from these simulations, while the error bars depict the standard errors. 
    For each simulation, 500 samples were generated. Our method iSCAN (green) consistently outperformed DCI (red) in terms of F1 score, precision, and recall.}
    \label{fig:main_shifted_nodes_2}
\end{figure}

\begin{figure}[!htbp]
    \centering
    \includegraphics[width=.95\textwidth]{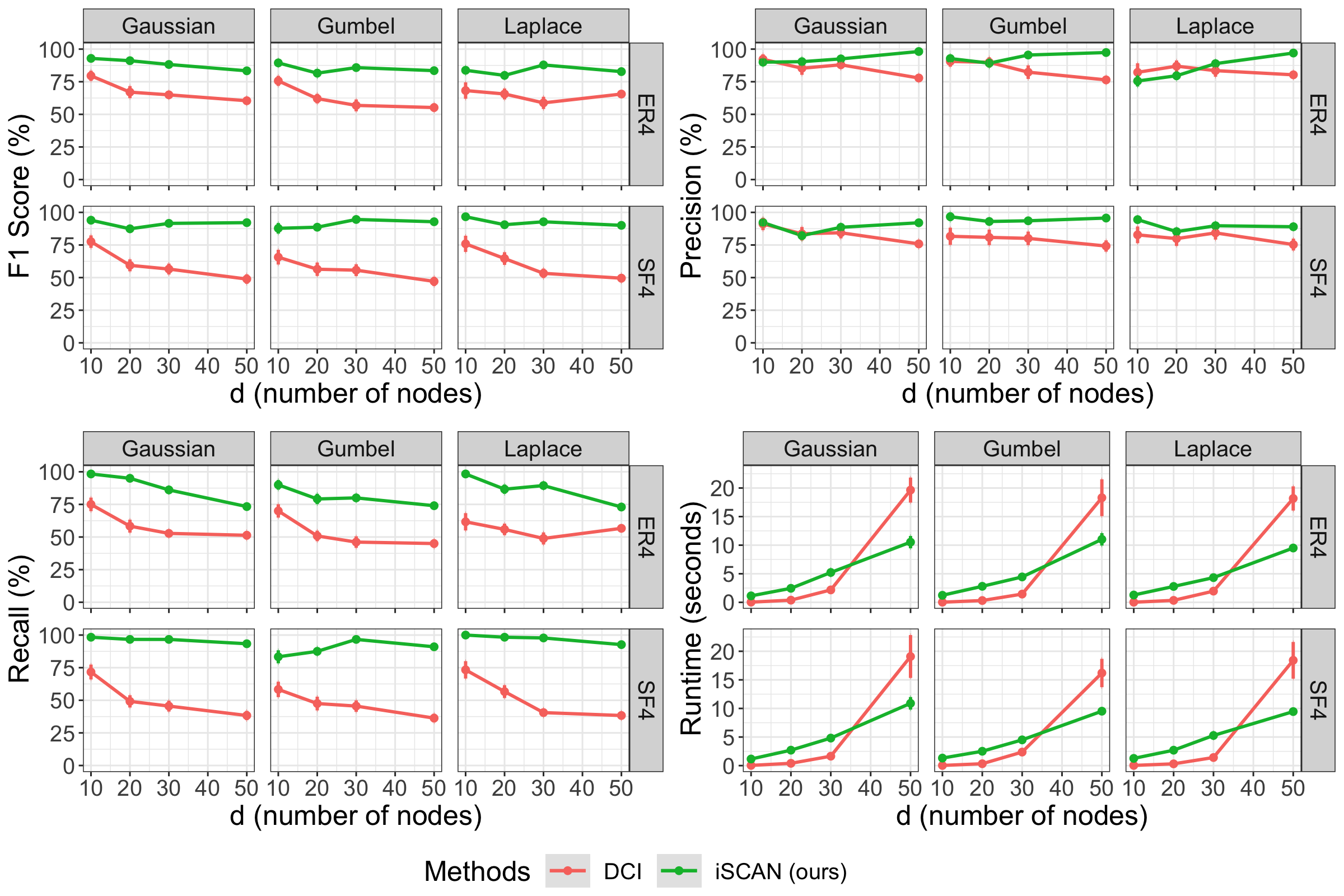}
    \caption{Shifted nodes detection in ER4 and SF4 graphs. 
    For each point, we conducted 30 simulations as described in Section \ref{app:node_add_expts}. 
    The points indicate the average values obtained from these simulations, while the error bars depict the standard errors. 
    For each simulation, 500 samples were generated. Our method iSCAN (green) consistently outperformed DCI (red) in terms of F1 score, precision, and recall.}
    \label{fig:main_shifted_nodes_4}
\end{figure}

\begin{figure}[!htbp]
    \centering
    \includegraphics[width=.95\textwidth]{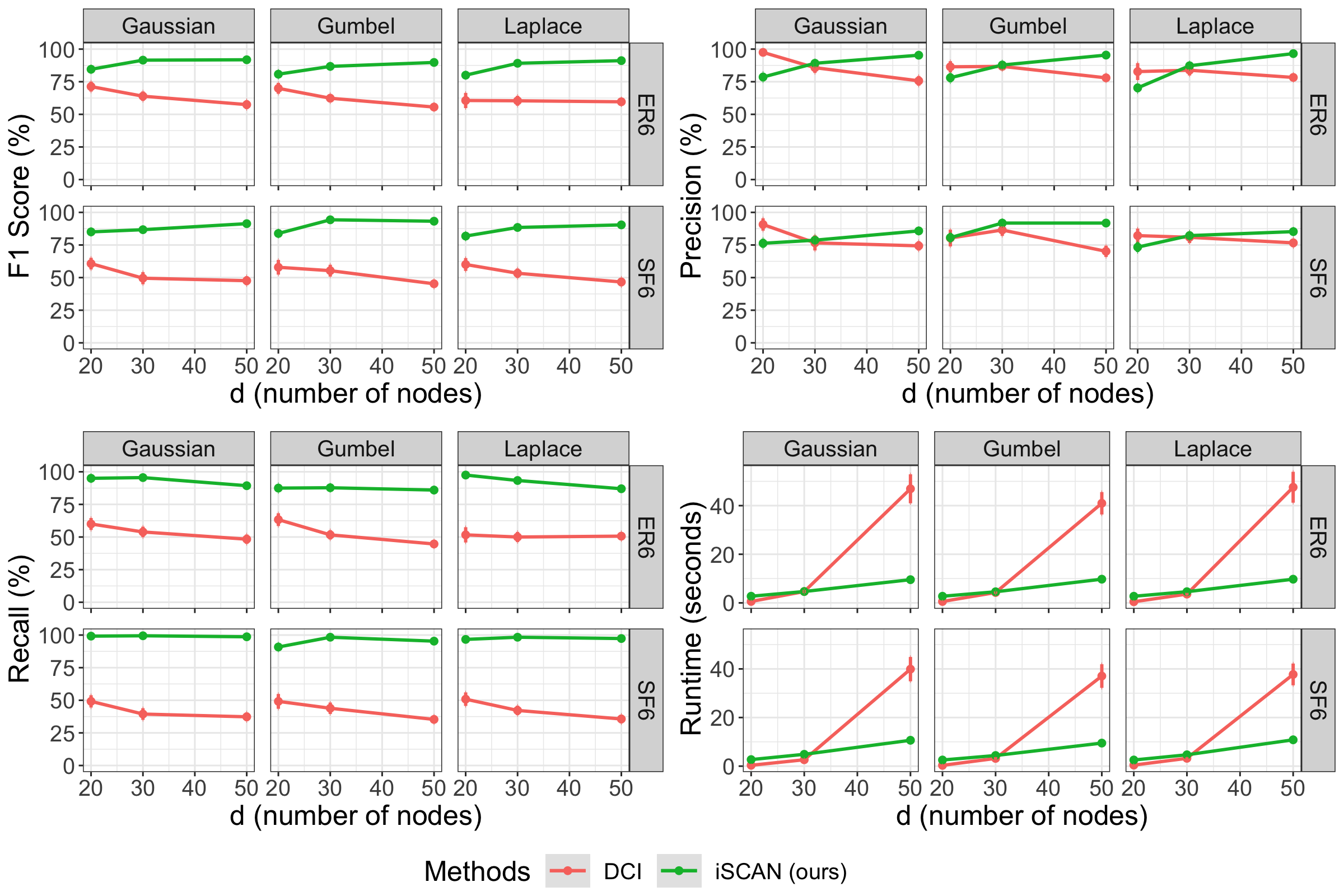}
    \caption{Shifted nodes detection in ER6 and SF6 graphs. 
    For each point, we conducted 30 simulations as described in Section \ref{app:node_add_expts}. 
    The points indicate the average values obtained from these simulations, while the error bars depict the standard errors. 
    For each simulation, 500 samples were generated. Our method iSCAN (green) consistently outperformed DCI (red) in terms of F1 score, precision, and recall.}
    \label{fig:main_shifted_nodes_6}
\end{figure}

\clearpage
\subsubsection{Experiments in detecting shifted nodes from Gaussian process}
\label{app:node_gps}

\textbf{Data generation process.}
We first sampled a DAG according to the ER or SF model.
In our experiment, we considered two environments, $\gE_1$ and $\gE_2$, with the same DAG structure. Each node in the graph had a functional relationship with its parents defined as $X_j = f_j^h(\pa_j^h) + N_j$, where $N_j$ is an independent standard Gaussian variable. Recall that the superscript $h$ denotes the function for environment $\gE_h$.

To introduce shifted nodes, we randomly selected $0.2 \cdot d$ nodes from the non-root nodes, denoted as $S$, to be the shifted nodes. In other words, $|S| = 0.2 d$. For the non-shifted nodes $X_j$ (i.e $j\notin S$), we set $f_j^1 = f_j^2$. However, for each shifted node $X_j$ in $S$, we changed its functional relationship with its parents to $X_j = 2\cdot f_j^2(\pa_j^2) + N_j$.

To test our method in a more general setting involving nonlinear functions, we followed the approach in \cite{rolland2022score,lachapelle2019gradient}. Specifically, for \textit{non-shifted nodes}, we generated the link functions $f_j^1$ by sampling Gaussian processes with a half unit bandwidth RBF kernel, and we set $f_j^2 = f_j^1$. 
\textit{For shifted nodes}, $X_j \in S$, we generated the link functions $f_j^1$ and $f_j^2$ by sampling Gaussian processes with a half unit bandwidth RBF kernel independently. This allowed us to simulate different functional relationships for the shifted nodes across the two environments.

\textbf{Experiment detail.} In each simulation, we generated 1000 data points, with the variances of the noise set to 1. We conducted  30 simulations for each combination of graph type, noise type, and number of nodes. The running time was recorded by executing the experiments on an Intel Xeon Gold 6248R Processor with 8 cores. For our method, we used the hyperparameters $\texttt{eta\_G}=0.005$, $\texttt{eta\_H}=0.005$, and $\texttt{elbow}=\text{True}$ (see Remark \ref{remark:elbow}). 

\textbf{Evaluation.}
We conducted a comparative performance analysis between our proposed Algorithm \ref{alg:shift_node} (iSCAN, green) and the DCI (red) method. The results for ER2 and SF2 graphs under Gaussian, Gumbel, and Laplace noise distributions are shown in Figure \ref{fig:gp_er2_sf2}. In certain cases, our method may underperform DCI in terms of precision, resulting in a lower F1 score. However, it is important to note that our method consistently outperforms DCI in terms of recall score.

Furthermore, Figure \ref{fig:gp_er4_sf4} and Figure \ref{fig:gp_er6_sf6} present the results for ER4/SF4, and ER6/SF6 graphs. In terms of precision, our method exhibits competitive performance and, in many cases, outperforms DCI. Notably, iSCAN consistently surpasses DCI in terms of recall score and F1 score. 

These findings emphasize the strengths of our proposed method in accurately detecting shifted nodes and edges, particularly in terms of recall and overall performance. In denser graphs, our method demonstrates a superior ability to recover shifted nodes compared to DCI. This suggests that our method is well-suited for scenarios where the graph structure is more complex and contains a larger number of nodes and edges. The improved performance of our method in such settings further highlights its potential in practical applications and its ability to handle more challenging tasks.

\paragraph{Top-k precision.}
We have observed that in some cases, the precision of our method underperformed DCI. We attribute this to the elbow method rather than $\stats_L$. To further investigate this, we conducted an analysis using only $\stats_L$ and measured the precision based on different criteria. 
Specifically, we identified nodes as shifted if their $\stats_L$ ranked first, first two, or within the top k, denoted as top-1 precision, top-2 precision, and top-k precision, respectively, where $k=|S|$.

Figure \ref{fig:gp_topk} presents the results of precision for top-1, top-2, and top-k criteria under various graph models and noise combinations. 
In most cases, the precision exceeds $80\%$ and even approaches $100\%$. 
These results indicate that when using $\stats_L$ alone, our method still provides accurate information about shifted nodes. The findings suggest that the lower precision observed in Figure \ref{fig:gp_er2_sf2} can be attributed to the elbow strategy rather than the effectiveness of $\stats_L$. Overall, this analysis strengthens the reliability and usefulness of $\stats_L$ in accurately identifying shifted nodes in our method.

\begin{figure}[!htbp]
    \centering
    \includegraphics[width=.95\textwidth]{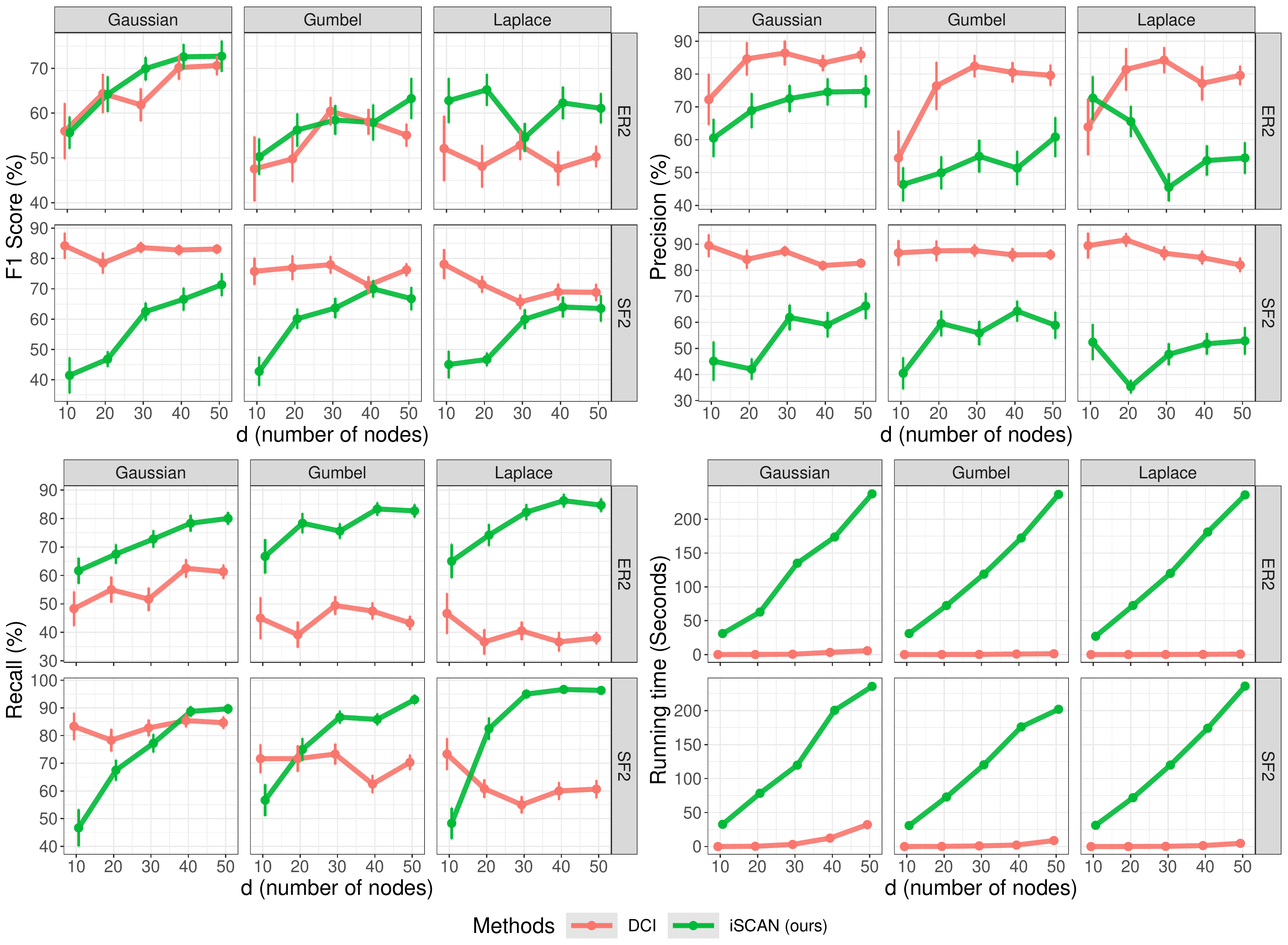}
    \caption{Experiments on detection of shifted nodes in ER2/SF2 graphs using Gaussian processes. Details described in Appendix \ref{app:node_gps}. The error bars represent the standard errors.}
    \label{fig:gp_er2_sf2}
\end{figure}

\begin{figure}[!htbp]
    \centering
    \includegraphics[width=.95\textwidth]{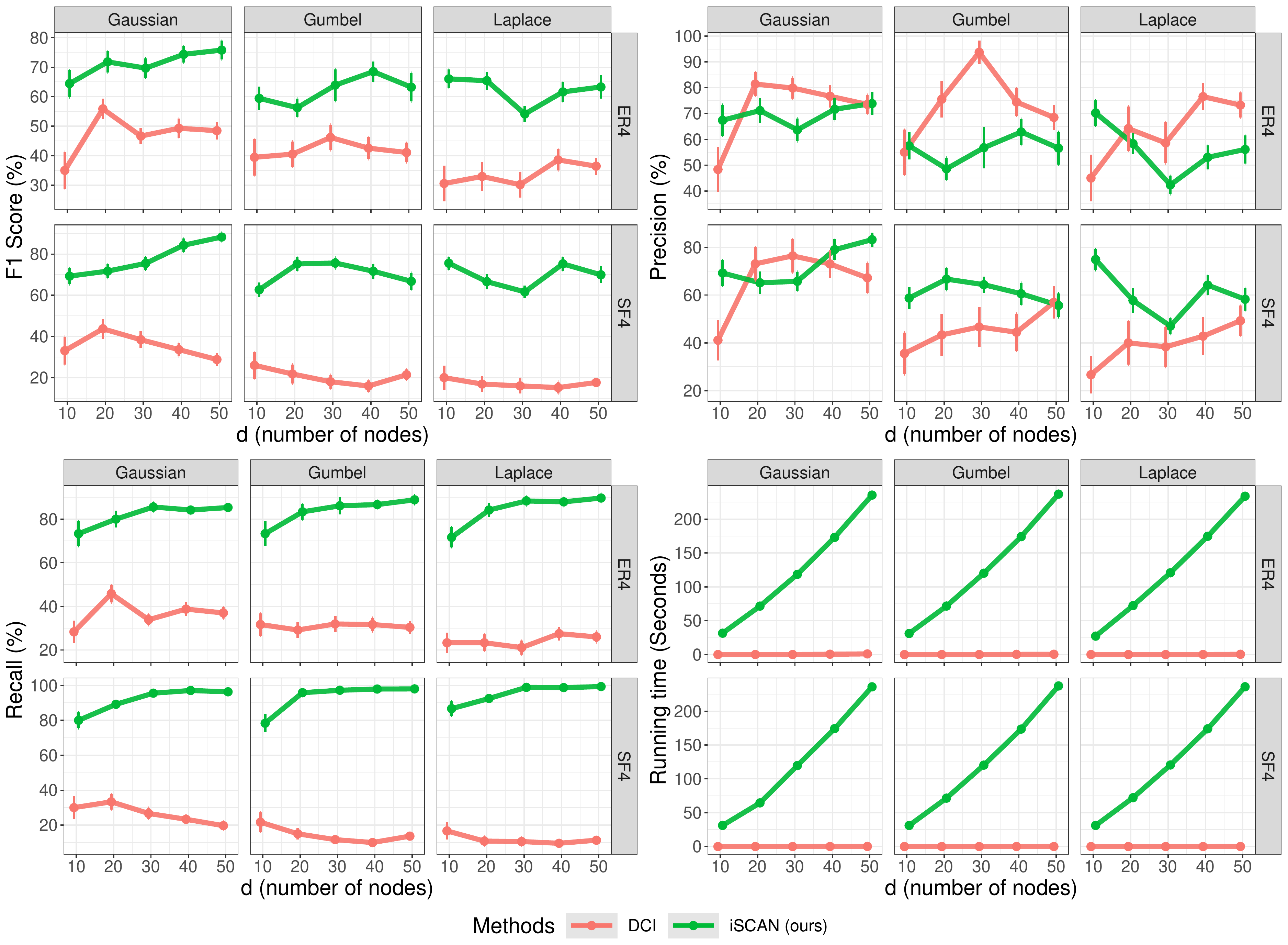}
    \caption{Experiments on detection of shifted nodes in ER4/SF4 graphs using Gaussian processes. Details described in Appendix \ref{app:node_gps}. The error bars represent the standard errors.}
    \label{fig:gp_er4_sf4}
\end{figure}

\begin{figure}[!htbp]
    \centering
    \includegraphics[width=.95\textwidth]{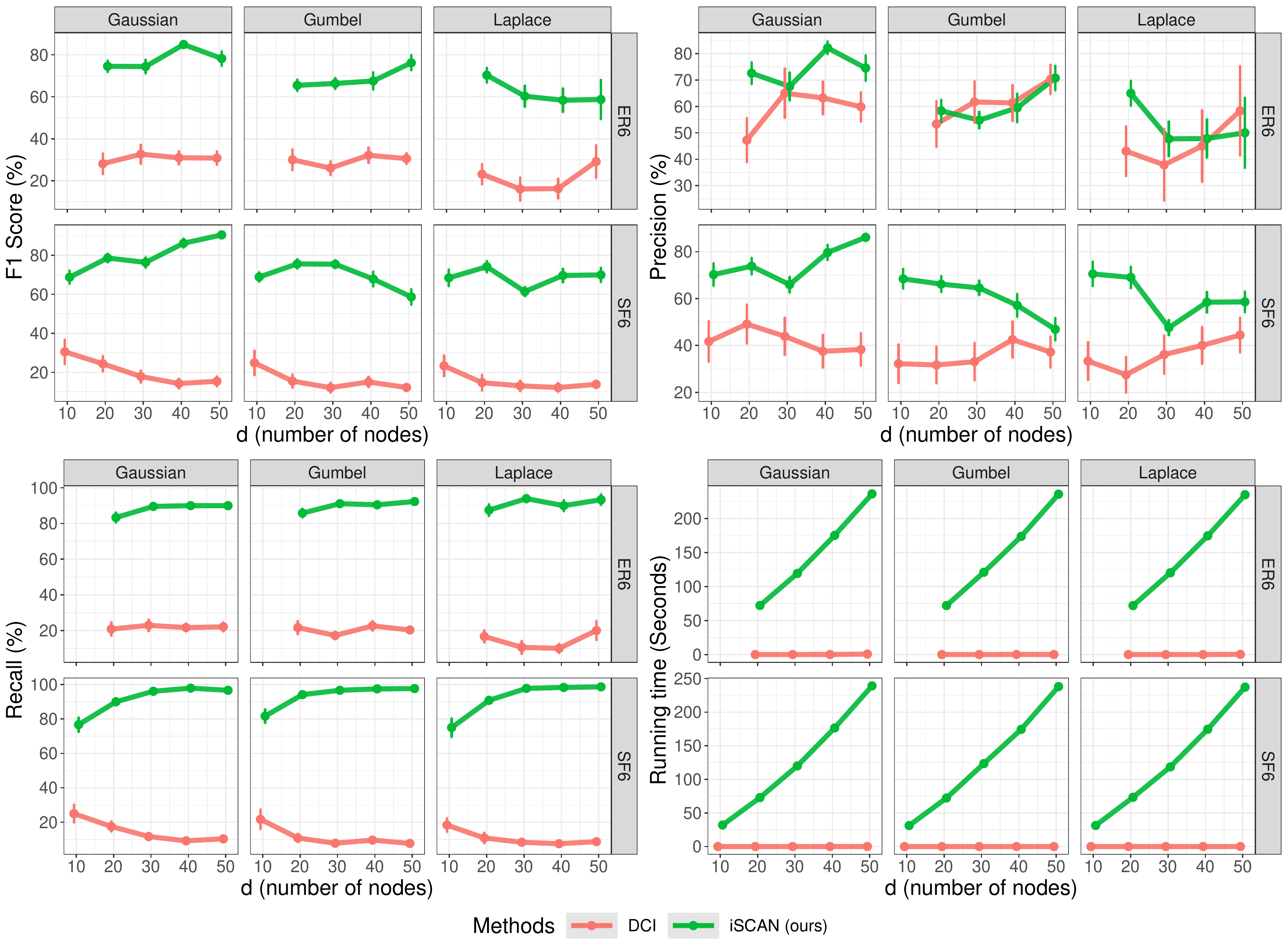}
    \caption{Experiments on detection of shifted nodes in ER6/SF6 graphs using Gaussian processes. Details described in Appendix \ref{app:node_gps}. The error bars represent the standard errors.}
    \label{fig:gp_er6_sf6}
\end{figure}

\begin{figure}[!htbp]
    \centering
    \includegraphics[width=.95\textwidth]{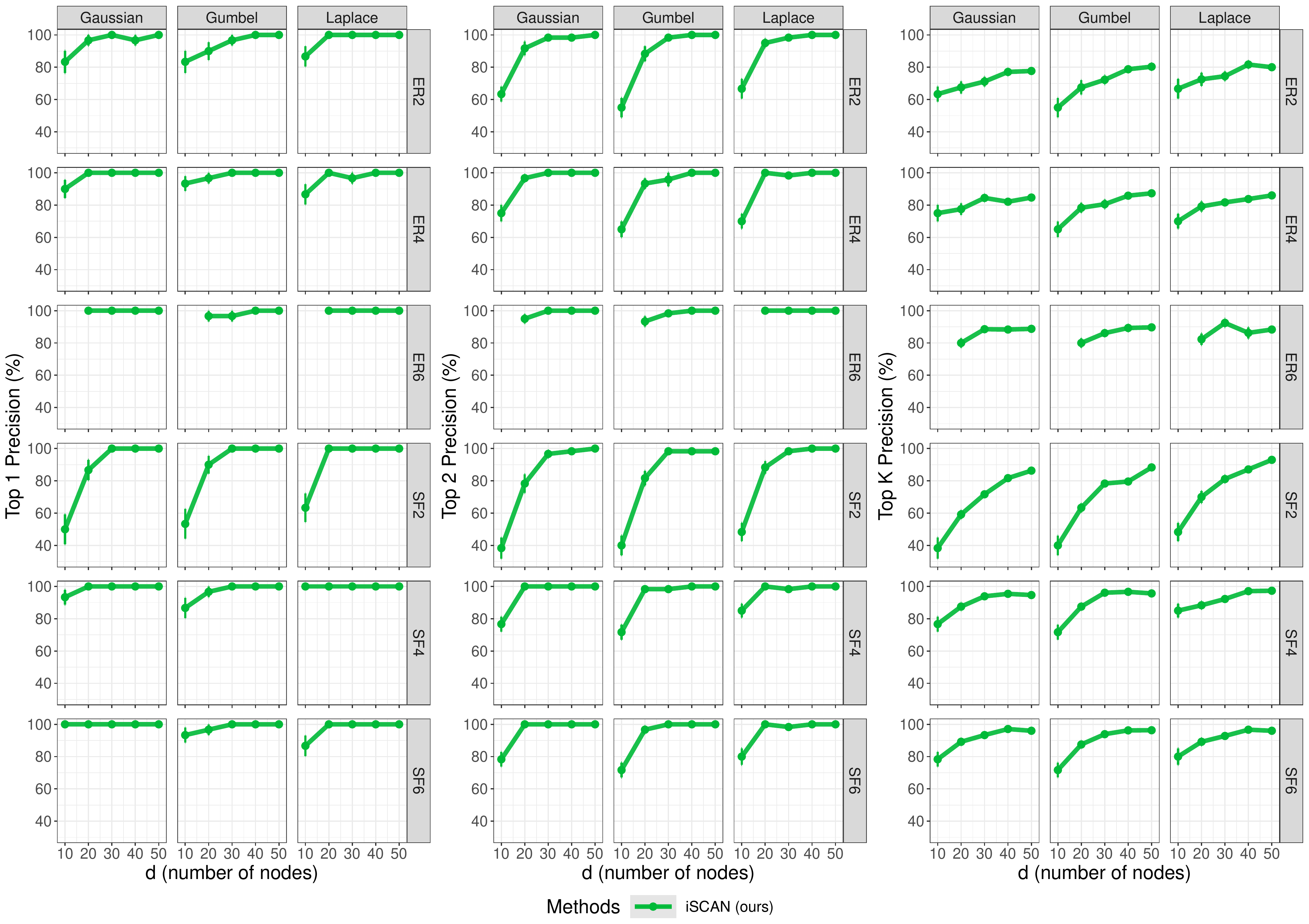}
    \caption{Top 1, 2 and K performance of iSCAN where functionals are sampled from Gaussian processes. Details described in Appendix \ref{app:node_gps}. The error bars represent the standard errors.}
    \label{fig:gp_topk}
\end{figure}

\clearpage
\subsection{Experiments on estimating structural shifts}
\label{app:edge_add_exps}

\textbf{Data generation.}
We first sampled a DAG, $G^1$, of $d$ nodes according to either the ER or SF model for env. $\gE_1$.
For env. $\gE_2$, we initialized its DAG structure from env. $\gE_1$ and produced structural changes by randomly selecting $0.2 \cdot d$ nodes from the non-root nodes. This set of selected nodes $S$, with cardinality $|S| = 0.2d$, correspond to the set of ``shifted nodes''. 
In env. $\gE_2$, for each shifted node $X_j \in S$, we uniformly at random deleted at most three of its incoming edges, and use $D_j$ to denote the parents whose edges to $X_j$ were deleted; thus, the DAG $G^2$ is a subgraph of $G^1$.
Then, in $\gE_1$, each $X_j$ was defined as follows:
\begin{align*}
    X_j = \sum_{i\in\pa^1_j \setminus D_j} \sin(X_i^2) + \sum_{i\in D_j}4 \cos(2X_i^2 - 3X_i)+ N_j
\end{align*}
In $\gE_2$, each $X_j$ was defined as follows:
\begin{align*}
    X_j = \sum_{i\in\pa^2_j} \sin(X_i^2) + N_j
\end{align*}

\textbf{Experiment details.}
For each simulation, we generated $500$ data points per environment, i.e., $m_1=500, m_2=500$ and $m=1000$.
The noise variances were set to 1. 
We conducted  30 simulations for each combination of graph type (ER or SF), noise type (Gaussian, Gumbel, and Laplace), and number of nodes ($d \in \{10, 20, 30, 50\}$). 
The running time was recorded by executing the experiments on an Intel Xeon Gold 6248R Processor with 8 cores. 
For our method, we used $\eta = 0.05$ for eq.\eqref{eq:gradient_estimator} and eq.\eqref{eq:diag_hessian_estimator}, and a threshold $t = 2$ (see Alg. \ref{alg:practical_shift_node}).

In the case of the method introduced by \citet{budhathoki2021did}, we employed Kernel Conditional Independence (KCI) tests \cite{zhang2012kernel} for conducting conditional independence tests. As for CITE, KCD, UT-IGSP, and SCORE, we used their respective default parameter settings provided within their packages. Additionally, for SCORE, we employed it to estimate the DAGs independently for different environments and then compared the recovered DAGs to identify the shifted nodes. Given that Budhathoki's and KCD methods require information about the parents $\pa_j$ for each node $X_j$, we employed the SCORE method to find the parent sets $\pa_j$.

\textbf{Evaluation.} In this experiment, we assessed the performance of our method in two aspects: detecting shifted nodes and recovering the strucutral changes (\textbf{difference DAG}). For the evaluation of shifted node detection, we measured F1 score, recall, and precision. In the evaluation of difference DAG recovery, we compared the estimated difference DAG with the ground truth difference DAG using F1 score. Additionally, we considered the running time of the methods as another evaluation criterion.

Figures \ref{fig:er2_sf2_shifted_nodes}, \ref{fig:er4_sf4_shifted_nodes}, and \ref{fig:er6_sf6_shifted_nodes} illustrate our method's performance in detecting shifted nodes across varying numbers of nodes and sparsity levels of the graphs. Our method consistently outperformed baselines in terms of F1 score, precision, and recall.

Figures \ref{fig:all_time_f1} showcase the performance of our method in recovering difference DAG across different noise distribution, different numbers of nodes and sparsity levels in the graphs. Our method achieves higher F1 score in recovering the difference DAG compared with DCI.

\begin{figure}[!htbp]
    \centering
    \includegraphics[width=.96\textwidth]{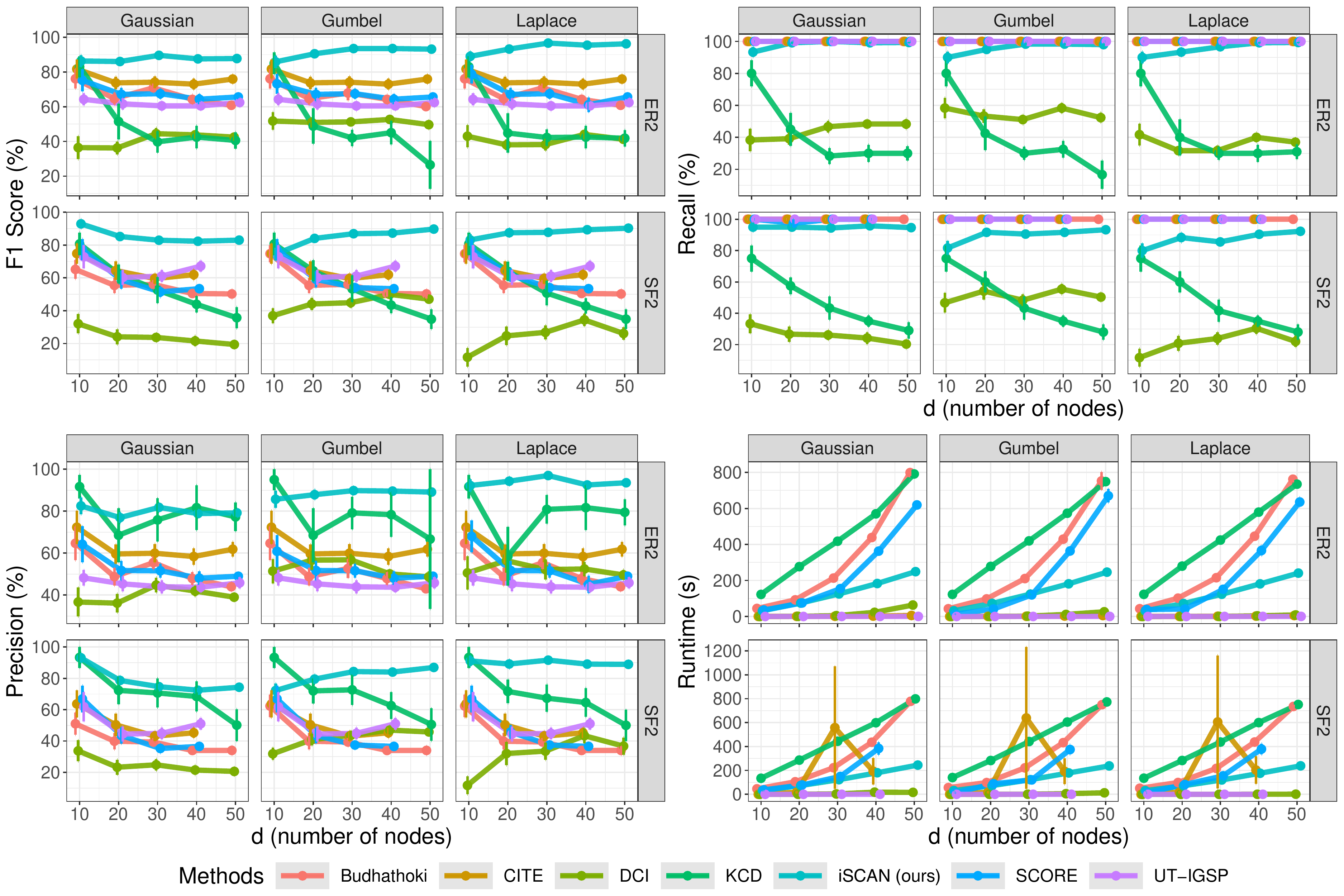}
    \caption{Shifted nodes detection performance in ER2/SF2. See App. \ref{app:edge_add_exps} for experimental details. iSCAN (light blue) consistently outperformed baselines in terms of F1 score, precision, and recall.}
    \label{fig:er2_sf2_shifted_nodes}
\end{figure}
\begin{figure}[!htbp]
    \centering
    \includegraphics[width=.96\textwidth]{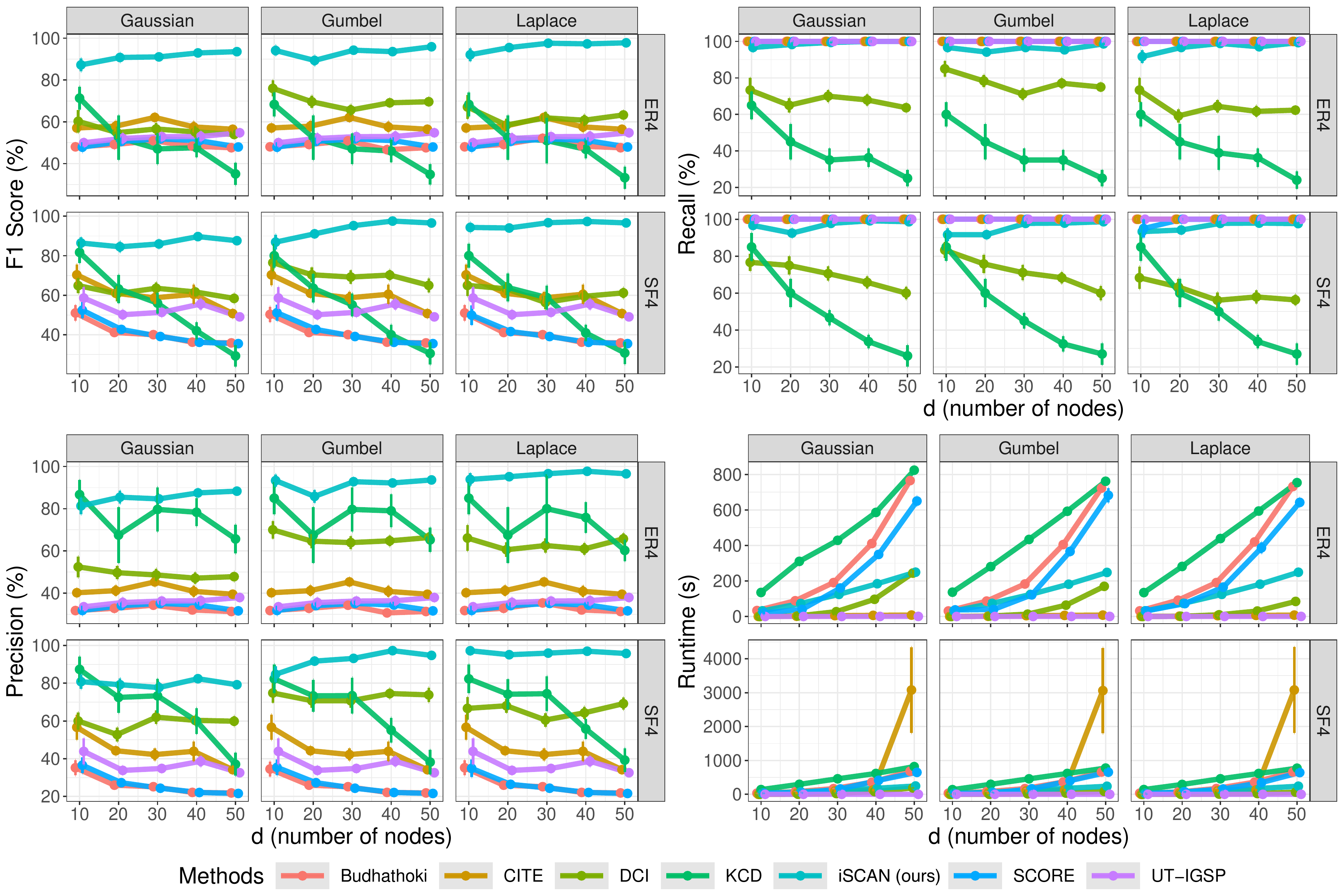}
    \caption{Shifted nodes detection performance in ER4/SF4. See App. \ref{app:edge_add_exps} for experimental details. iSCAN (light blue) consistently outperformed baselines in terms of F1 score, precision, and recall.}
    \label{fig:er4_sf4_shifted_nodes}
\end{figure}
\begin{figure}[!htbp]
    \centering
    \includegraphics[width=.93\textwidth]{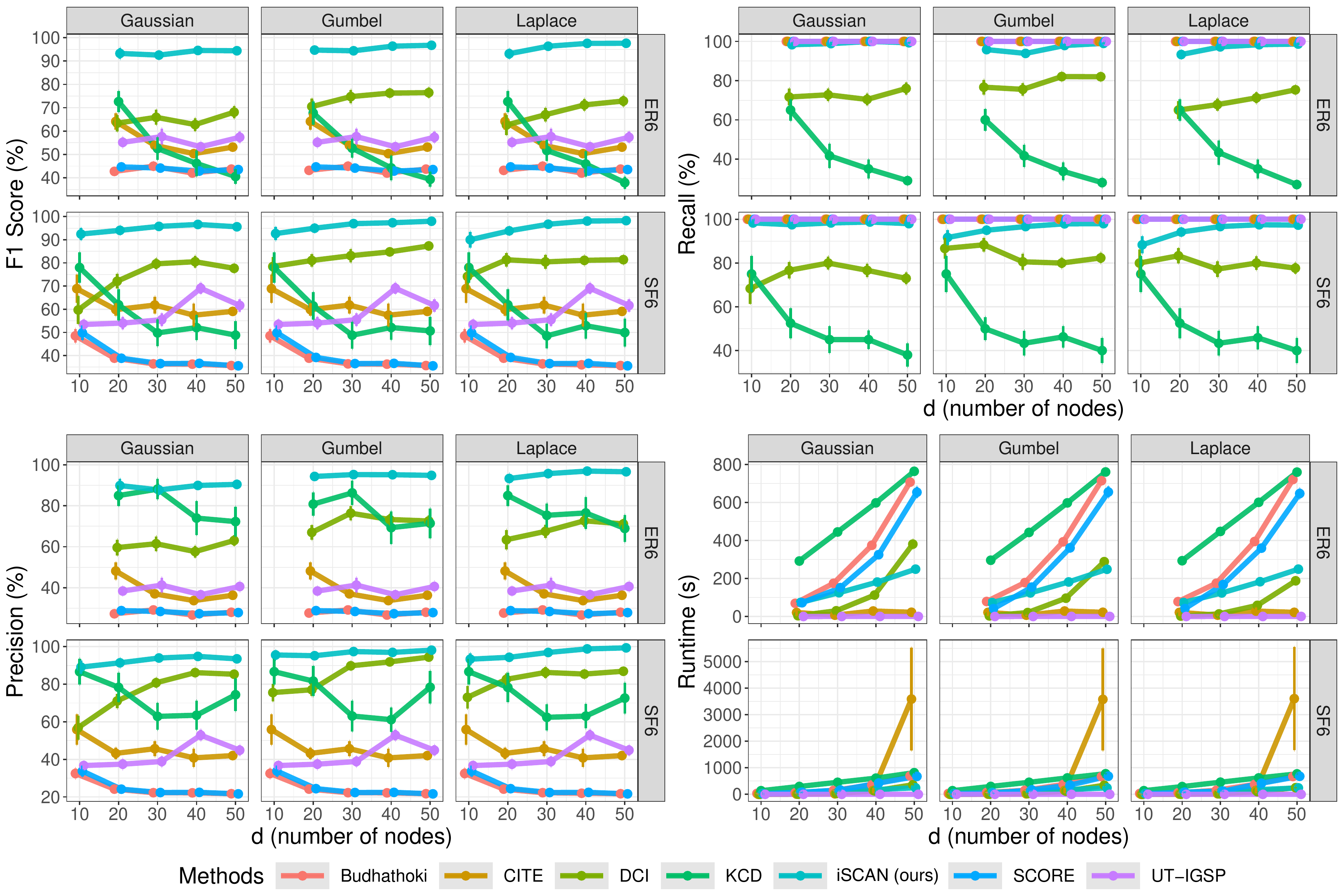}
    \caption{Shifted nodes detection performance in ER6/SF6. See App. \ref{app:edge_add_exps} for experimental details. iSCAN consistently outperformed baselines in terms of F1 score, precision, and recall.}
    \label{fig:er6_sf6_shifted_nodes}
\end{figure}

\begin{figure}[!htbp]
    \centering
    \includegraphics[width=.85\textwidth]{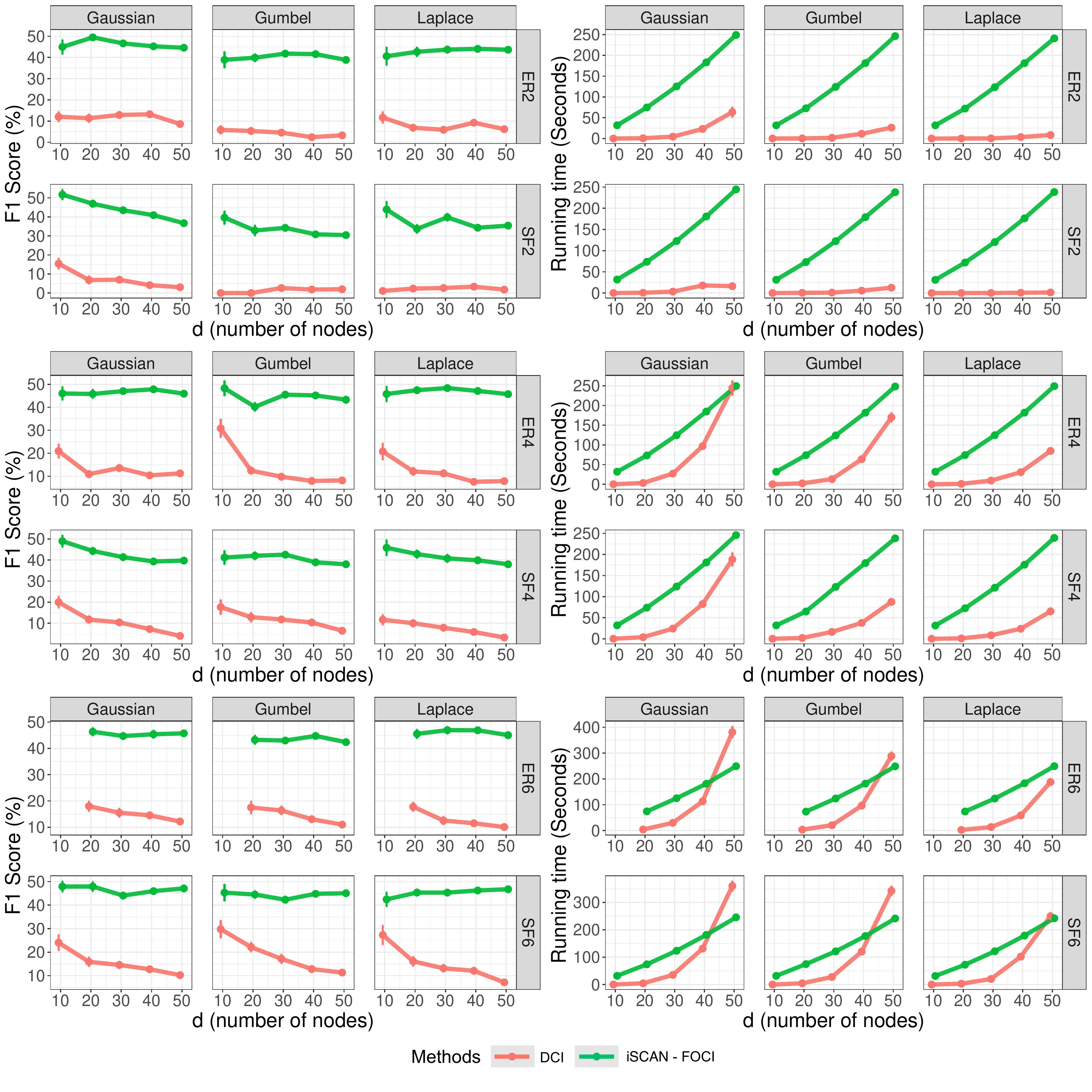}
    \caption{Difference DAG recovery performance in all different graphs. iSCAN-FOCI (green) consistently outperformed DCI (red) in terms of F1 score.}
    \label{fig:all_time_f1}
\end{figure}

\clearpage
\subsection{Performance of Alg. \ref{alg:practical_shift_node} using the elbow method}
\label{app:exp_elbow}	
	In this section, we aim to understand the performance of our method when using the elbow approach discussed in Remark \ref{remark:elbow}, random functions for shifted nodes, and different noise variances per variable within an environment.
	
	\textbf{Data generation process.}
	We first sampled a Directed Acyclic Graph (DAG) according to either the Erdős-Rényi (ER) model or the Scale-Free (SF) model for environment $\gE_1$.
	
	For environment $\gE_2$, we used the same DAG structure as in environment $\gE_1$, ensuring a direct comparison between the two environments. To introduce artificial shifted edges, we randomly selected $0.2 \cdot d$ nodes from the non-root nodes, where $d$ represents the total number of nodes in the DAG. 
	These selected nodes correspond to shifted nodes, denoted as $S$, with $|S| = 0.2d$. 
	For each shifted node $X_j \in S$, we uniformly and randomly deleted $3$ edges originating from its parents for environment $\gE_2$. The parent nodes whose edges to $X_j$ were deleted are denoted as $D_j$.
	
	The functional relationship between shifted node $X_j$ and its parents $D_j$ in environment $\gE_1$ was defined as follows:
	\begin{align*}
	    X_j = \sum_{i\in\pa_j,i\notin D_j} \sin(X_i^2) + \sum_{i\in D_j} c_{ij} \cdot f_{ij}(-2X_i^3 + 3X_i^2 + 4X_i) + N_j,
	\end{align*}
	where $c_{ij} \sim \mathrm{Uniform}([-5, -2] \cup [2,5])$, and $f_{ij}$ is a function from $\{\sinc(\cdot),\cos(\cdot)\}$ chosen uniformly at random.
	For environment $\gE_2$, where the adjacency matrix has undergone deletions, we defined the functional relationship between each node and its parents as follows:
	\begin{align*}
	    X_j = \sum_{i\in\pa_j} \sin(X_i^2) + N_j
	\end{align*}

	\textbf{Experiment detail.} In each simulation, we generated $\{500, 1000\}$ data points, with the variances of the noises set uniformly at random in $[0.25, 0.5]$.
	We tested three types of noise distributions, namely, the Normal, Laplace, and Gumbel distributions. 
	We conducted a 100 simulations for each combination of graph type, noise type, and number of nodes. 
	The running time was recorded by executing the experiments on an Intel Xeon Gold 6248R Processor with 8 cores. 
	For our method, we used the hyperparameter $\eta=0.001$.
	Different from the hard threshold of $t=2$ used in previous experiments, we now used the elbow approach to determine the set of shifted nodes.
	To automatically select the elbow we made use of the Python package \texttt{Kneed}\footnote{We used the latest version found at: \href{https://kneed.readthedocs.io/en/stable/}{https://kneed.readthedocs.io/en/stable/}.}, with hyperparameters \texttt{curve=`convex', direction=`decreasing', online=online, interp\_method=`interp1d'}.
	
	\textbf{Evaluation.} In this experiment, we assessed the performance of our method in two aspects: detecting shifted nodes and recovering the difference DAG. For the evaluation of shifted node detection, we measured F1 score, recall, and precision. In the evaluation of difference DAG recovery, we compared the estimated difference DAG with the ground truth difference DAG using F1 score. 	
	
	In Figures \ref{fig:all_nodes} and \ref{fig:all_edges} we present the performances when using the elbow approach discussed in Remark \ref{remark:elbow}.
	In Figure \ref{fig:all_nodes}, we note that iSCAN performs similarly for number of samples $500$ and $1000$.
	We also show the top-1, top-2, and top-k precision of iSCAN when choosing the first, first 2, and first k variables of $\stats$ (see Algorithm \ref{alg:practical_shift_node}) after sorting in decresing order, respectively.
	We remark that the superbly performance of iSCAN in top-1 or top-2 precision suggests that in situations that is difficult to choose a threshold for Algorithm \ref{alg:practical_shift_node}, the practioner can consider that the first or first two variables of $\stats$ are more likely to be shifted nodes.
	Finally, in Figure \ref{fig:all_edges} we show that iSCAN outperforms DCI in recovering the underlying structural difference.

	\begin{figure}[!htb]
	    \centering
	    \includegraphics[width=1\textwidth]{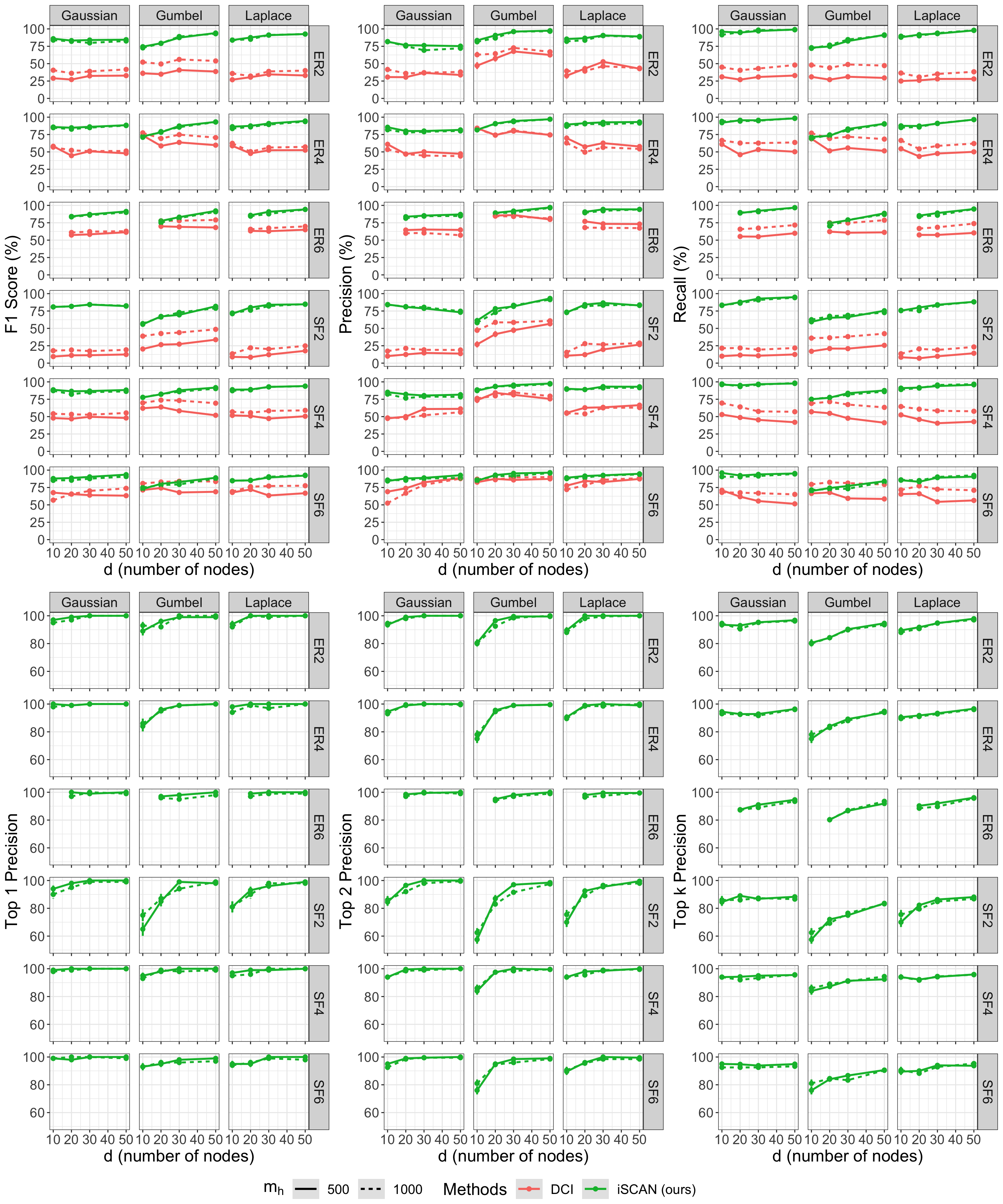}
	    \caption{Shifted nodes detection performance in ERk and SFk for $k \in \{2,4,6\}$. For each point in each subplot, we conducted 100 simulations as described in Section \ref{app:exp_elbow}. The points indicate the average values obtained from these simulations. The error bars depict the standard errors. Our method iSCAN (green) consistently outperformed DCI (red) in terms of F1 score, precision, and recall.}
	    \label{fig:all_nodes}
	\end{figure}

	\begin{figure}[!htb]
	    \centering
	    \includegraphics[width=\textwidth]{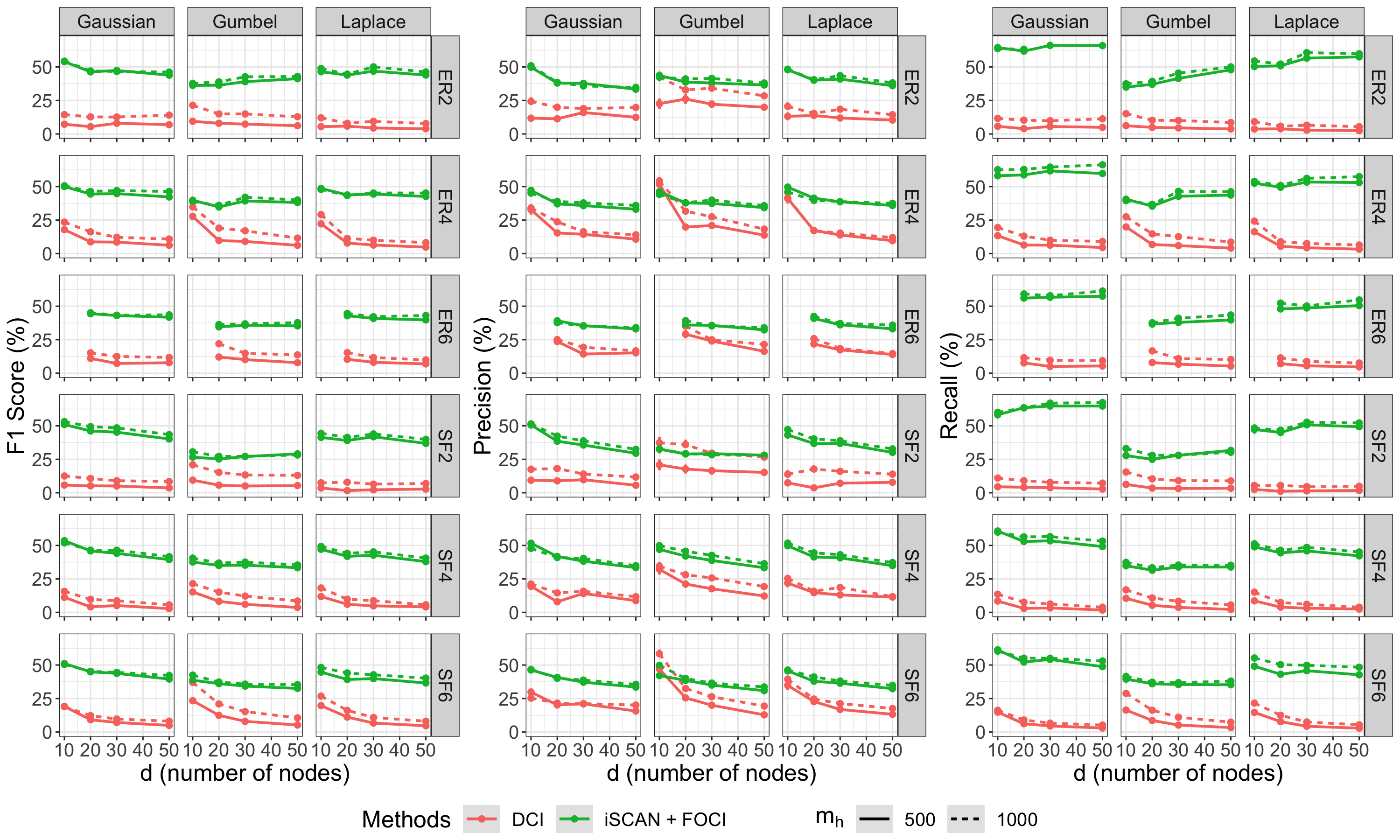}
	    \caption{Difference DAG recovery performance in all different graphs. For each point in each subplot, we conducted 100 simulations as described in Section \ref{app:exp_elbow}. 
	    The points indicate the average values obtained from these simulations. The error bars depict the standard errors. Our method iSCAN with FOCI (green) consistently outperformed DCI (red) in terms of F1 score.}
	    \label{fig:all_edges}
	\end{figure}

\clearpage
\section{Additional Discussion on Shifted Edges}
\label{app:functional_shifts}

In Section \ref{sec:shifted_edges}, we focused on estimating structural changes across the environments (Definition \ref{def:stru_shift_edges}).
However, in some situations it might be of interest to determine whether the \textit{functional relationship} between two variables has changed across the environments.
The latter could have multiple interpretations, in this section, we elaborate on a particular type of functional change via partial derivatives. 

\begin{definition}[functionally shifted edge]
\label{def:func_shifted_edges}
Given environments $\gE=(X,f^h,\sP^h_N)$ for $h\in [H]$, an edge $(X_i\to X_j)$ is called a functionally shifted edge if there exists $h, h' \in [H]$ such that:
\[
    \frac{\partial}{\partial x_i} f_j^h(\pa^h_j) \neq \frac{\partial}{\partial x_i} f_j^{h'}(\pa^{h'}_j).
\]    
\end{definition}

Without further assumptions about the functional form of $f^h_j$, certain ill-posed situations may arise under Definition \ref{def:func_shifted_edges}. 
Let us consider the following example.
\begin{example}
Let $\gE_A$ and $\gE_B$ be two environments, each consisting of three nodes.
Let the structural equations for node $X_3$ be: $X_3^A = \lexp{X_1^A + X_2^A} + N_3$, and $X_3^B = \lexp{2 \cdot X_1^B + X_2^B} + N_3$.
In this scenario, one could consider that the causal relationship $X_2 \rightarrow X_3$ has not changed.
However, we note that $\frac{\partial f_3^A}{\partial x_2^A} \neq \frac{\partial f_3^B}{\partial x_2^B}$, thus, testing for changes in the partial derivative would yield a false discovery for the non-shifted edge $X_2 \rightarrow X_3$. 
\end{example}

Ill-posed situations such as the above example can be avoided by additional assumptions on the functional mechanisms.
We next discuss a sufficient condition where the partial derivative test for functional changes is well-defined.

\begin{assumption}[Additive Models]
\label{assump:additive}
Let $S$ be the set of shifted nodes across all the $H$ environments.
Then, for all $j\in S, h\in[H]$:
$$
f_j^h(\pa^h_j)=a_j^h+\sum_{k\in\pa^h_j}f_{jk}^h(X_k),
$$
where $a_j^h$ is a constant, $f_{jk}^h$ is a nonlinear function, where $f^h_{jk}(\cdot)$ lies in some space of function class $\mathcal{F}$.
\end{assumption}

\begin{remark}
Assumption \ref{assump:additive} amounts to modelling each variable as a generalized linear model \citep{hastie2017generalized}. 
It is widely used in nonparametrics and causal discovery  \citep{buhlmann2014cam,li2023nonlinear,voorman2014graph}. 
Moreover, it not only provides a practical framework but also makes the definition of shifted edges (as per Definition \ref{def:func_shifted_edges}) well-defined and reasonable.    
\end{remark}

\begin{remark}
	Note that Assumption \ref{assump:additive} makes assumptions only on the set of shifted nodes.
	This is because the set of invariant nodes can be identified regardless of the their type of structural equation, and it is also clear these nodes cannot have any type of shift.
\end{remark}

Now consider a function class $\gF$, which incorporates the use of basis functions to model the additive components $f_{jk}^h$. 
Specifically, we express $f_{jk}^h(x_k)=\Psi_{jk}^h(x_k)\beta_{jk}^h$, where feature mapping $\Psi_{jk}^h$ is a $1\times r$ matrix whose columns represent the basis functions and $\beta_{jk}^h$ is an $r$-dimensional vector containing the corresponding coefficients. 
Moreover we assume that the functions $f_{jk}^1,\dots,f_{jk}^H$ share a same feature mapping $\Psi_{jk}^1(\cdot)=,\dots,=\Psi_{jk}^H(\cdot)$ but can have different coefficients $\beta_{jk}^h$ across the $H$ environments.
The latter has been assumed in prior work, e.g., \citep{li2021kernel}. 
The approach of using a basis function approximation is widely adopted in nonparametric analysis, and it has been successfully employed in various domains such as graph-based methods \citep{voorman2014graph}, and the popular CAM framework \citep{buhlmann2014cam}. 
Then, under Assumption \ref{assump:additive} and Definition \ref{def:func_shifted_edges}, we present the following proposition:
\begin{proposition}
Under Assumption \ref{assump:additive}, an edge $(X_i\to X_j)$ is a functionally shifted edge, as in Definition \ref{def:func_shifted_edges}, if and only if the basis coefficients are different.
That is,
\begin{align*}
\frac{\partial f_j^h}{\partial x_i} \neq \frac{\partial f_j^{h'}}{\partial x_i} \iff \beta_{ji}^{h} \neq \beta_{ji}^{h'}.
\end{align*}
\end{proposition}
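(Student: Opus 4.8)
The plan is to reduce the statement to a linear-algebraic fact about the shared basis functions by exploiting the additive structure. Under Assumption \ref{assump:additive}, writing $f_{jk}^h(x_k) = \Psi_{jk}(x_k)\beta_{jk}^h$ with the common feature map $\Psi_{jk} = \Psi_{jk}^1 = \cdots = \Psi_{jk}^H$, each mechanism takes the form $f_j^h(\pa_j^h) = a_j^h + \sum_{k} \Psi_{jk}(x_k)\beta_{jk}^h$, where I adopt the convention $\beta_{jk}^h = \mathbf{0}$ whenever $X_k \notin \pa_j^h$ (a missing additive component is the zero function). With this convention both sides of Definition \ref{def:func_shifted_edges} are simultaneously well defined for every $h$, and the constant $a_j^h$ together with every summand $\Psi_{jk}(x_k)\beta_{jk}^h$ with $k\neq i$ is independent of $x_i$. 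Differentiating, only the $k=i$ term survives, so
\[
\frac{\partial f_j^h}{\partial x_i} = \Psi_{ji}'(x_i)\,\beta_{ji}^h ,
\]
where $\Psi_{ji}'$ is the (row-vector-valued) derivative of the shared basis map. This identity turns the claim into a comparison of the two scalar products $\Psi_{ji}'\beta_{ji}^h$ and $\Psi_{ji}'\beta_{ji}^{h'}$.

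The forward direction is immediate by contraposition: if $\beta_{ji}^h = \beta_{ji}^{h'}$, then the displayed identity gives $\frac{\partial f_j^h}{\partial x_i} \equiv \frac{\partial f_j^{h'}}{\partial x_i}$ pointwise, so the edge is not functionally shifted.

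For the reverse direction I would set $\gamma \defeq \beta_{ji}^h - \beta_{ji}^{h'}$ and argue that $\gamma \neq \mathbf{0}$ forces $\Psi_{ji}'(x_i)\gamma \not\equiv 0$. Suppose, for contradiction, that $\Psi_{ji}'(x_i)\gamma = 0$ for all $x_i$. Integrating in $x_i$ yields $\Psi_{ji}(x_i)\gamma = c$ for some constant $c$, i.e. the nontrivial combination $\sum_m \gamma_m \psi_{ji,m}(x_i) - c$ of the basis functions $\{\psi_{ji,m}\}_m$ and the constant $1$ vanishes identically. Invoking the linear independence of the basis functions (the same property used in Lemma \ref{lemma:beta_equal} and underlying the class $\gF$, now taken to include the constant), this forces $\gamma = \mathbf{0}$ and $c = 0$, contradicting $\gamma \neq \mathbf{0}$. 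Hence $\Psi_{ji}'(x_i)\gamma$ is a nonzero function, which is precisely $\frac{\partial f_j^h}{\partial x_i} - \frac{\partial f_j^{h'}}{\partial x_i} \not\equiv 0$, so the edge is functionally shifted.

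The main obstacle is exactly this reverse implication, namely passing from \emph{coefficients differ} to \emph{derivative functions differ}: a priori $\Psi_{ji}'\gamma$ could vanish for some $\gamma \neq \mathbf{0}$ if the derivatives of the basis functions were linearly dependent. The integration trick above converts the question about the derivatives $\{\psi_{ji,m}'\}$ back into linear independence of the antiderivatives $\{\psi_{ji,m}\}$ modulo constants, which is the natural regularity condition to place on $\gF$ (it holds, e.g., for polynomial or Fourier bases that exclude the constant term). Making this independence explicit---equivalently, assuming no nontrivial additive component $\Psi_{ji}\gamma$ is constant---is the one place where a hypothesis slightly beyond Assumption \ref{assump:additive} is genuinely needed, and I would fold it into the definition of the function class $\gF$.
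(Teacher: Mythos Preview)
Your proof is correct and takes essentially the same route as the paper: both exploit the additive form to write $\partial f_j^h/\partial x_i = \Psi_{ji}'(x_i)\,\beta_{ji}^h$, subtract across environments, and reduce the equivalence to the triviality of the kernel of $\Psi_{ji}'$. Your integration argument for that last step is in fact more careful than the paper's one-line appeal to ``linear independence of the basis functions $\Psi_{ji}$'', since it makes explicit the needed side condition that no nontrivial combination of the $\psi_{ji,m}$ is constant.
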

\begin{proof}
	We have,
    \begin{align*}
        \frac{\partial f_j^h}{\partial x_i}=\frac{\text{d}f_{ji}^h(x_i)}{\text{d}x_i}=\frac{\text{d} (\Psi_{ji}(x_i)\beta_{ji}^h)}{\text{d}x_i} = \frac{\text{d} \Psi_{ji}(x_i)}{\text{d} x_i} \beta_{ji}^h.
    \end{align*}
    Then,
    \begin{align*}
        \frac{\partial f_j^h}{\partial x_i} - \frac{\partial f_j^{h'}}{\partial x_i} = 
        \frac{\text{d} \Psi_{ji}(x_i)}{\text{d} x_i}(\beta_{ji}^h-\beta_{ji}^{h'})\neq \vzero \iff \beta_{ji}^h-\beta_{ji}^{h'}\neq \mathbf{0}
    \end{align*}
    The last $\iff$ relation is due to the linear independence of the basis functions $\Psi_{ji}$, then the null space of $\nicefrac{\text{d} \Psi_{ji}}{\text{d} x_i}$ can only be the zero vector $\mathbf{0}$. 
\end{proof}

Note that the output of Algorithm \ref{alg:shift_node} also estimates a topological order $\hat\pi$. 
However, the \textit{exact parents} of a node $X_j$ across the  environments are not known, and they are possibly different.
To estimate the coefficients without knowledge of the exact parents, we can consider the set $\hPre(X_j)$, which consists of nodes located before $X_j$ in the topological order $\hat\pi$. By regressing $X_j$ on $\hPre(X_j)$ for each environment, we can obtain coefficient estimations, which are the same coefficients obtained by regressing $X_j$ on its exact parents, in large samples.    

\begin{theorem}
\label{thm:coef_equal}
In large samples, let $\{{\tilde\beta}_{jk}^h\}_{k\in \Pre(X_j)}$ be the coefficients obtained by regressing $X_j$ on the feature mapping of $\Pre(X_j)$, and let $\{{\beta}_{jk}^h\}_{k\in \pa_j}$ be the coefficients obtained by regressing $X_j$ on the feature mapping of $\pa_j$. Then,
$\Tilde{\beta}_{jk}^h=\beta_{jk}^h$ if $k\in \pa^h_j$, and $\Tilde{\beta}_{jk}^h = 0$ if $k\in \Pre(X_j)\setminus \pa^h_j$.
\end{theorem}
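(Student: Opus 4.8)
The plan is to prove Theorem \ref{thm:coef_equal} by showing that, in the large-sample limit, regressing $X_j$ on the feature mapping of the predecessor set $\Pre(X_j)$ recovers the true structural function $f_j^h$, and then invoking a linear-algebra argument to pin down the coefficients. First I would observe that, because $\hat\pi$ is a valid topological order, the set $\Pre(X_j)$ contains all the ancestors of $X_j$, and in particular contains $\pa_j^h$. This is the key structural fact that lets the d-separation argument go through.

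The core step is to compute the population regression target $\E_{p^h}[X_j \mid \Pre(X_j)]$. Writing $X_j = f_j^h(\pa_j^h) + N_j$ and using linearity of conditional expectation, this splits into $\E_{p^h}[f_j^h(\pa_j^h) \mid \Pre(X_j)] + \E_{p^h}[N_j \mid \Pre(X_j)]$. Since $\pa_j^h \subseteq \Pre(X_j)$, the first term is measurable with respect to the conditioning $\sigma$-algebra and equals $f_j^h(\pa_j^h)$ exactly. For the second term, the noise $N_j$ is independent of all non-descendants of $X_j$; because $\Pre(X_j)$ consists of variables appearing earlier in a valid topological order, none of them is a descendant of $X_j$, so by the d-separation / Markov criterion $N_j \indep \Pre(X_j)$, whence $\E_{p^h}[N_j \mid \Pre(X_j)] = \E[N_j] = 0$ using the zero-mean assumption in Assumption \ref{assup:noise}. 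Therefore the regression target is precisely $f_j^h(\pa_j^h)$.

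Next I would express both sides in the shared basis. The least-squares regression of $X_j$ on the feature mapping of $\Pre(X_j)$ produces, in large samples, the coefficients $\tilde\beta_{jk}^h$ satisfying $\sum_{k\in\Pre(X_j)}\Psi_{jk}\tilde\beta_{jk}^h = \E_{p^h}[X_j\mid\Pre(X_j)]$, and by the Additive Models assumption (Assumption \ref{assump:additive}) the true function admits the representation $f_j^h(\pa_j^h) = \sum_{k\in\pa_j^h}\Psi_{jk}\beta_{jk}^h$. Equating these two, I arrive exactly at the hypothesis of Lemma \ref{lemma:regress_invariant},
\[
\sum_{k\in\Pre(X_j)}\Psi_{jk}\tilde\beta_{jk}^h = \sum_{k\in\pa_j^h}\Psi_{jk}\beta_{jk}^h,
\]
and the lemma (via Lemma \ref{lemma:beta_equal} and the linear independence of the basis functions over $\Pre(X_j)$) immediately yields $\tilde\beta_{jk}^h = \beta_{jk}^h$ for $k\in\pa_j^h$ and $\tilde\beta_{jk}^h = 0$ for $k\in\Pre(X_j)\setminus\pa_j^h$, which is the claim.

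The main obstacle, and the place I would be most careful, is the justification that the finite-sample regression coefficients converge to the population coefficients $\tilde\beta_{jk}^h$ that solve the population normal equations. This requires that the Gram matrix of the basis functions over $\Pre(X_j)$ be nonsingular in the limit, which is guaranteed by the linear-independence assumption on the basis functions that already underlies Lemma \ref{lemma:beta_equal}. The ``in large samples'' qualifier in the statement is precisely what lets me replace empirical moments by population moments, so the probabilistic content is standard consistency of least squares under a correctly specified (here, exactly specified up to the zero-coefficient padding) model; the genuinely new content is the deterministic identification argument of the previous paragraph. I would make sure the two conditional-expectation facts (exact recovery of the $f_j^h$ term and vanishing of the noise term) are stated cleanly, since everything else follows mechanically from the lemmas.
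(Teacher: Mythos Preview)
Your proposal is correct and follows essentially the same approach as the paper's proof: compute $\E_{p^h}[X_j\mid\Pre(X_j)]=f_j^h(\pa_j^h)$ via the measurability of $f_j^h(\pa_j^h)$ and the d-separation/zero-mean argument for $N_j$, then equate the two basis expansions and invoke Lemma~\ref{lemma:regress_invariant}. Your additional remarks on finite-sample consistency of least squares are more explicit than what the paper writes, but the paper simply takes the ``in large samples'' qualifier as license to work directly at the population level, exactly as you do.
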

\begin{proof}
	Proof can be found in Appendix \ref{app:proof_functional_shifts}.
\end{proof}

Motivated by Theorem \ref{thm:coef_equal}, and given an estimated $\{\tilde\beta_{jk}^h\}_{k\in \hPre(X_j)}$, one could conduct a hypothesis testing as follows:
\begin{align}
\label{eq:H0}
    H_0:\tilde\beta_{jk}^1 = \dots = \tilde\beta_{jk}^H
\end{align}
If the null hypothesis $H_0$ is rejected, it indicates that there is evidence of a functionally shifted edge between nodes $X_k$ and $X_j$ across the environments. 
In this paper we leave the hypothesis test unspecified to allow for any procedure that can test eq.\eqref{eq:H0}.

\begin{algorithm}[H]
\caption{Functionally shifted edges detection}
\label{alg:functional_shifted_edges}
\begin{algorithmic}[1]
\Require{Sample data $\mX^1,\dots, \mX^H$, shifted nodes set $\hS$, topological order $\hpi$, significance level $\alpha$.}
\Ensure{Set of functionally shifted edges $\hE$}
\For{$j\in \hS$}
    \State Estimate $\tilde\beta_{jk}^h$ for all $k\in\hPre(X_j)$ and $h\in [H]$
    \For{$k\in\hPre(X_j)$}
    \State Conduct hypothesis testing $H_0$ (equation \ref{eq:H0}) under significant level $\alpha$.
    \State If $H_0$ is rejected, add edge $(X_k\to X_j)$ to $\hE$.
\EndFor
\EndFor
\end{algorithmic}
\end{algorithm}

\end{document}